  \theoremstyle{plain}
  \newtheorem{prop}{\protect\propositionname}
  \theoremstyle{plain}
  \theoremstyle{definition}
  \theoremstyle{plain}
  \newtheorem{cor}{\protect\corollaryname}
  \theoremstyle{example}
  \newtheorem{example}{\protect\examplename}
 \theoremstyle{remark}
\theoremstyle{plain}
\newtheorem{fact}{\protect\factname}
\theoremstyle{plain}
\newtheorem{assumption}{\protect\assumptionname}
\newtheoremstyle{PropositionNum}
        {\topsep}{\topsep}              %%% space between body and thm
        {\itshape}                      %%% Thm body font
        {}                              %%% Indent amount (empty = no indent)
        {\bfseries}                     %%% Thm head font
        {.}                             %%% Punctuation after thm head
        { }                             %%% Space after thm head
        {\thmname{#1}\thmnote{ \bfseries #3}}%%% Thm head spec
\theoremstyle{PropositionNum}
\newtheorem{propn}{Proposition}
\newtheoremstyle{LemmaNum}
        {\topsep}{\topsep}              %%% space between body and thm
        {\itshape}                      %%% Thm body font
        {}                              %%% Indent amount (empty = no indent)
        {\bfseries}                     %%% Thm head font
        {.}                             %%% Punctuation after thm head
        { }                             %%% Space after thm head
        {\thmname{#1}\thmnote{ \bfseries #3}}%%% Thm head spec
\theoremstyle{LemmaNum}
\newtheoremstyle{DefinitionNum}
        {\topsep}{\topsep}              %%% space between body and thm
        {\itshape}                      %%% Thm body font
        {}                              %%% Indent amount (empty = no indent)
        {\bfseries}                     %%% Thm head font
        {.}                             %%% Punctuation after thm head
        { }                             %%% Space after thm head
        {\thmname{#1}\thmnote{ \bfseries #3}}%%% Thm head spec
\theoremstyle{DefinitionNum}
\newcommand{\Y}{\mathcal{Y}}
\newcommand{\D}{\mathcal{D}}
\newcommand{\E}{\mathbb{E}}
\newcommand{\Prob}{\mathbb{P}}
\newcommand{\hist}{\mathcal{F}_{t}}
\newcommand{\A}{\mathcal{A}}
\newcommand{\DKL}{D_{\rm KL}}
  \providecommand{\assumptionname}{Assumption}
  \providecommand{\definitionname}{Definition}
  \providecommand{\lemmaname}{Lemma}
  \providecommand{\propositionname}{Proposition}
\providecommand{\corollaryname}{Corollary}
  \providecommand{\examplename}{Example}
\providecommand{\factname}{Fact}
\tikzset{
>=stealth',
  punktchain/.style={
    rectangle,
    rounded corners,
    % fill=black!10,
    draw=black, very thick,
    text width=10em,
    minimum height=3em,
    text centered,
    on chain},
  line/.style={draw, thick, <-},
  element/.style={
    tape,
    top color=white,
    bottom color=blue!50!black!60!,
    minimum width=8em,
    draw=blue!40!black!90, very thick,
    text width=10em,
    minimum height=3.5em,
    text centered,
    on chain},
  every join/.style={->, thick,shorten >=1pt},
  decoration={brace},
  tuborg/.style={decorate},
  tubnode/.style={midway, right=2pt},
}
\begin{document}

\title{Learning to Optimize Via Information-Directed Sampling}

\author[1]{Daniel Russo}
\author[2]{Benjamin Van Roy}
\affil[1]{Northwestern University, daniel.russo@kellogg.northwestern.edu}
\affil[2]{Stanford University, bvr@stanford.edu}

\maketitle
\begin{abstract}
We propose {\it information-directed sampling} -- a new approach to online optimization problems in which a decision-maker must balance between exploration and exploitation while learning from partial feedback.  Each action is sampled in a manner that minimizes the ratio between squared expected single-period regret and a measure of information gain: the mutual information between the optimal action and the next observation.

We establish an expected regret bound for information-directed sampling that applies across a very general class of models and scales with the entropy of the optimal action distribution. 
We illustrate through simple analytic examples how information-directed sampling accounts for kinds of information that alternative approaches do not adequately address and that this can lead to dramatic performance gains.  For the widely studied Bernoulli, Gaussian, and linear bandit problems, we demonstrate state-of-the-art simulation performance. 
\end{abstract}
%\smallskip
%\noindent \textbf{Subject classifications:} decision analysis: sequential

\section{Introduction}

In the classical multi-armed bandit problem, a decision-maker repeatedly chooses from among a finite set of actions.  Each action generates a random reward drawn independently from a probability distribution associated with the action.  The decision-maker is uncertain about these reward distributions, but learns about them as rewards are observed.  Strong performance requires striking a balance between  {\it exploring} poorly understood actions and {\it exploiting} previously acquired knowledge to attain high rewards.  Because selecting one action generates no information pertinent to other actions, effective algorithms must sample every action many times.

There has been significant interest in addressing problems with more complex {\it information structures}, in which sampling one action can inform the decision-maker's assessment of other actions.  Effective algorithms must take advantage of the information structure to learn more efficiently.  The most popular approaches to such problems
extend {\it upper-confidence-bound} (UCB) algorithms and {\it Thompson sampling}, which were originally devised
for the classical multi-armed bandit problem.  In some cases, such as the linear bandit problem, strong performance guarantees have been established 
for these approaches.  For some problem classes, compelling empirical results have also been presented for UCB algorithms and Thompson sampling, 
as well as the knowledge gradient algorithm.  However, as we will demonstrate through simple analytic examples, these approaches can perform very poorly when 
faced with more complex information structures.  Shortcomings stem from the fact that they do not adequately account 
for particular kinds of information.

In this paper, we propose a new approach -- {\it information-directed sampling} (IDS) -- that is designed to address this.
IDS quantifies the amount learned by selecting an action through an information-theoretic measure: the mutual information between the true optimal action and the next observation.  Each action is sampled in a manner that minimizes the ratio between squared expected single-period regret and this measure of information gain.  Through this information measure, IDS accounts for kinds of information that alternatives fail to address.

As we will demonstrate through simple analytic examples, IDS can dramatically outperform UCB algorithms, Thompson sampling, and the knowledge-gradient algorithm.  Further, by leveraging the tools of our recent information theoretic analysis of Thompson sampling \cite{russo2016info}, we establish an expected regret bound for IDS that applies across a very general class of models and scales with the entropy of the optimal action distribution.  We also specialize this bound to several classes of online optimization problems, including problems with full feedback, linear optimization problems with bandit feedback, and combinatorial problems with semi-bandit feedback, in each case establishing that bounds are order optimal up to a poly-logarithmic factor.

We benchmark the performance of IDS through simulations of the widely studied Bernoulli, Gaussian, and linear bandit problems, for which UCB algorithms and Thompson sampling are known to be very effective.  We find that even in these settings, IDS outperforms UCB algorithms and Thompson sampling. This is particularly surprising for Bernoulli bandit problems, where UCB algorithms and Thompson sampling are known to be asymptotically optimal in the sense proposed by \citet{lai1985asymptotically}.

IDS solves a single-period optimization problem as a proxy to an intractable multi-period problem.  Solution of this single-period problem can itself be computationally demanding, especially in cases where the number of actions is enormous or mutual information is difficult to evaluate.  We develop numerical methods for particular classes of online optimization problems.  In some cases, our numerical methods do not compute exact or near-exact solutions but generate efficient approximations that are intended to capture key benefits 
of IDS.  There is much more work to be done to design efficient algorithms for various problem classes and we hope that our analysis and initial collection of numerical methods will provide a foundation for further developments.

It is worth noting that the problem formulation we work with, which is presented in Section \ref{sec:formulation}, is very general, encompassing not only problems with bandit feedback, but also a broad array of information structures for which observations can offer information about rewards of arbitrary subsets of actions or factors that influence these rewards.  Because IDS and our analysis accommodate this level of generality, they can be specialized to problems that in the past have been studied individually, such as those involving pricing and assortment optimization (see, e.g., \cite{rusmevichientong2010dynamic, broder2012dynamic, saure2013optimal}), though in each case, developing a computationally efficient version of IDS may require innovation.

\section{Literature review}

UCB algorithms are the primary approach considered in the segment of the stochastic multi-armed bandit literature that treats problems with dependent arms. UCB algorithms have been applied to problems where the mapping from action to expected reward is a linear \cite{dani2008stochastic, rusmevichientong2010linearly, abbasi2011improved}, generalized linear \cite{filippi2010parametric}, or sparse linear \cite{abbasi2012online} model; is sampled from a Gaussian process \cite{srinivas2012information} or has small norm in a reproducing kernel Hilbert space \cite{srinivas2012information, valko2013finite}; or is a smooth (e.g. Lipschitz continuous) model \cite{kleinberg2008multi, bubeck2011xarmed, valko2013stochastic}. Recently, an algorithm known as Thompson sampling has received a great deal of interest. \citet{agrawal2013linear} provided the first analysis for linear contextual bandit problems. \citet{russo2013nipsEluder,russo2014learning} consider a more general class of models, and show that standard analysis of upper confidence bound algorithms leads to bounds on the expected regret of Thompson sampling. Very recent work of \citet{gopalan2014thompson} provides asymptotic frequentist bounds on the growth rate of regret for problems with dependent arms. Both UCB algorithms and Thompson sampling have been applied to other types of problems, like reinforcement learning \cite{jaksch2010near, osband2013more} and Monte Carlo tree search \cite{kocsis2006bandit, bai2013bayesian}.

In one of the first papers on multi-armed bandit problems with dependent arms, \citet{agrawal1989asymptotically} consider a general model in which the reward distribution associated with each action depends on a common unknown parameter. When the parameter space is finite, they provide a lower bound on the asymptotic growth rate of the regret of any admissible policy as the time horizon tends to infinity and show that this bound is attainable. These results were later extended by \citet{agrawal1989markov} and \citet{graves1997asymptotically} to apply to the adaptive control of Markov chains and to problems with infinite parameter spaces. These papers provide results of fundemental importance, but seem to have been overlooked by much of the recent literature. %...The focus is on a regime in which the action with highest mean reward is known with extremely high confidence, but the agent must still occasionaly experiment with suboptimal actions in order to become even more confident...

Though the use of mutual information to guide sampling has been the subject of much research, dating back to the work of \citet{lindley1956},
to our knowledge, only two other papers \cite{villemonteix2009informational, hennig2012entropy} have used the mutual information between the optimal action and the next observation to guide action selection. Each focuses on optimization of expensive-to-evaluate black-box functions. Here, {\it black--box} indicates the absence of strong structural assumptions such as convexity and that the algorithm only has access to function evaluations, while {\it expensive-to-evaluate} indicates that the cost of evaluation warrants investing considerable effort to determine where to evaluate. These papers focus on settings with low-dimensional continuous action spaces, and with a Gaussian process prior over the objective function, reflecting the belief that ``smoother'' objective functions are more plausible than others. This approach is often called ``Bayesian optimization'' in the machine learning community \cite{brochu2009tutorial}. Both \citet{villemonteix2009informational} and \citet{hennig2012entropy} propose selecting each sample to maximize the mutual information between the next observation and the optimal solution. Several papers \cite{hernandez2014predictive, hernandez2015predictive, hernandez2015predictiveB} have extended this line of work since an initial version of our paper appeared online. The numerical routines in these papers use approximations to mutual information, and  may give insight into how to design efficient computational approximations to IDS.  

Several features distinguish our work from that of \citet{villemonteix2009informational} and \citet{hennig2012entropy}.  First, these papers focus on pure exploration problems: the objective is simply to learn about the optimal solution -- not to attain high cumulative reward. Second, and more importantly, they focus only on problems with Gaussian process priors and continuous action spaces. For such problems, simpler approaches like UCB algorithms \cite{srinivas2012information}, probability of improvement \cite{kushner1964new}, and expected improvement \cite{mockus1978application} are already extremely effective. As noted by \citet{brochu2009tutorial}, each of these algorithms simply chooses points with ``{\it potentially} high values of the objective function: whether because the prediction is high, the uncertainty is great, or both.'' By contrast, a major motivation of our work is that a richer information measure is needed to address problems with more complicated information structures. Finally, we provide a variety of general theoretical guarantees for information-directed sampling, whereas \citet{villemonteix2009informational} and \citet{hennig2012entropy} propose their algorithms as heuristics without guarantees.  Section \ref{subsec: pure exploration} shows that our theoretical guarantees extend to pure exploration problems.

%
% and to exploit this in deciding which points to evaluate. This class of problems is often called {\it Bayesian optimization}, {\it efficient global optimization}, and is closely related to work on {\it Kriging.} In this area, the vast majority of work focuses on problems with low dimensional continuous action spaces and Gaussian process priors  over the objective function.
%
%
%
%Several features distinguish our work from theirs. First, these papers focus on pure exploration problems: the objective is to learn about the opimum and not to attain ... More importantly, they focus only on problems with Gaussian process priors and continuous action spaces, a class of problems for which simple approaches like expected improvement, Thompson sampling, or Gaussain UCB algorithms are already extremely efficient. By contrast, a major motivation of our work is on the ability approaches...like this...to extend to more complicated information structures. Finally, we provide a variety of general theoretical guarantees for information directed sampling, whereas \citet{villemonteix2009informational} and \citet{hennig2012entropy} propose their algorithms only as heuristics. In Appendix \ref{} we show how these theoretical Guarantees apply as well to pure exploration problems.

The knowledge gradient (KG) algorithm uses a different measure of information to guide action selection: the algorithm computes the impact of a single observation on the quality of the decision made by a {\it greedy} algorithm, which simply selects the action with highest posterior expected reward. This measure was proposed by \citet{mockus1978application} and studied further by \citet{frazier2008knowledge} and \citet{ryzhov2012knowledge}.  KG seems natural since it explicitly seeks information that improves decision quality.  Computational studies suggest that for problems with Gaussian priors, Gaussian rewards, and relatively short time horizons, KG performs very well. However, there are no general guarantees for KG, and even in some simple settings, it may not converge to optimality. In fact, it may select a suboptimal action in {\it every} period, even as the time horizon tends to infinity. IDS also measures the information provided by a single observation, but our results imply it converges to optimality. KG is discussed further in Subsection \ref{subsec: expected improvement}. 

Our work also connects to a much larger literature on Bayesian experimental design (see \cite{chaloner1995bayesian} for a review).
\citet{Contal2014} study problems with Gaussian process priors and a method that guides exploration using the mutual information between the objective function and the next observation.  This work provides a regret bound, though, as the authors' erratum indicates, the proof of the regret bound is incorrect.
Recent work has demonstrated the effectiveness of {\it greedy} or {\it myopic} policies that always maximize a measure of the information gain from the next sample.
 \citet{jedynak2012twenty} and \citet{waeber2013bisection} consider problem settings in which this greedy policy is optimal.  Another recent line of work \cite{golovin2010near, golovin2011adaptive} shows that measures of information gain sometimes satisfy a decreasing returns property known as adaptive sub-modularity, implying the greedy policy is competitive with the optimal policy. Our algorithm also only considers the information gain due to the {\it next sample}, even though the goal is to acquire information over many periods.  Our results establish that the manner in which IDS encourages information gain  leads to an effective algorithm, even for the different objective of maximizing cumulative reward.
 
Finally, our work connects to the literature on partial monitoring. First introduced by \cite{piccolboni2001discrete} the partial monitoring problem encompasses a broad range of online optimization problems with limited or partial feedback. Recent work \cite{bartok2014partial} has focused on classifying the minimax-optimal scaling of regret in the problem's time horizon as a function of the level of feedback the agent receives. That work focuses most attention on cases where the agent receives very restrictive feedback, and in particular, cannot observe the reward their action generates. Our work also allows the agent to observe rich forms of feedback in response to actions they select, but we focus on a more standard decision-theoretic framework in which the agent associates their observations with a reward as specified by a utility function.
 
The literature we have discussed primarily  focuses on contexts where the goal is to converge on an optimal action in a manner that limits exploration costs.  
Such methods are not geared towards problems where time preference plays an important role.  A notable exception is the KG algorithm, which takes a discount
factor as input to account for time preference.  \citet{francetich2016toolkita,francetich2016toolkitb} discuss a variety of heuristics for the discounted problem.  Recent work
\cite{russo2017discount} generalizes Thompson sampling to address discounted problems.  We believe that IDS can also be extended to treat discounted problems,
though we do not pursue that in this paper.

The regret bounds we will present build on our information-theoretic analysis of Thompson sampling \citep{russo2016info}, which can be used to bound the regret of any policy in terms of its information ratio.  The information ratio of IDS is always smaller than that of TS, and therefore, bounds on the information ratio of TS provided in \citet{russo2016info} yield regret bounds for IDS.  This observation and a preliminary version of our results was first presented in a conference paper \cite{russo2014nipsIDS}.  Recent work by \citet{bubeck2015bandit} and \citet{bubeck2016multi} build on ideas from \citep{russo2016info} in another direction by bounding the information ratio when the reward function is convex and using that bound to study the order of regret in adversarial bandit convex optimization.

\section{Problem formulation}
\label{sec:formulation}
We consider a general probabilistic, or Bayesian, formulation in which uncertain quantities are modeled as random variables.
The decision-maker sequentially chooses actions$(A_t)_{t\in \mathbb{N}}$ from a finite action set $\A$ and observes the corresponding outcomes $\left(Y_{t, A_t}\right)_{t\in \mathbb{N}}$. There is a random outcome $Y_{t, a} \in \Y$ associated with each action $a\in \A$ and time $t \in \mathbb{N}$.  Let $Y_t \equiv (Y_{t,a})_{a \in \A}$ be the vector of outcomes at time $t \in \mathbb{N}$.  There is a random variable $\theta$ such that, conditioned on $\theta$, $(Y_{t})_{t\in \mathbb{N}}$ is an iid sequence. This can be thought of as a Bayesian formulation, where randomness in $\theta$ captures the decision-maker's prior uncertainty about the true nature of the system, and the remaining randomness in $Y_t$ captures idiosyncratic randomness in observed outcomes.

%The ``true outcome distribution'' $p^*$ is a distribution over $\Y^{|\A|}$ that is itself randomly drawn from the family of distributions $\mathcal{P}$.
%We assume that, conditioned on $p^*$, $(Y_t)_{t \in \mathbb{N}}$ is an iid sequence with each element $Y_t$ distributed according to $p^*$.  Let $p_a^*$ be the marginal distribution corresponding to $Y_{t,a}$.

The agent associates a reward $R(y)$ with each outcome $y\in \mathcal{Y}$, where the reward function $R: \mathcal{Y} \rightarrow \mathbb{R}$ is fixed and known.  Let $R_{t,a} = R(Y_{t,a})$ denote the realized reward of action $a$ at time $t$.
Uncertainty about $\theta$ induces uncertainty about the true optimal action, which we denote by $A^* \in \underset{a \in \A}{\arg \max}\, \E\left[ R_{1,a} | \theta \right]$. The $T$--period {\it regret} of the sequence of actions $A_1, .., A_T$ is the random variable
\begin{equation}\label{eq: regret}
{\rm Regret}(T) := \sum_{t=1}^{T} \left( R_{t,A^*} - R_{t,A_t} \right),
\end{equation}
which measures the cumulative difference between the reward earned by an algorithm that always chooses the optimal action and actual accumulated reward up to time $T$. In this paper we study expected regret %of a policy $\pi$
\begin{equation}\label{eq: expected regret}
\E \left[ {\rm Regret}(T)\right] = \E \left[\sum_{t=1}^{T}  \left( R_{t,A^*} - R_{t,A_t} \right) \right],
\end{equation}
where %actions $A_1, ..,A_T$ are selected according to $\pi$.
the expectation is taken over the randomness in the actions $A_t$ and the outcomes $Y_t$, and over the prior distribution over $\theta$. This measure of performance is commonly called {\it Bayesian regret} or {\it Bayes risk}.

The action $A_t$ is chosen based on the history of observations $\hist=(A_1, Y_{1,A_1},\ldots,A_{t-1}, Y_{t-1, A_{t-1}})$ up to time $t$. Formally, a \emph{randomized policy} $\pi=(\pi_t)_{t\in \mathbb{N}}$ is a sequence of deterministic functions, where $\pi_{t}(\hist)$ specifies a probability distribution over the action set $\A$. Let $\D(\A)$ denote the set of probability distributions over $\A$. The action $A_t$ is a selected by sampling independently from $\pi_t(\hist)$. With some abuse of notation, we will typically write this distribution as $\pi_{t}$, where $\pi_{t}(a) = \Prob( A_t = a| \hist)$ denotes the probability assigned to action $a$ given the observed history.  We explicitly display the dependence of regret on the policy $\pi$, letting $\E\left[{\rm Regret}(T, \pi)\right]$ denote the expected value given by \eqref{eq: expected regret} when the actions $(A_{1},..,A_{T})$ are chosen according to $\pi$.

\paragraph{Further notation.}
% Let $p_{t,a}= \Prob \left(  Y_t(a)\in \cdot \vert \hist \right)$ denote the posterior predictive distribution at $a$ and $p_{t,a}(\cdot | a^*)= \Prob \left( Y_{t}(a) \in \cdot \vert  A^*=a^*, \hist \right)$ denote the posterior predictive distribution at $a$ conditioned on the event that $a^*$ is the optimal action.

% If $P$ and $Q$ are two probability distributions such that $P$ is absolutely continuous with respect to $Q$, then we denote the Radon--Nikodym derivative of $P$ with respect to $Q$ by $\frac{dP}{dQ}$.
 Set $\alpha_t(a) = \Prob\left( A^*=a \vert \hist  \right)$ to be the posterior distribution of $A^*$. For two probability measures $P$ and $Q$ over a common measurable space, if $P$ is absolutely continuous with respect to $Q$, the {\it Kullback-Leibler divergence} between $P$ and $Q$ is
\begin{equation}
D_{\rm KL}(P || Q)= \intop \log \left( \frac{dP}{dQ} \right)dP
\end{equation}
where $\frac{dP}{dQ}$ is the Radon--Nikodym derivative of $P$ with respect to $Q$. For a probability distribution $P$ over a finite set $\mathcal{X}$, the {\it Shannon entropy} of $P$ is defined as $H(P) =- \sum_{x\in\mathcal{X}} P(x) \log\left( P(x) \right)$.
The {\it mutual information} under the posterior distribution between two random variables $X_1:\Omega \rightarrow \mathcal{X}_{1}$, and $X_2:\Omega \rightarrow \mathcal{X}_{2}$, denoted by
\begin{equation}
I_{t}(X_1 ; X_2) := \DKL\left( \mathbb{P}\left( (X_{1}, X_{2})  \in \cdot \vert \hist \right) \,\, || \,\, \mathbb{P}\left( X_{1}  \in \cdot \vert \hist \right) \mathbb{P}\left( X_{2}  \in \cdot \vert \hist \right)  \right),
\end{equation}
is the Kullback-Leibler divergence between the joint posterior distribution of $X_{1}$ and $X_{2}$ and the product of the marginal distributions. Note that $I_{t}(X_1 ; X_2)$ is a random variable because of its dependence on the conditional probability measure $\Prob\left( \cdot \vert \hist \right)$.

To reduce notation, we define the {\it information gain} from an action $a$ to be $g_{t}(a) := I_{t}(A^* ; Y_{t,a})$. As shown for example in Lemma 5.5.6 of \citet{gray2011entropy}, this is equal to the expected reduction in entropy of the posterior distribution of $A^*$ due to observing $Y_{t}(a)$:
\begin{equation}
g_{t}(a) = \E \left[H(\alpha_{t}) - H(\alpha_{t+1}) \vert \hist, A_{t}=a\right],
\end{equation}
which plays a crucial role in our results. Let $\Delta_{t}(a):= \E \left[R_{t,A^*}-R_{t,a}  \vert \hist \right]$ denote the expected instantaneous regret of action $a$ at time $t$.

We use overloaded notation for $g_t(\cdot)$ and $\Delta_{t}(\cdot)$. For an action sampling distribution $\pi \in \D(\A)$, $ g_{t}(\pi) := \sum_{a\in \A} \pi(a) g_{t}(a)$ denotes the expected information gain when actions are selected according to $\pi$, and $\Delta_{t}(\pi) = \sum_{a\in\A} \pi(a)\Delta_{t}(a)$ is defined analogously. Finally, we sometimes use the shorthand notation $\E_{t}[\cdot]=\E_{t}[\cdot | \hist]$ for conditional expectations under the posterior distribution, and similarly write $\Prob_{t}(\cdot)=\Prob(\cdot | \hist)$.

\section{Algorithm design principles}

The primary contribution of this paper is information-directed sampling (IDS), a general principle for designing action-selection algorithms.
We will define IDS in this section, after discussing motivations underlying its structure.  Further, through a set of examples, we will illustrate
how alternative design principles fail to account for particular kinds of information and therefore can be dramatically outperformed by IDS.

\subsection{Motivation}
\label{se:motivation}

Our goal is to minimize expected regret over a time horizon $T$.  This is achieved by a \emph{Bayes-optimal} policy, which, in principle, can be computed via dynamic programming.
Unfortunately, computing, or even storing, this Bayes-optimal policy is generally infeasible.  For this reason, there has been significant interest in developing computationally efficient heuristics.

As with much of the literature, we are motivated by contexts where the time horizon $T$ is ``large.''  For large $T$ and moderate times $t \ll T$,
the mapping from belief state to action prescribed by the Bayes-optimal policy does not vary significantly from one time period to the next.  As such,
it is reasonable to restrict attention to stationary heuristic policies.  IDS falls in this category.

IDS is motivated largely by a desire to overcome shortcomings of currently popular design principles.  In particular, it accounts for 
kinds of information that alternatives fail to adequately address:
\begin{enumerate}
\item {\bf Indirect information.}  IDS can select an action to obtain useful feedback about other actions even if there will be no useful feedback about the selected action.
\item {\bf Cumulating information.}  IDS can select an action to obtain feedback that does not immediately enable higher expected reward
but can eventually do so when combined with feedback from subsequent actions.
\item {\bf Irrelevant information.}  IDS avoids investments in acquiring information that will not help to determine which actions ought to be selected.
\end{enumerate}
Examples presented in Section \ref{sebsec: alternatives} aim to contrast the manner in which IDS and alternative approaches treat these kinds of information.

It is worth noting that we refer to IDS as a {\it design principle} rather than an {\it algorithm}.  The reason is that IDS does not specify steps to be carried out in terms
of basic computational operations but only an abstract objective to be optimized.  As we will discuss later, for many problem classes of interest, like the Bernoulli bandit, the Gaussian
bandit, and the linear bandit, one can develop tractable algorithms that implement IDS.  The situation is similar for upper confidence bounds,
Thompson sampling, expected improvement maximization, and knowledge gradient; these are abstract design principles that lead to
tractable algorithms for specific problem classes.

\subsection{Information-directed sampling}

IDS balances between obtaining low expected regret in the current period and acquiring new information about which action is optimal. It does this by minimizing over all action sampling distributions $\pi \in \D(\A)$ the ratio between the square of expected regret $\Delta_{t}(\pi)^2$ and information gain $g_{t}(\pi)$ about the optimal action $A^*$. In particular, the policy  $\pi^{{\rm IDS}}=\left(\pi_{1}^{{\rm IDS}}, \pi_{2}^{{\rm IDS}},\ldots \right)$  is defined by:
\begin{equation}\label{eq: definition of ratio policy} \pi_{t}^{{\rm IDS}} \in \underset{\pi \in \D(\A)}{\arg\min} \left\{ \Psi_{t}\left( \pi \right):= \frac{\Delta_{t}(\pi)^2}{g_{t}(\pi)} \right\}. \end{equation}
We call $\Psi_{t}(\pi)$ the {\it information ratio} of an action sampling distribution $\pi$. It measures the squared regret incurred per-bit of information acquired about the optimum. IDS myopically minimizes this notion of cost-per-bit of information in each period.

Note that IDS is {\it stationary randomized policy}: randomized in that each action is randomly sampled from a distribution and stationary in that this action distribution is determined by the posterior distribution of $\theta$ and otherwise independent of the time period.  It is natural to wonder whether randomization plays a fundamental role or if a stationary deterministic policy can offer similar behavior.  The following example sheds light on this matter.
\begin{example}[{\bf A known standard}]\label{ex: known standard}
Consider a problem with two actions $\mathcal{A} = \left\{a_1, a_2 \right\}$. Rewards from $a_1$ are known to be distributed Bernoulli(1/2).  The distribution of rewards from $a_2$ is Bernoulli(3/4) with prior probability $p_0$ and is Bernoulli(1/4) with prior probability $1-p_0$.
\end{example}
Consider a stationary deterministic policy for this problem.  With such a policy, each action $A_t$ is a deterministic function of $p_{t-1}$, the posterior probability conditioned on observations made through period $t-1$.  Suppose that for some $p_0 > 0$, the policy selects $A_1 = a_1$.  Since this is an uninformative action, $p_t = p_0$ and $A_t = a_1$ for all $t$, and thus, expected regret grows linearly with time.  If, on the other hand, $A_1 = a_2$ for all $p_0 > 0$ then $A_t = a_2$ for all $t$, which again results in expected regret that grows linearly with time.  It follows that, for any deterministic stationary policy, there exists a prior probability $p_0$ such that expected regret grows linearly with time.

In Section \ref{se:bounds}, we will establish a sub-linear bound on expected regret of IDS.  The result implies that, when applied to the preceding example, the expected regret of IDS does not grow linearly as does that of any stationary deterministic policy.  This suggests that randomization plays a fundamental role.

It may appear that the need for randomization introduces great complexity since the solution of the optimization problem (\ref{eq: definition of ratio policy}) 
takes the form of a distribution over actions.  However, an important property of this problem dramatically simplifies solutions.  In particular, as we will establish in
Section \ref{se:computation}, there is always a distribution with support of at most two actions that attains the minimum in \eqref{eq: definition of ratio policy}.

\subsection{Alternative design principles}
\label{sebsec: alternatives}

Several alternative design principles have figured prominently in the literature.  However, each of them fails to adequately
address one or more of the categories of information enumerated in Section \ref{se:motivation}.
This motivated our development of IDS.  In this section, we will illustrate through a set of examples how IDS accounts for such information
while alternatives fail.

\subsubsection{Upper confidence bounds and Thompson sampling}\label{subsubsec: UCB and TS}

Upper confidence bound (UCB) and Thompson sampling (TS) are two of the most popular principles for balancing between exploration and exploitation. As data is collected, both approaches do not only estimate the rewards generated by different actions, but carefully track the uncertainty in their estimates. They then continue to experiment with all actions that could {\it plausibly} be optimal given the observed data. This guarantees actions are not prematurely discarded, but, in contrast to more naive approaches likes the $\epsilon$-greedy algorithm, also ensures that samples are not wasted on clearly suboptimal actions.

With a UCB algorithm, actions are selected through two steps. First, for each action $a \in \A$ an upper confidence bound $B_{t}(a)$ is constructed. Then, the algorithm selects an action $A_{t} \in \arg\max_{a \in \A} B_{t}(a)$ with maximal upper confidence bound. The upper confidence bound $B_{t}(a)$ represents the greatest mean reward value that is statistically plausible. In particular, $B_{t}(a)$ is typically constructed to be optimistic $(B_{t}(a) \geq \E[ R_{t,a} | \theta])$ and asymptotically consistent ($B_t(a) \rightarrow \E\left[ R_{t,a} | \theta \right]$ as data about the action $a$ accumulates).

A TS algorithm simply samples each actions according to the posterior probability that it is optimal. In particular, at each time $t$, an action is sampled from $\pi^{\rm TS}_{t} = \alpha_t$.  This means that for each $a \in \A$,  $\Prob (A_{t}=a | \hist)=\Prob (A^*=a | \hist)=\alpha_{t}(a)$. This algorithm is sometimes called {\it probability matching} because the action selection distribution is {\it matched} to the posterior distribution of the optimal action.

For some problem classes, UCB and TS lead to efficient and empirically effective algorithms with strong theoretical guarantees. Specific UCB and TS algorithms are known to be asymptotically efficient for multi-armed bandit problems with independent arms \cite{lai1985asymptotically, lai1987adaptive, KL-UCB2013, kaufmann2012thompson, agrawal2013further} and satisfy strong regret bounds for some problems with dependent arms \cite{dani2008stochastic, rusmevichientong2010linearly, srinivas2012information,  bubeck2011xarmed,  filippi2010parametric, gopalan2014thompson, russo2014learning}.

Unfortunately, as the following examples demonstrate, UCB and TS do not pursue indirect information and because of that can perform very poorly relative to IDS for some natural problem classes.  A common feature of UCB and TS that leads to poor performance in these examples is that they restrict attention to sampling actions that have some chance of being optimal.  This is the case with TS because each action is selected according to the probability that it is optimal.  With UCB, the upper-confidence-bound of an action known to be suboptimal will always be dominated by others.

Our first example is somewhat contrived but designed to make the point transparent.
\begin{example}\label{ex: revealing action}
{\bf (a revealing action)}
Let $\A = \{0,1,\ldots,K\}$ consist of $K+1$ actions and suppose that $\theta$ is drawn uniformly at random from a finite set $\Theta=\{1,\ldots,K \}$ of $K$ possible values. Consider a problem with bandit-feedback $Y_{t,a}=R_{t,a}$. Under $\theta$, the reward of action $a$ is $$R_{t,a}=\begin{cases}
1 & \theta=a\\
0 & \theta\ne a,\, a\ne0\\
\frac{1}{2\theta} & a=0.
\end{cases}$$
\end{example}
Note that action $0$ is known to never yield the maximal reward, and is therefore never selected by TS or UCB. Instead, these algorithms will select among actions $\{1,\ldots,K\}$, ruling out only a single action at a time until a reward 1 is earned and the optimal action is identified. Their expected regret therefore grows linearly in $K$. IDS is able to recognize that much more is learned by drawing action $0$ than by selecting one of the other actions. In fact, selecting action $0$ immediately identifies the optimal action. IDS selects this action, learns which action is optimal, and selects that action in all future periods.  Its regret is independent of $K$.

Our second example may be of greater practical significance.  It represents the simplest case of a sparse linear model.
\begin{example}
\label{ex:sparse}
{\bf (sparse linear model)}
Consider a linear bandit problem where $\A \subset \mathbb{R}^d$ and the reward from an action $a \in \A$ is $a^T \theta^*$. The true parameter $\theta$ is known to be drawn uniformly at random from the set of 1-sparse vectors $\Theta=\{ \theta' \in \{0,1 \}^d : \| \theta' \|_0 =1 \}$. For simplicity, assume $d=2^m$ for some $m\in \mathbb{N}$. The action set is taken to be the set of vectors in $\{ 0,1 \}^d$ normalized to be a unit vector in the $L^1$ norm:
 $\A = \left\{ \frac{x}{\| x\|_1} : x\in \{0,1  \}^d, x\neq 0 \right\}$. 
\end{example} 
For this problem, when an action $a$ is selected and $y=a^T \theta \in \{0, 1/ \| a\|_{0} \}$ is observed, each $\theta' \in \Theta$ with $a^T \theta' \neq y$ is ruled out.   Let $\Theta_{t}$ denote the set of all parameters in $\Theta$ that are consistent with the rewards observed up to time $t$ and let $\mathcal{I}_{t} = \{i \in \{1,\ldots,d\} : \theta'_{i} =1, \theta' \in \Theta_{t} \}$ denote the corresponding set of possible positive components.

Note that $A^* = \theta$. That is, if $\theta$ were known, choosing the action $\theta$ would yield the highest possible reward. TS and UCB algorithms only choose actions from the support of $A^*$ and therefore will only sample actions $a \in \A$ that, like $A^*$, have only a single positive component. Unless that is also the positive component of $\theta$, the algorithm will observe a reward of zero and rule out only one element of $\mathcal{I}_t$.  In the worst case, the algorithm requires $d$ samples to identify the optimal action.

Now, consider an application of IDS to this problem. The algorithm essentially performs binary search: it selects $a\in \A$ with $a_{i}>0$ for half of the components $i \in \mathcal{I}_{t}$ and $a_{i}=0$ for the other half as well as for any $i \notin \mathcal{I}_{t}$. After just $\log_{2}(d)$ time steps the true value of the parameter vector $\theta$ is identified.

To see why this is the case, first note that all parameters in $\Theta_{t}$ are equally likely and hence the expected reward of an action $a$ is $\frac{1}{|\mathcal{I}_{t}|} \sum_{i \in I_{t}} a_i$. Since $a_i \geq 0$ and $\sum_{i} a_i=1$ for each $a\in \A$, every action whose positive components are in $I_{t}$ yields the highest possible expected reward of $1/| \mathcal{I}_{t}|$. Therefore, binary search minimizes expected regret in period $t$ for this problem. At the same time, binary search is assured to rule out half of the parameter vectors in $\Theta_{t}$ at each time $t$. This is the largest possible expected reduction, and also leads to the largest possible information gain about $A^*$. Since binary search both minimizes expected regret in period $t$ and uniquely maximizes expected information gain in period $t$, it is the sampling strategy followed by IDS.

In this setting we can explicitly calculate the information ratio of each policy, and the difference between them highlights the advantages of information-directed sampling. We have
\begin{eqnarray*}
\Psi_{1}(\pi^{\rm TS}_1)= \frac{(d-1)^2/d^2}{\frac{\log(d)}{d} + \frac{d-1}{d}\log\left( \frac{d}{d-1} \right)}
\sim \frac{d}{\log(d)} & \qquad &  \Psi_{1}(\pi^{{\rm IDS}}_1)= \frac{1}{\log(2)}\left(1 - \frac{1}{d}\right)^2
\sim   \frac{1}{\log(2)}
\end{eqnarray*}
where $h(d) \sim f(d)$ if $h(d)/f(d) \rightarrow 1$ as $d \rightarrow \infty$. When the dimension $d$ is large, $\Psi_{1}(\pi^{{\rm IDS}}_{1})$ is much smaller.

Our final example involves an assortment optimization problem.
\begin{example}
{\bf (assortment optimization)}
\label{ex:assortment}
Consider the problem of repeatedly recommending an assortment of products to a customer. The customer has unknown type $\theta \in \Theta$ where $|\Theta|=n$. Each product is geared toward customers of a particular type, and the assortment $a \in \A = \Theta^m$ of $m$ products offered is characterized by the vector of product types $a=(a_1,\ldots,a_m)$. We model customer responses through a random utility model in which customers are more likely to derive high value from a product geared toward their type.  When offered an assortment of products $a$, the customer associates with the $i$th product utility $U_{\theta, i}^{(t)}(a)= \beta \mathbf{1}_{\{a_i =\theta\}}+W_{i}^{(t)}$, where $W_{i}^{t}$ follows an extreme--value distribution and $\beta \in \mathbb{R}$ is a known constant. This is a standard multinomial logit discrete choice model.  The probability a customer of type $\theta$ chooses product $i$ is given by
$$ \frac{\exp\{  \beta \mathbf{1}_{\{a_i =\theta \}} \}}{\sum_{j=1}^m \exp\{ \beta \mathbf{1}_{\{a_j =\theta \}}\}}. $$
When an assortment $a$ is offered at time $t$, the customer makes a choice $I_{t}=\arg \max _{i} U_{\theta i}^{(t)}(a)$ and leaves a review $U_{\theta I_{t}}^{(t)}(a)$ indicating the utility derived from the product, both of which are observed by the recommendation system. The reward to the recommendation system is the normalized utility of the customer $U_{\theta I_{t}}^{(t)}(a) / \beta$.
\end{example}
If the type $\theta$ of the customer were known, then the optimal recommendation would be $A^*=(\theta, \theta, \ldots,\theta)$, which consists only of products targeted at the customer's type. Therefore, both TS and UCB would only offer assortments consisting of a single type of product. Because of this, TS and UCB each require order $n$ samples to learn the customer's true type. IDS will instead offer a {\it diverse} assortment of products to the customer, allowing it to learn much more quickly.

To render issues more transparent, suppose that $\theta$ is drawn uniformly at random from $\Theta$ and consider the behavior of each type of algorithm in the limiting case where $\beta \rightarrow \infty$. In this regime, the probability a customer  chooses a product of type $\theta$ if it is available tends to 1, and the normalized review $\beta^{-1} U_{\theta I_{t}}^{(t)}(a)$ tends to  $\mathbf{1}_{\{a_{I_{t}}=\theta\}}$, an indicator for whether the chosen product is of type $\theta$. The initial assortment offered by IDS will consist of $m$ different and previously untested product types. Such an assortment maximizes both the algorithm's expected reward in the next period and the algorithm's information gain, since it has the highest probability of containing a product of type $\theta$. The customer's response almost perfectly indicates whether one of those items was of type $\theta$. The algorithm continues offering assortments containing $m$ unique, untested, product types until a review near $U_{\theta I_{t}}^{(t)}(a)\approx 1$ is received. With extremely high probability, this takes at most $\left \lceil{n/m}\right \rceil$ time periods. By diversifying the $m$ products in the assortment, the algorithm learns a factor of $m$ times faster.

As in the previous example, we can explicitly calculate the information ratio of each policy, and the difference between them highlights the advantages of IDS.  The information ratio of IDS is more than $m$ times smaller:
\begin{eqnarray*}
\Psi_{1}(\pi^{\rm TS}_1)= \frac{\left( 1-\frac{1}{n}\right)^2}{\frac{1}{n} \log(n) + \frac{n-1}{n}\log\left( \frac{n}{n-1} \right)}
\sim \frac{n}{\log(n)} &\hspace{10pt} &  \Psi_{1}(\pi^{{\rm IDS}}_1)= \frac{\left(1-\frac{m}{n}\right)^2}{\frac{m}{n}\log(\frac{n}{m})  + \frac{n-m}{n}\log\left( \frac{n}{n-m} \right)} \leq  \frac{n}{m \log(\frac{n}{m})}.
\end{eqnarray*}

\subsubsection{Other information-directed approaches}
\label{subsec: other info directed}

Another natural information-directed algorithm aims to maximize the information acquired about the uncertain model parameter $\theta$. In particular, consider an algorithm that selects the action at time $t$ that maximizes the weighted combination of the expected reward the action generates and the information it generates about the uncertain model parameter $\theta$:
$
  \E_{t}[R_{t,a}] + \lambda I_{t}(Y_{t,a} ; \theta ).
$
Throughout this section we will refer to this algorithm as $\theta$-IDS. While such an algorithm can perform well on particular examples, the next example highlights that it may invest in acquiring information about $\theta$ that is irrelevant to the decision problem.

%Information-directed algorithms, which explicitly measure and optimize information gain, appear to more sophisticated than Thompson sampling and UCB algorithms, which seem to explore naively by continuing to sample actions that could plausibly be optimal. Despite this, we are not aware of any information-directed algorithm that satisfies the kind of general theoretical guarantees established for simpler algorithms like Thompson sampling. Indeed, while many information-directed algorithms perform well on particular examples, these typically fail miserably in other cases; these algorithms may fail to collect enough information, and therefore get stuck playing suboptimal actions, or they may actively incur regret collecting information that has no bearing on the actual decision problem. The precise definition of IDS was chosen to avoid these pitfalls.
%MENTION THAT INVESTING IN IRRELEVANT INFORMATION CAN BE A PROBLEM AND PROVIDE THE EXAMPLE ABOUT THAT

\begin{example}{\bf(unconstrained assortment optimization)}
Consider again the problem of repeatedly recommending assortments of products to a customer with unknown preferences. The recommendation system can choose any subset of products $a\subset \{1,..,n\}$ to display. When offered assortment $a$ at time $t$, the customer chooses the item $J_{t} = \arg\max_{i \in a} \theta_i$ and leaves the review $R_{t,a}= \theta_{J_t}$ where $\theta_i$ is the utility associated with product $i$. The recommendation system observes both $J_t$ and the review $R_{t,a}$, and has the goal of learning to offer the assortment that yields the best outcome for the customer and maximizes the review $R_{t,a}$. Suppose that $\theta$ is drawn as a uniformly random permutation of the vector $(1, \frac{1}{2}, \frac{1}{3},\ldots,\frac{1}{n})$. The customer is known to assign utility 1 to her most preferred item, 1/2 to the next best item, 1/3 to the third best, and so on, but the rank ordering is unknown.
\end{example}
In this example, no learning is required to offer an optimal assortment: since there is no constraint on the size of the assortment, it's always best to offer the full collection of products $a=\{1,\ldots,n\}$ and allow the customer to choose the most preferred. Offering this assortment reveals which item is most preferred by the customer, but it reveals nothing about her preferences about others. When applied to this problem, $\theta$-IDS begins by offering the full assortment $A_1 = \{1,\ldots,n\}$, which yields a reward of 1, and, by revealing the top item, yields information of $I_{1}(Y_{1,A_1} ; \theta) = \log(n)$. But, if $1/2< \lambda \log(n-1)$, which is guaranteed for sufficiently large $n$, it  continues experimenting with suboptimal assortments. In the second period, it will offer the assortment $A_2$ consisting of all products except $\arg\max_{i} \theta_i$. Playing this assortment reveals the customer's second most preferred item, and yields information gain $I_{2}(Y_{2,A_2} ; \theta)= \log(n-1)$. This process continues until the first period $k$ where $\lambda \log(n+1-k) < 1-1/k$.

In order to learn to offer effective assortments, $\theta$-IDS tries to learn as much as possible about the customer's preferences. In doing this, the algorithm inadvertently invests experimentation effort in information that is irrelevant to choosing an optimal assortment. On the other hand, IDS recognizes that the optimal assortment $A^*=\{1,\ldots,n\}$ does not depend on full knowledge of the vector $\theta$, and therefore does not invest in identifying $\theta$.

As shown in Section \ref{subsec: theta-IDS}, our analysis can be adapted to provide regret bounds in  for a version of IDS that uses information gain with respect to $\theta$, rather than with respect to $A^*$. These regret bounds depends on the entropy of $\theta$, whereas the bound for IDS depends on the entropy of $A^*$, which can be much smaller.

\subsubsection{Expected improvement and the knowledge gradient}\label{subsec: expected improvement}

We now consider two algorithms which measure the quality of the best decision that can be made based on current information, and encourage gathering observations that are expected to immediately increase this measure. The first is the expected improvement algorithm, which is one of the most widely used techniques in the active field of Bayesian optimization (see \cite{brochu2009tutorial}). Define $\mu_{t,a} = \E[R_{t,a} | \hist]$ to be the expected reward generated by $a$ under the posterior,  and $V_t = \max_{a'} \mu_{t,a'}$ to be the best objective value attainable given current information.  The expected improvement of action $a$ is defined to be $\E[ \max\{f_{\theta}(a), V_t\}| \hist]$, where $f_{\theta}(a) = \E[R_{t,a}|\theta]$ is the expected reward generated by action $a$ under the unknown true parameter $\theta$.
The EGO algorithm aims to identify high performing actions by sequentially sampling those  that yield the highest expected improvement. Similar to UCB algorithms, this encourages the selection of actions that could potentially offer great performance. Unfortunately, like these UCB algorithms, this measure of improvement does not place value on indirect information: it won't select an action that provides useful feedback about other actions unless the mean-reward of that action might exceed $V_t$. For example, the expected improvement algorithm cannot treat the problem described in Example \ref{ex: revealing action} in a satisfactory manner.

The knowledge gradient algorithm \cite{ryzhov2012knowledge} uses a modified improvement measure. At time $t$, it computes
\[
v^{KG}_{t,a}  :=   \E \left[ V_{t+1}   \vert \hist, A_{t}=a \right] -  V_{t}
\]
for each action $a$. If $V_{t}$ measures the quality of the decision that can be made based on current information, then $v^{KG}_{t,a}$ captures the immediate  improvement in decision quality due to sampling action $a$ and observing $Y_{t,a}$. For a problem with time horizon $T$, the knowledge gradient (KG) policy selects an action in time period $t$ by maximizing  $\mu_{t,a}+ (T-t) v^{KG}_{t,a}$ over actions $a\in \A$.

Unlike expected-improvement, the measure $v^{KG}_{t,a}$ of the value of sampling an action places value on indirect information. In particular, even if an action is known to yield low expected reward, sampling that action could lead to a significant increase in $V_t$ by providing information about other actions. Unfortunately, there are no general guarantees for KG, and it sometimes struggles with cumulating information; individual observations that provide information about $A^*$ may not be immediately useful for making decisions in the next period, and therefore may lead to no improvement in $V_t$. 

Example \ref{ex: known standard} provides one simple illustration of this phenomenon. In that example, action 1 is known to yield a mean reward of 1/2. When $p_0 \leq 1/4$, upon sampling action 2 and observing a reward 1, the posterior expected reward of action 2 becomes 
$$ \E[ R_{2,a_2} |  R_{1, a_2}=1] = \frac{p_0 (3/4)}{p_0 (3/4)+(1-p_0)(1/4)}  \leq 1/2.$$ 
In particular, a single sample could never be influential enough to change which action has the highest posterior expected reward.   Therefore, $v^{KG}_{t,a_2}=0$, and the KG decision rule selects action 1 in the first period. Since nothing is learned from the resulting observation, it will continue selecting action 1 in all subsequent periods. Even as the time horizon $T$ tends to infinity, the KG policy would never select action 2. Its cumulative regret over $T$ time periods is  equal to $ (p_0/4)T$, which grows linearly with $T$.

%Like IDS, the {\it knowledge--gradient} policy \cite{ryzhov2012knowledge} selects actions by optimizing a single period objective that encourages earning high expected reward and acquiring a lot of information.  Define $\mu_{t,a}= \E \left[ R_{t,a} \vert \hist \right]$ to be the mean reward of action $a$ under the posterior at time $t$, and $V_{t}= \max_{a\in \A} \mu_{t,a}$ to be to be the expected reward earned by selecting the best estimated action. The knowledge gradient measures the expected improvement due to sampling action $a$ and observing $Y_{t,a}$:

%\begin{example}{\bf (comparison with a known standard)}
%Consider a problem with two actions $\mathcal{A} = \{a_1, a_2 \}$ and binary rewards. Action 1 is known to yield a reward of $1$ with probability $.6$ and a reward of zero otherwise. Action $2$ yields a reward of $1$ with unknown probability $\theta \sim \mathrm{Beta}(1 ,2)$.
%\end{example}
%Recall that the mean of a random variable with distribution $\mathrm{Beta}(S,F)$ is $S/(S+F)$. After a single sample of action $2$, the posterior mean is $\E[\theta_2 | Y_{1,2}] = (1+Y_{1,2})/4 \leq 1/2 <\theta_1$.  
In this example, although sampling action 2 will not immediately shift the decision-maker's prediction of the best action ($\arg\max_{a} \E[\theta_a | \mathcal{F}_1]$), these samples  influences her posterior beliefs and reduce uncertainty about which action is optimal. As a result, IDS will always assign positive probability to sampling the second action. More broadly, IDS places value on information that is pertinent to the decision problem, even if that information won't directly improve performance on its own. This is useful when one must combine multiple pieces of information in order to effectively learn.

To address problems like Example~\ref{ex: known standard}, \citet{frazier2010paradoxes} propose KG* -- a modified form of KG that considers the value of sampling a single action many times. This helps to address some cases -- those where a single sample of an action provides no value even though sampling the action several times could be quite valuable -- but this modification may not adequately assess information gain in  more general problems. KG* may explore very inefficiently, for instance, in the sparse linear bandit problem considered in Example~\ref{ex:sparse}. However, the performance of KG* on this problem depends critically on how ties are broken among actions maximizing $\mu_{t,a}+(T-t)v_{t,a}^{\rm KG}$. To avoid such ambiguity, we instead consider the following modification to Example~\ref{ex:sparse}.

\begin{example} \label{ex:sparse with outside option}
	{\bf (sparse linear model with an outside option)}
	Consider a modification to Example \ref{ex:sparse} in which the agent has access to a outside option that generates known, but suboptimal, rewards. The action set is $\A= \{O\} \cup \A' $, where the outside option $O$ is know to always yield reward 1/2, and $\A'=\left\{ \frac{x}{\| x\|_1} : x\in \{0,1  \}^d, x\neq 0 \right\}$ is the action space considered in Example \ref{ex:sparse}. As before, the reward generated by action $a\in \A'$ is $a^T \theta$ where $\theta$ is drawn uniformly at random from the set of 1-sparse vectors $\Theta=\{ \theta' \in \{0,1 \}^d : \| \theta' \|_0 =1 \}$. 
	 As before, assume for simplicity that $d=2^m$ for some $m\in \mathbb{N}$. 
	%Consider a linear bandit problem where $\A \subset \mathbb{R}^d$ and the reward from an action $a \in \A$ is $a^T \theta$. The true parameter $\theta$ is known to be drawn uniformly at random from the set of 1-sparse vectors $\Theta=\{ \theta' \in \{0,1 \}^d : \| \theta' \|_0 =1 \}$. For simplicity, assume $d=2^m$ for some $m\in \mathbb{N}$. The action set is taken to be the set of vectors in $\{ 0,1 \}^d$ normalized to be a unit vector in the $L^1$ norm: $\A = \left\{ \frac{x}{\| x\|_1} : x\in \{0,1  \}^d, x\neq 0 \right\}$. 
\end{example}
When the horizon $T$ is long, the inclusion of the outside option should be irrelevant. Indeed, nothing is learned by sampling the outside option, and it is known apriori to be suboptimal: it generates a reward of 1/2 whereas the optimal action generates a reward of $1=\max_{a\in \A'} a^T \theta$. An optimal algorithm for Example~\ref{ex:sparse with outside option} should sample actions in $\A'$ until the optimal action is identified and should do so in a manner that minimizes the expected regret incurred. 

Note that in the absence of observation noise, KG and KG* are equivalent, and so we will not distinguish between these two algorithms. We will see that when $T$ is large,  KG always samples actions in $\A'$ until the optimal action is identified. However, the  expected number of samples required, and the expected regret incurred, both scale linearly with the dimension $d$. IDS, on the other hand, identifies the optimal action after only $\log_{2}(d)$ steps. 

To see why, note that unless the agent has exactly identified the parameter $\theta$, selecting the outside option will always generate the highest expected reward in the next period. Only selecting an action with a single positive component could immediately reveal $\theta$, so in response KG only places value on the information generated by such actions. When the horizon is large, KG prioritizes such information over the safe reward offered by the outside option. It therefore engages in exhaustive search, sequentially checking each component $i\in \{1,\ldots, d\}$ of $\theta$ until $\theta_i=1$ is observed. 

%We will show that, as in the case of example \ref{ex:sparse}, IDS will essentially perform binary search, and thereby identifies the optimal action $\log_{2}(d)$ steps. However, the myopic way in which KG values information leads it essentially perform exhaustive search, so it requires order $d$ samples to identify the optimal action. 

To show this more precisely, let us reintroduce some notation used in Example \ref{ex:sparse}. As before, when an action $a \in \A'$ is selected and $y=a^T \theta \in \{0, 1/ \| a\|_{0} \}$ is observed, each $\theta' \in \Theta$ with $a^T \theta' \neq y$ is ruled out.   Let $\Theta_{t}$ denote the set of all parameters in $\Theta$ that are consistent with the rewards observed up to time $t$ and let $\mathcal{I}_{t} = \{i \in \{1,\ldots,d\} : \theta'_{i} =1, \theta' \in \Theta_{t} \}$ denote the corresponding set of possible positive components.

The best reward value attainable given current information is
\[
V_{t} = \max \left\{ 1/2 ,\, \max_{a\in \A'} \E[ \theta^T a | \hist ] \right\} = \max\left\{ 1/2 ,\, \frac{1}{|\Theta_{t}|} \right\}.
\]
Therefore, $V_{t} = 1/2$ whenever $|\Theta_{t}|>1$. For remaining indices $i \in \mathcal{I}_{t}$, selecting the standard basis vector $e_i$ and observing $\theta^T e_i =1$ establishes that  $\theta =e_i$. As a result, $v_{t, e_i}^{\rm KG} = (1/2)\Prob(e_i^T \theta = 1 | \hist) = \frac{1}{2 |\Theta_t|}>0$. For other actions, observing $a^T \theta \in \{0,1/\|a\|_0\}$ is not sufficient to identify $\theta$, and so $v_{t, a}^{\rm KG} =0$. When the horizon $T$ is large, KG always select one of the actions with $v_{t, a}^{\rm KG}>0$, and so the algorithm selects only standard basis vectors until $\theta$ is revealed  %The number of samples required and regret incurred as a result both scale linearly with the dimension $d$. 

IDS, on the other hand, essentially performs binary search. In the first period, it selects some permutation of the action $a=(2/d,...,2/d, 0,...,0)$. By observing either $a^T \theta=0$ or $a^T \theta =1$, it rules out half the parameter vectors in $\Theta_1$. Continuing this binary search process, the parameter is identified using $O(\log_{2}(d))$ steps.  

To see why this is the case, let us focus on the first period. As in Example \ref{ex:sparse}, selecting an action with $d/2$ positive components, like $a=(2/d,...,2/d, 0,...,0)$, offers strictly maximal 
information gain $I_{1}(A^* ; a^T \theta ) = \log(2)$ and weakly maximal reward $\E[a^T \theta]= 1/d$ among all actions $a \in \A'$. As a result, any action in $\A'$ that is not a permutation of $a$ is strictly dominated and is never sampled by IDS. It turns out that IDS also has zero probability of selecting the outside option in this case. Indeed, IDS selects $O$ with probability $1-\alpha^*$, where $\alpha^*$ can be attained as the minimizer of the information ratio
\[ 
\alpha^* = \underset{\alpha \in [0,1]}{\arg\min} \frac{((1-\alpha)/2 +\alpha(1 - 1/d) )^2}{\alpha \log_2(d)} = \underset{\alpha \in [0,1]}{\arg\min}\,\, \frac{1}{2 \sqrt{\alpha}} + \sqrt{\alpha}\left(\frac{1}{2} - \frac{1}{d} \right) = 1.
\]
This process continues inductively. Another step of binary search again rules out half the components in $|\Theta_1|$, leading to an information gain of $\log(2)$ bits. The regret incurred is even lower--now only $(1-2/d)$
--and hence the algorithm again has zero probability of selecting the outside option. Iterating this process, the true positive component of $\theta$ is identified after $\log_{2}(d)$ steps.

\section{Regret bounds}
\label{se:bounds}

This section establishes regret bounds for information-directed sampling for several of the most widely studied classes of online optimization problems. These regret bounds follow from our recent information theoretic-analysis of Thompson sampling \citep{russo2016info}. In the next subsection, we establish a regret bound for any policy in terms of its information ratio. Because the information-ratio of IDS is always smaller than that of TS, the bounds on the information ratio of TS provided in \citet{russo2016info} immediately yield regret bounds for IDS for a number of important problem classes.  

\subsection{General bound}

We begin with a general result that bounds the regret of \emph{any policy} in terms of its information ratio and the entropy of the optimal action distribution. Recall that we have defined the information ratio of an action sampling distribution to be $\Psi_{t}( \pi ) : = \Delta_{t}(\pi)^2 / g_{t}(\pi)$; it is the  squared expected regret the algorithm incurs per-bit of information it acquires about the optimum. The entropy of the optimal action distribution $H(\alpha_1)$ captures the magnitude of the decision-maker's initial uncertainty about which action is optimal. One can then interpret the next result as a bound on regret that depends on the cost of acquiring new information and the total amount of information that needs to be acquired.
\begin{prop}\label{prop: regret bound average information ratio} For any policy $\pi=(\pi_1, \pi_2, \pi_3, \ldots)$ and time $T\in \mathbb{N}$,
$$ \E \left[\mathrm{Regret}\left(T, \pi \right) \right] \leq \sqrt{ \overline{\Psi}_{T}(\pi) H(\alpha_{1}) T }.$$
where
\[
\overline{\Psi}_{T}(\pi) \equiv \frac{1}{T} \sum_{t=1}^{T} \E_{\pi} [  \Psi_{t}(\pi_t ) ]
\]
is the average expected information ratio under $\pi$.
\end{prop}
We will use the following immediate corollary of Proposition \ref{prop: regret bound average information ratio}, which relies on a uniform bound on the information ratio of the form $\Psi_{t}(\pi_t) \leq \lambda$ rather than a bound on the average expected information ratio.
\begin{cor}\label{cor: general regret bound}
Fix a deterministic $\lambda \in \mathbb{R}$ and a policy $\pi=\left(\pi_{1}, \pi_{2}, \ldots \right)$ such that $\Psi_{t}(\pi_{t}) \leq \lambda$ almost surely for each $t\in \{1,..,T\}$. Then,
$$ \E \left[\mathrm{Regret}\left(T, \pi \right) \right] \leq \sqrt{\lambda H(\alpha_{1}) T }.$$
\end{cor}

\subsection{Specialized bounds on the minimal information ratio}

We now establish upper bounds on the information ratio of IDS in several important settings, which yields explicit regret bounds when combined with Corollary \ref{cor: general regret bound}. These bounds show that, in any period, the algorithm's expected regret can only be large if it is expected to acquire a lot of information about which action is optimal. In this sense, it effectively balances between exploration and exploitation in {\it every} period. For each problem setting, we will compare our upper bounds on expected regret with known lower bounds.

The bounds on the information ratio also help to clarify the role it plays in our results: it roughly captures the extent to which sampling some actions allows the decision maker to make inferences about {\it other} actions. In the worst case, the ratio depends on the number of actions, reflecting the fact that actions could provide no information about others. For problems with full information, the information ratio is bounded by a numerical constant, reflecting that sampling one action perfectly reveals the rewards that would have been earned by selecting any other action. The problems of online linear optimization under ``bandit feedback'' and under ``semi-bandit feedback'' lie between these two extremes, and the ratio provides a natural measure of  each problem's information structure.  In each case, our bounds reflect that IDS is able to automatically exploit this structure. 

The proofs of these bounds follow from our recent analysis of Thompson sampling, and the implied regret bounds are the same as those established for Thompson sampling.  In particular, since $\Psi_{t}(\pi_{t}^{\rm IDS}) \leq \Psi_{t}( \pi_{t}^{\rm TS})$ where $\pi^{\rm TS}$ is the Thompson sampling policy, it is enough to bound $\Psi_{t}( \pi_t^{\rm TS})$. Several bounds on the information-ratio of TS were provided by \citet{russo2016info}, and we defer to that paper for the proofs. While the analysis is similar in the cases considered here,  IDS outperforms Thompson sampling in simulation, and, as we  highlighted in the previous section, is sometimes provably much more informationally efficient. 

In addition to the bounds stated here, recent work by \citet{bubeck2015bandit} and \citet{bubeck2016multi} bounds the information ratio when the reward function is convex, and uses this to study the order of regret in adversarial bandit convex optimization. This points to a broader potential of using information-ratio analysis to study the information-complexity of general online optimization problems.  

% $Some of these lower bounds were developed and stated in an adversarial framework, but were proved using the {\it probabilistic method}; authors fixed a family of distributions $\mathcal{P}$ and an initial distribution over $p^*$ and lower bounded the expected regret under this environment of any algorithm. This provides lower bounds on $\inf_{\pi} \E \left[{\rm Regret}(T, \pi) \right]$ in our framework.

To simplify the exposition, our results are stated under the assumption that rewards are uniformly bounded. This effectively controls the worst-case variance of the reward distribution, and as shown in the appendix of \citet{russo2016info}, our results can be extended to the case where reward distributions are sub-Gaussian.
\begin{assumption}\label{assum: bounded rewards}
$\underset{\overline{y}\in \mathcal{Y}}{\sup} R(\overline{y})-\underset{\underline{y}\in \mathcal{Y}}{\inf}R(\underline{y}) \leq 1.$
\end{assumption}
\subsubsection{Worst case bound}
%The next proposition provides a bound on the information ratio that holds whenever rewards are bounded, but that has an explicit dependence on the number of actions. This scaling cannot be improved in general, but we  go on to show tighter bounds are possible for problems with different {\it information structures}.
The next proposition shows that $\Psi_{t}(\pi_{t}^{\rm IDS})$ is never larger than $|\A|/2$. That is, there is always an action sampling distribution $\pi \in \D(\A)$ such that  $\Delta_{t}(\pi)^2 \leq  (|\A|/2) g_{t}(\pi)$.
As we will show in the coming sections, the ratio between regret and information gain can be much smaller under specific information structures.
\begin{prop}\label{prop: worst case bound}
For any $t \in \mathbb{N}$, $\Psi_{t}(\pi_{t}^{\rm IDS}) \leq  |\A|/2$ almost surely.
\end{prop}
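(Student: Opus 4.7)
The plan is to prove the second (stronger) inequality $\Psi_t(\idsme_t) \leq |\A|/2$, from which $\Psi_t^* \leq \Psi_t(\idsme_t)$ follows by definition of $\Psi_t^*$ as the minimum. The strategy rests on two pillars: first, relating the true information ratio $\Psi_t$ to its mean-based surrogate $\Psi_t^{\rm ME}(\pi) := \Delta_t(\pi)^2 / g_t^{\rm ME}(\pi)$; and second, evaluating $\Psi_t^{\rm ME}$ at the Thompson-sampling distribution $\alpha_t$ and invoking optimality of $\idsme_t$.

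For the first pillar, I would apply Pinsker's inequality $\DKL(P\|Q) \geq 2\|P-Q\|_{\rm TV}^2$ together with Assumption~\ref{assum: bounded rewards}. Since the reward function has range at most $1$, centering $R$ around its midpoint yields $|\E_P[R] - \E_Q[R]| \leq \|P - Q\|_{\rm TV}$, so $\DME(P\|Q)^2 \leq \tfrac{1}{2}\DKL(P\|Q)$. Taking expectations over $a^* \sim \alpha_t$ in identity \eqref{eq: info gain as expected divergence} and its mean analogue gives $g_t(a) \geq 2 g_t^{\rm ME}(a)$ for every action $a$, and hence $g_t(\pi) \geq 2 g_t^{\rm ME}(\pi)$ for every sampling distribution. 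Thus $\Psi_t(\pi) \leq \Psi_t^{\rm ME}(\pi)/2$ for every $\pi$ with $g_t^{\rm ME}(\pi) > 0$.

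For the second pillar, write $\mu_{t,a} := \E[R(Y_t(a))\mid \hist]$ and $\mu_{t,a\vert a^*} := \E[R(Y_t(a))\mid\hist, A^*=a^*]$, so that $g_t^{\rm ME}(a) = \sum_{a^*} \alpha_t(a^*)(\mu_{t,a\vert a^*} - \mu_{t,a})^2$ and $\Delta_t(a) = \sum_{a^*}\alpha_t(a^*)\mu_{t,a^*\vert a^*} - \mu_{t,a}$ by the tower property. A direct computation (relabeling the outer sum) gives the clean identity
\begin{equation*}
\Delta_t(\alpha_t) \;=\; \sum_{a\in\A}\alpha_t(a)\bigl(\mu_{t,a\vert a} - \mu_{t,a}\bigr).
\end{equation*}
Applying Cauchy--Schwarz to this expression with unit weights and summing over $|\A|$ terms yields $\Delta_t(\alpha_t)^2 \leq |\A|\sum_a \alpha_t(a)^2(\mu_{t,a\vert a}-\mu_{t,a})^2$, while retaining only the diagonal term $a^* = a$ in the double sum defining $g_t^{\rm ME}(\alpha_t)$ gives the matching lower bound $g_t^{\rm ME}(\alpha_t) \geq \sum_a \alpha_t(a)^2(\mu_{t,a\vert a}-\mu_{t,a})^2$. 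Dividing, $\Psi_t^{\rm ME}(\alpha_t) \leq |\A|$.

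To finish, I would invoke the defining optimality of $\idsme_t$ for $\Psi_t^{\rm ME}$ to get $\Psi_t^{\rm ME}(\idsme_t) \leq \Psi_t^{\rm ME}(\alpha_t) \leq |\A|$, and then combine with the first pillar to conclude $\Psi_t(\idsme_t) \leq \tfrac{1}{2}\Psi_t^{\rm ME}(\idsme_t) \leq |\A|/2$. The main obstacle is the second pillar: the Cauchy--Schwarz step paired with the diagonal lower bound on $g_t^{\rm ME}(\alpha_t)$ is the genuinely clever move (inherited from the Thompson-sampling analysis of Russo--Van Roy), as it exploits the fact that the variability of $\mu_{t,a\vert a^*}$ across $a^*$ is at least as large as the piece contributed by $a^* = a$. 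One minor subtlety to handle separately is the degenerate case $g_t^{\rm ME}(\alpha_t) = 0$, which forces $\mu_{t,a\vert a} = \mu_{t,a}$ for every $a$ in the support of $\alpha_t$ and hence $\Delta_t(\alpha_t) = 0$; in that case the optimal action is known and the bound holds trivially.
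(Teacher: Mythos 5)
Your proposal is correct and follows essentially the same route as the paper: Pinsker's inequality reduces the information ratio to its mean-based surrogate, optimality of $\idsme_t$ lets you pass to the Thompson-sampling distribution $\alpha_t$, and the paper's Proposition on the Thompson-sampling ratio plus the same Cauchy--Schwarz/diagonal-term argument gives the factor $|\A|$. The only (harmless) additions are your explicit treatment of the degenerate case $g_t^{\rm ME}(\alpha_t)=0$ and the notation in terms of conditional means rather than $\DME$.
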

Combining Proposition \ref{prop: worst case bound} with Corollary \ref{cor: general regret bound} shows that $\E \left[\mathrm{Regret}\left(T,  \pi^{\rm IDS} \right) \right]  \leq \sqrt{\frac{1}{2} |\A| H(\alpha_{1})T}$.

\subsubsection{Full information}
Our focus in this paper is on problems with {\it partial feedback}. For such problems, what the decision maker observes depends on the actions selected, which leads to a tension between exploration and exploitation. Problems with full information arise as an extreme point of our formulation where the outcome $Y_{t,a}$ is perfectly revealed by observing $Y_{t,\tilde{a}}$ for some $\tilde{a} \neq a$; what is learned does not depend on the selected action.  The next proposition shows that under full information, the minimal information ratio is bounded by $1/2$.

%There is a huge literature related to this topic, including work on sample average approximation techniques for solving stochastic optimization problems \ref{}, work from statistical learning theory \ref{}, and a large body of work on full information online optimization in an adversarial framework \cite{shalev2011online}.

\begin{prop}\label{prop: full information}
Suppose for each $t\in \mathbb{N}$ there is a random variable $Z_{t}: \Omega \rightarrow \mathcal{Z}$ such that for each $a\in \A$,  $Y_{t,a}= \left( a, Z_{t}  \right)$. Then for all $t\in \mathbb{N}$,  $\Psi_{t}(\pi_{t}^{\rm IDS}) \leq \frac{1}{2}$ almost surely.
\end{prop}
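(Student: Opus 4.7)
The plan is to exhibit a specific randomized distribution $\pi \in \D(\A)$ achieving $\Psi_t(\pi) \leq 1/2$, namely the probability-matching distribution $\pi = \alpha_t$ (the Thompson sampling distribution, which the paper notes is already shown to satisfy $\Psi_{t}(\pi^{\rm TS}) \leq 1/2$ in Russo--Van Roy 2014). First I would unpack what full information buys us: since $Y_{t}(a) = (a, Z_{t})$, the conditional law $p_{t,a}(\cdot \mid a^{*})$ is just $\delta_{a}$ tensored with the law of $Z_{t}$ given $\mathcal{F}_{t-1}, A^{*}=a^{*}$, and similarly for $p_{t,a}$. Consequently $\DKL\bigl(p_{t,a}(\cdot \mid a^{*}) \,\|\, p_{t,a}\bigr)$ does not depend on $a$; call it $D_{t}(a^{*})$. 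Via the representation \eqref{eq: info gain as expected divergence}, this gives $g_{t}(a) = \sum_{a^{*}} \alpha_{t}(a^{*}) D_{t}(a^{*}) = I_{t}(A^{*}; Z_{t})$ for every $a$, so $g_{t}(\alpha_{t}) = I_{t}(A^{*}; Z_{t})$ as well.

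Next I would rewrite the expected regret of the probability-matching distribution using the identity that is the workhorse of the Thompson sampling analysis. Writing $\mu_{t}(a) := \E[R(Y_{t}(a)) \mid \hist]$ and $\mu_{t}(a \mid a^{*}) := \E[R(Y_{t}(a)) \mid \hist, A^{*}=a^{*}]$,
\begin{equation*}
\Delta_{t}(\alpha_{t}) \;=\; \sum_{a^{*}} \alpha_{t}(a^{*}) \mu_{t}(a^{*} \mid a^{*}) - \sum_{a} \alpha_{t}(a) \mu_{t}(a) \;=\; \sum_{a} \alpha_{t}(a) \bigl[\mu_{t}(a \mid a) - \mu_{t}(a)\bigr].
\end{equation*}
Applying Cauchy--Schwarz to the measure $\alpha_{t}$ gives $\Delta_{t}(\alpha_{t})^{2} \leq \sum_{a} \alpha_{t}(a) \bigl(\mu_{t}(a \mid a) - \mu_{t}(a)\bigr)^{2}$.

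I would then bound each summand by an information quantity via Pinsker's inequality combined with Assumption~\ref{assum: bounded rewards}: for reward function $R$ with range at most $1$, for any two laws $P, Q$ on $\mathcal{Y}$,
\begin{equation*}
\bigl(\textstyle\E_{P}[R] - \E_{Q}[R]\bigr)^{2} \;\leq\; \|P-Q\|_{\rm TV}^{2} \;\leq\; \tfrac{1}{2}\DKL(P \,\|\, Q).
\end{equation*}
Applied with $P = p_{t,a}(\cdot \mid a)$ and $Q = p_{t,a}$, this yields $\bigl(\mu_{t}(a \mid a) - \mu_{t}(a)\bigr)^{2} \leq \tfrac{1}{2} \DKL\bigl(p_{t,a}(\cdot \mid a) \,\|\, p_{t,a}\bigr) = \tfrac{1}{2} D_{t}(a)$. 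Averaging against $\alpha_{t}$ and using the previous paragraph's identity $\sum_{a} \alpha_{t}(a) D_{t}(a) = I_{t}(A^{*}; Z_{t}) = g_{t}(\alpha_{t})$ gives $\Delta_{t}(\alpha_{t})^{2} \leq \tfrac{1}{2} g_{t}(\alpha_{t})$, hence $\Psi_{t}^{*} \leq \Psi_{t}(\alpha_{t}) \leq 1/2$.

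The only subtlety I anticipate is ensuring the measurability and Pinsker step go through when $Z_{t}$ is a general random variable on an abstract space (so $p_{t,a}(\cdot \mid a^{*})$ is defined via regular conditional probabilities); this is standard and appears in the appendix of \citet{russo2014info}. Everything else is a clean combination of the Thompson sampling identity, Cauchy--Schwarz, and Pinsker, relying crucially on the $a$-independence of the conditional predictive distributions afforded by the full-information hypothesis.
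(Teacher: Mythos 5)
Your proposal is correct and follows essentially the same route as the paper: the paper bounds $\Psi_{t}^{*}$ by the information ratio of Thompson sampling and then invokes Proposition 4 of \citet{russo2014info} (whose $\Gamma_{t}^{2}$ equals $\Psi_{t}(\pi_{t}^{\rm TS})$) to get the constant $1/2$. The only difference is that you reprove that cited bound in full -- via the probability-matching identity, Jensen/Cauchy--Schwarz, Pinsker under Assumption \ref{assum: bounded rewards}, and the $a$-independence of the conditional predictive distributions under full information -- which is exactly the argument behind the cited result.
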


Combining this result with Corollary \ref{cor: general regret bound} shows $\E \left[{\rm Regret}(T, \pi^{{\rm IDS}})  \right] \leq \sqrt{\frac{1}{2} H(\alpha_{1}) T}$. Further, a worst--case bound on the entropy of $\alpha_{1}$ shows that $\E \left[{\rm Regret}(T, \pi^{{\rm IDS}})  \right] \leq \sqrt{\frac{1}{2} \log (|\A|) T}$. \citet{dani2007price} show this bound is order optimal, in the sense that for any time horizon $T$ and number of actions $|\mathcal{A}|$ there exists a prior distribution over $\theta$ under which $\inf_{\pi} \E \left[{\rm Regret}(T, \pi )  \right] \geq c_{0} \sqrt{\log (|\A|) T}$ where $c_{0}$ is a numerical constant that does not depend on $|\A|$ or $T$. The bound here improves upon this worst case bound since $H(\alpha_{1})$ can be much smaller than $\log(|\A|)$ when the prior distribution is informative.

\subsubsection{Linear optimization under bandit feedback}\label{subsec: linear bandit}
The stochastic linear bandit problem has been widely studied (e.g. \cite{dani2008stochastic, rusmevichientong2010linearly, abbasi2011improved}) and is one of the most important examples of a multi-armed bandit problem with ``correlated arms.''  In this setting, each action is associated with a finite dimensional feature vector, and the mean reward generated by an action is the inner product between its known feature vector and some unknown parameter vector. Because of this structure, observations from taking one action allow the decision--maker to make inferences about other actions. The next proposition bounds the minimal information ratio for such problems. %Its proof is essentially a generalization of the proof of Proposition \ref{prop: worst case bound}.
\begin{prop}\label{prop: linear}
If $\A \subset \mathbb{R}^d$, $\Theta \subset \mathbb{R}^d$,  and
$\E\left[ R_{t,a}| \theta  \right] = a^T \theta$ for each action $a\in \A$,
 then  $\Psi_{t}(\pi_{t}^{\rm IDS}) \leq  d/2$ almost surely for all $t\in \mathbb{N}$.
\end{prop}

This result shows that $\E \left[{\rm Regret}(T, \pi^{\rm IDS})  \right] \leq \sqrt{\frac{1}{2} H(\alpha_{1})d T} \leq \sqrt{\frac{1}{2} \log(|\A|)d T}$ for linear bandit problems. \citet{dani2007price} again show this bound is order optimal in the sense that, for any time horizon $T$ and dimension $d$, when the action set is  $\A = \{0,1 \}^d$ there exists a prior distribution over $\theta$ such that
$\inf_{\pi} \E \left[{\rm Regret}(T, \pi )  \right] \geq c_{0} \sqrt{\log (|\A|) d T}$ where $c_{0}$ is a constant that is independent of $d$ and $T$. The bound here improves upon this worst case bound since $H(\alpha_{1})$ can be much smaller than $\log(|\A|)$ when the prior distribution in informative.

\subsubsection{Combinatorial action sets and ``semi-bandit'' feedback}\label{subsec: semi-bandit}

To motivate the information structure studied here, consider a simple resource allocation problem. There are $d$ possible projects, but the decision--maker can allocate resources to at most $m \leq d$ of them at a time. At time $t$, project $i \in \{1,..,d\}$ yields a random reward $X_{t,i}$, and the reward from selecting a subset of projects $a \in \A \subset \left\{ a' \subset \{0,1,\ldots,d \} :  |a' | \leq m \right\}$ is $m^{-1} \sum_{i \in \A} X_{t,i}$. In the linear bandit formulation of this problem, upon choosing a subset of projects $a$ the agent would only observe the overall reward $m^{-1} \sum_{i \in a} X_{t,i}$. It may be natural instead to assume that the outcome of each selected project $(X_{t,i} : i \in a)$ is observed. This type of observation structure is sometimes called ``semi-bandit'' feedback \cite{audibert2013regret}.

A naive application of Proposition \ref{prop: linear} to address this problem would show $\Psi_{t}^* \leq d/2$. The next proposition shows that since the entire parameter vector $(\theta_{t,i} : i \in a)$ is observed upon selecting action $a$,  we can provide an improved bound on the information ratio.

\begin{prop}\label{prop: semi bandit}
Suppose $\A \subset \left\{ a \subset \{0,1,\ldots,d \} :  |a | \leq m \right\}$, and that there are random variables $(X_{t,i}: t \in \mathbb{N}, i \in \{1,\ldots,d  \})$ such that
\begin{eqnarray*}
Y_{t,a} = \left( X_{t,i} : i \in a \right) &\text{and}& R_{t,a} =\frac{1}{m} \sum_{i\in a }X_{t,i}.
\end{eqnarray*}
Assume that the random variables $\{X_{t,i} : i \in \{1,\ldots,d \}\}$ are independent conditioned on $\hist$ and   $X_{t,i} \in [\frac{-1}{2},\frac{1}{2}]$ almost surely for each $(t,i)$. Then
for all $t\in \mathbb{N}$,  $\Psi_{t}(\pi_{t}^{\rm IDS}) \leq  \frac{d}{2m^2}$ almost surely.
\end{prop}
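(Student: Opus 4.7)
The plan is to bound the minimal information ratio by taking $\pi = \alpha_t$, the Thompson sampling distribution: since $\Psi_t^* \leq \Psi_t(\alpha_t)$, it suffices to show $\Psi_t(\alpha_t) \leq d/(2m^2)$. Write $\mu_{t,i} := \E[\theta_{t,i} \mid \hist]$ and $\mu_{t,i}(a^*) := \E[\theta_{t,i} \mid \hist, A^* = a^*]$. Because $R(Y_t(a)) = m^{-1}\sum_{i \in a}\theta_{t,i}$, a direct expansion via the tower property yields
$$\Delta_t(\alpha_t) \;=\; \frac{1}{m}\sum_{a^* \in \A}\alpha_t(a^*)\sum_{i \in a^*}\bigl(\mu_{t,i}(a^*) - \mu_{t,i}\bigr) \;=\; \frac{1}{m}\sum_{i=1}^d S_i,$$
where $S_i := \sum_{a^* : i \in a^*}\alpha_t(a^*)(\mu_{t,i}(a^*) - \mu_{t,i})$ and $p_i := \Prob(i \in A^* \mid \hist) = \sum_{a^* : i \in a^*}\alpha_t(a^*)$.

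For the numerator I will apply Cauchy--Schwarz twice. First, $\bigl(\sum_{i=1}^d S_i\bigr)^2 \leq d \sum_{i=1}^d S_i^2$; second, within each $S_i$, Cauchy--Schwarz with weights $\alpha_t(a^*)$ gives $S_i^2 \leq p_i \sum_{a^* : i \in a^*}\alpha_t(a^*)(\mu_{t,i}(a^*) - \mu_{t,i})^2$. Enlarging the inner sum to include all $a^* \in \A$ then produces
$$\Delta_t(\alpha_t)^2 \;\leq\; \frac{d}{m^2}\sum_{i=1}^d p_i \sum_{a^* \in \A}\alpha_t(a^*)\bigl(\mu_{t,i}(a^*) - \mu_{t,i}\bigr)^2.$$

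For the denominator I will exploit the assumed conditional independence of $\{\theta_{t,i}\}_{i=1}^d$ given $\hist$. Expanding $g_t(a) = I_t\!\left(A^*;(\theta_{t,i})_{i \in a}\right)$ as a difference of entropies, independence makes the joint entropy of $(\theta_{t,i})_{i\in a}$ split additively, while subadditivity of entropy controls the term conditioned on $A^*$, yielding the decoupling $g_t(a) \geq \sum_{i \in a} I_t(A^*; \theta_{t,i})$. Pinsker's inequality combined with $\theta_{t,i} \in [-1/2,1/2]$ gives $\DKL\bigl(\Prob(\theta_{t,i} \in \cdot \mid \hist, A^*\!=\!a^*) \,\|\, \Prob(\theta_{t,i} \in \cdot \mid \hist)\bigr) \geq 2\bigl(\mu_{t,i}(a^*) - \mu_{t,i}\bigr)^2$, so after averaging over $a^* \sim \alpha_t$ we obtain $I_t(A^*; \theta_{t,i}) \geq 2\sum_{a^*}\alpha_t(a^*)(\mu_{t,i}(a^*) - \mu_{t,i})^2$. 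Averaging $g_t(a)$ over $a \sim \alpha_t$ and swapping summations converts the indicator $\mathbf{1}\{i \in a\}$ into $p_i$, producing
$$g_t(\alpha_t) \;\geq\; 2\sum_{i=1}^d p_i \sum_{a^* \in \A}\alpha_t(a^*)\bigl(\mu_{t,i}(a^*) - \mu_{t,i}\bigr)^2.$$
Dividing the numerator bound by the denominator bound immediately gives $\Psi_t(\alpha_t) \leq d/(2m^2)$, and hence $\Psi_t^* \leq d/(2m^2)$ almost surely.

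The main obstacle is the mutual-information decoupling $g_t(a) \geq \sum_{i \in a} I_t(A^*; \theta_{t,i})$: the $\theta_{t,i}$ need not remain independent once one further conditions on $A^*$, so one cannot expect equality, and the argument must appeal carefully to unconditional independence on the marginal side and to subadditivity of conditional entropy on the other. Everything else is a routine combination of two Cauchy--Schwarz steps and a single scalar Pinsker estimate applied coordinatewise.
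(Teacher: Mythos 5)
Your proof is correct, and its first step -- reducing to Thompson sampling via $\Psi_t^* \leq \Psi_t(\alpha_t)$ -- is exactly the paper's route. The difference is that the paper stops there: its proof of this proposition simply cites Proposition 6 of \citet{russo2014info} for the bound $\Psi_t(\pi_t^{\rm TS}) \leq d/(2m^2)$, whereas you supply a self-contained derivation of that bound, which essentially reconstructs the cited argument: the coordinatewise decomposition $\Delta_t(\alpha_t) = \frac{1}{m}\sum_i S_i$, the decoupling $g_t(a) \geq \sum_{i\in a} I_t(A^*;\theta_{t,i})$ from conditional independence, the Pinsker-type estimate $\DKL \geq 2(\mu_{t,i}(a^*)-\mu_{t,i})^2$ (the paper's Fact \ref{fact: DME to DKL}, valid since $\theta_{t,i}$ ranges over an interval of length one), and the two Cauchy--Schwarz steps, which together give numerator $\leq \frac{d}{m^2}X$ and denominator $\geq 2X$ for the common quantity $X=\sum_i p_i \sum_{a^*}\alpha_t(a^*)(\mu_{t,i}(a^*)-\mu_{t,i})^2$. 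One small refinement: for the superadditivity step it is cleaner to argue via the chain rule, $I_t(A^*;(\theta_{t,i})_{i\in a}) = \sum_{i\in a} I_t(A^*;\theta_{t,i}\mid \theta_{t,j}, j\in a, j<i) \geq \sum_{i\in a} I_t(A^*;\theta_{t,i})$, where the inequality uses only the unconditional independence of the $\theta_{t,i}$ given $\hist$; this avoids appealing to (differential) entropies, which are awkward when the $\theta_{t,i}$ are not discrete. With that cosmetic fix, your argument is a complete proof of the proposition and in fact makes the paper self-contained at this point, at the cost of re-deriving material delegated to the companion analysis of Thompson sampling.
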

In this problem, there are as many as $\binom{d}{m}$ actions, but because IDS exploits the structure relating actions to one another, its regret is only polynomial in $m$ and $d$. In particular, combining Proposition \ref{prop: semi bandit} with Corollary \ref{cor: general regret bound} shows $\E \left[{\rm Regret}(T, \pi^{{\rm IDS}})  \right] \leq \frac{1}{m}\sqrt{\frac{d}{2} H(\alpha_{1})T }$. Since $H(\alpha_{1}) \leq \log |\mathcal{A}| = O(m \log (\frac{d}{m}))$ this also yields a bound of order $\sqrt{\frac{d}{m} \log\left(\frac{d}{m}\right)T}$. As shown by \citet{audibert2013regret}, the lower bound\footnote{In their formulation, the reward from selecting action $a$ is $\sum_{i\in a }X_{t,i},$ which is $m$ times larger than in our formulation. The lower bound stated in their paper is therefore of order $\sqrt{mdT}$. They don't provide a complete proof of their result, but note that it follows from standard lower bounds in the bandit literature. In the proof of Theorem 5 in that paper, they construct an example in which the decision maker plays $m$ bandit games in parallel, each with $d/m$ actions. Using that example, and the standard bandit lower bound (see Theorem  3.5 of \citet{bubeck2012regret}), the agent's regret from each component must be at least $\sqrt{\frac{d}{m} T}$, and hence her overall expected regret is lower bounded by a term of order $m\sqrt{\frac{d}{m} T}=\sqrt{mdT}$.} for this problem is of order $\sqrt{\frac{d}{m}T}$, so our bound is order optimal up to a $\sqrt{\log(\frac{d}{m})}$ factor.

\section{Computational methods}
\label{se:computation}

IDS offers an abstract design principle that captures some key qualitative properties of the Bayes-optimal solution while accommodating tractable computation
for many relevant problem classes.  However, additional work is required to design efficient computational methods that implement IDS for specific problem classes.  
In this section, we provide guidance and examples.

We will focus in this section on the problem of generating an action $A_t$ given the posterior distribution over $\theta$ at time $t$.  This sidesteps the problem of 
computing and representing a posterior distribution, which can present its own challenges.  
Though IDS could be combined with approximate Bayesian inference methods, we will focus here on the simpler context in which posterior distributions can be efficiently computed and stored, as is the case when working with tractable finite uncertainty sets or appropriately chosen conjugate priors. It is worth noting, however, that two of our algorithms approximate IDS using samples from the posterior distribution, and this may be feasible through the use of Markov chain Monte Carlo even in cases where the posterior distribution cannot be computed or even stored.

\subsection{Evaluating the information ratio}

Given a finite action set $\A = \{1,\ldots,K\}$, we can view an action distribution $\pi$ as a $K$-dimensional vector of probabilities.  The information ratio can then be written as
$$\Psi_t(\pi) = \frac{\left(\pi^\top \vec{\Delta}\right)^{2}}{\pi^\top \vec{g}},$$
where $\vec{\Delta}$ and $\vec{g}$ are $K$-dimensional vectors with components $\vec{\Delta}_k = \Delta_t(k)$ and $\vec{g}_k = g_t(k)$ for $k \in \A$.  
In this subsection, we discuss the computation of $\vec{\Delta}$ and $\vec{g}$ for use in evaluation of the information ratio.

There is no general efficient procedure for computing $\vec{\Delta}$ and $\vec{g}$ given a posterior distribution, because that would
require computing integrals over possibly high-dimensional spaces.  Such computation can often be carried out efficiently 
by leveraging the functional form of the specific posterior distribution and often require numerical integration.  
In order to illustrate the design of problem-specific computational procedures, we will present two simple examples in this subsection.

We begin with a conceptually simple model involving finite uncertainty sets.
\begin{example}
{\bf (finite sets)}
Consider a problem in which $\theta$ takes values in $\Theta = \{1,\ldots,L\}$, the action set is $\mathcal{A} = \{1,\ldots,K\}$, 
the observation set is $\mathcal{Y} = \{1,\ldots,N\}$, and the reward function $R:\mathcal{Y}\mapsto \mathbb{R}$ is arbitrary.
Let $p_1$ be the prior probability mass function of $\theta$ and let $q_{\theta,a}(y)$ be the probability, conditioned on $\theta$, of observing $y$ when action $a$ is selected.
\end{example}
\noindent Note that the posterior probability mass function $p_t$, conditioned on observations made prior to period $t$, can be computed recursively via Bayes' rule:
$$p_{t+1}(\theta) \leftarrow \frac{p_t(\theta) q_{\theta,A_t}(Y_{t,A_t})}{\sum_{\theta' \in \Theta} p_t(\theta')  q_{\theta',A_t}(Y_{t,A_t})}.$$
Given the posterior distribution $p_t$ along with the model parameters $(L, K, N, R, q)$, Algorithm \ref{alg: finiteIR} computes $\vec{\Delta}$ and $\vec{g}$.
Line 1 computes the optimal action for each value of $\theta$.  Line 2 calculates the probability that
each action is optimal.  Line 3 computes the marginal distribution of $Y_{1,a}$ and line 4 computes the joint probability mass function of $(A^*, Y_{1,a})$. Lines 5 and 6 use the aforementioned probabilities to compute $\vec{\Delta}$ and $\vec{g}$.

\begin{algorithm}[H]
\caption{$\text{finiteIR}(L, K, N, R, p, q)$}\label{alg: finiteIR}
\begin{algorithmic}[1]
\STATE $\Theta_a \leftarrow \{ \theta | a= \arg\max_{a'} \sum_y q_{\theta,a'}(y) R(y) \} \qquad \forall \theta$
\STATE $p(a^*) \leftarrow \sum_{\theta \in \Theta_{a^*}} p(\theta) \qquad \forall a^*$
\STATE $p_{a}(y) \leftarrow \sum_{\theta} p(\theta) q_{\theta,a}(y) \qquad \forall a,y,\theta$
\STATE $p_{a}(a^*, y) \leftarrow \frac{1}{p(a^*)}\sum_{\theta \in \Theta_{a^*}} q_{\theta,a}(y) \qquad \forall a,y,a^*$
\STATE $R^* \leftarrow  \sum_{a} \sum_{\theta \in \Theta_a}\sum_{y} p(\theta) q_{\theta,a}(y) R(y)$
\STATE $\vec{g}_a \leftarrow \sum_{a^*, y} p_{a}(a^*, y) \log \frac{p_{a}(a^*, y)}{p(a^*)p_{a}(y)}
\qquad \forall a$
\STATE $\vec{\Delta}_a \leftarrow R^* - \sum_\theta p(\theta) \sum_y q_{\theta,a}(y) R(y) \qquad \forall a$
\RETURN $\vec{\Delta}, \vec{g}$
\end{algorithmic}
\end{algorithm}

Next, we consider the beta-Bernoulli bandit.
\begin{example}\label{ex: beta-Bernoulli}
{\bf (beta-Bernoulli bandit)}
Consider a multi-armed bandit problem with  binary rewards: $\A=\{1,\ldots,K\}$, $\mathcal{Y} = \{0,1\}$, 
and $R(y) = y$.  Model parameters $\theta \in \mathbb{R}^K$ specify the mean reward $\theta_a$ of each action $a$.  Components
of $\theta$ are independent and each beta-distributed with prior parameters $\beta_1^1, \beta_1^2 \in \mathbb{R}^K_+$
\end{example}
\noindent Because the beta distribution is a conjugate prior for the Bernoulli distribution, the posterior distribution of each $\theta_a$ is a beta distribution. The posterior parameters 
$\beta^1_{t,a}, \beta^2_{t,a} \in \mathbb{R}_+$ can be computed recursively:
$$(\beta^1_{t+1,a}, \beta^2_{t+1,a}) \leftarrow \left\{\begin{array}{ll}
(\beta^1_{t,a} + Y_{t,a}, \beta^2_{t,a} + (1-Y_{t,a})) \qquad &\text{if } A_t = a \\
(\beta^1_{t,a}, \beta^2_{t,a}) \qquad &\text{otherwise.}
\end{array}\right.$$
Given the posterior parameters $(\beta_t^1,\beta_t^2)$, Algorithm \ref{alg: betaBernoulliIR} computes $\vec{\Delta}$ and $\vec{g}$.

Line 5 of the algorithm computes the posterior probability mass function of  $A^*$.  It is easy to derive the expression used:
\begin{eqnarray*}
\mathbb{P}_{t}(A^* = a)
&=& \mathbb{P}_{t}\left( \bigcap_{a' \neq a}  \{\theta_{a'} \leq \theta_{a} \}   \right) \\
&=& \int_{0}^{1} f_a(x) \mathbb{P}_t\left( \bigcap_{a' \neq a}  \{  \theta_{a'} \leq x \} \bigg\vert \theta_a = x   \right) dx \\
&=&  \int_{0}^{1} f_a(x) \left( \prod_{a' \neq a} F_{a'}(x) \right)dx \\
&=& \int_{0}^{1} \left[\frac{f_a(x)}{F_a(x)}\right] \overline{F}(x)dx,
\end{eqnarray*}
where $f_a$, $F_a$, and $\overline{F}$ are defined as in lines 1-3 of the algorithm, with arguments $(K, \beta_t^1,\beta_t^2)$.
Using expressions that can be derived in a similar manner, for each pair of actions Lines 6-7 compute
$M_{a'|a}:=\E_t\left[\theta_{a'} | \theta_a = \max_{a''} \theta_{a''}\right]$, the expected value of $\theta_{a'}$ given that action $a$ is optimal.
Lines 8-9 computes the expected reward of the optimal action 
$\rho^* = \E_t\left[\max_a \theta_a\right]$ and uses that to compute, for each action, 
$$\vec{\Delta}_a = \E_t\left[ \max_{a'} \theta_a - \theta_a \right] = \rho^* - \frac{\beta^1_{t,a}}{ (\beta^1_{t,a}+\beta^2_{t,a})}.$$
Finally, line 10 computes $\vec{g}$.  The expression makes use of the following fact, which is a consequence of standard properties of mutual 
information\footnote{Some details related to the derivation of this fact when $Y_{t,a}$ is a general random variable can be found in the appendix of \citet{russo2016info}.}:
\begin{equation}\label{eq: info gain as expected divergence}
I_{t}(A^*; Y_{t,a})= \sum_{a^*\in \A}\Prob_{t}(A^*=a^*) \DKL \left(\Prob_{t}(Y_{t,a}=\cdot \vert A^*=a^*) \,||\,  \Prob_{t}(Y_{t,a}=\cdot )  \right).
\end{equation}
That is, the mutual information between $A^*$ and $Y_{t,a}$ is the expected Kullback-Leibler divergence between the posterior predictive distribution $\Prob_{t}(Y_{t,a}=\cdot)$ and the predictive distribution conditioned on the identity of the optimal action $\Prob_{t}(Y_{t,a}=\cdot \vert A^*=a^*)$.  For our beta-Bernoulli model,
the information gain $\vec{g}_a$ is the expected Kullback-Leibler divergence between a Bernoulli distribution with mean 
$M_{a|A^*}$ and the posterior distribution at action $a$, which is Bernoulli with parameter $\beta^1_{t,a}/ (\beta^1_{t,a}+\beta^2_{t,a})$.

Algorithm \ref{alg: betaBernoulliIR}, as we have presented it, is somewhat abstract and can not readily be implemented on a computer.
In particular, lines 1-4 require computing and storing functions of a continuous variable and several lines require integration of continuous functions.
However, near-exact approximations can be efficiently generated by evaluating integrands at discrete 
grid of points $\{x^{1},\ldots,x^{n}\} \subset [0,1]$.  The values of $f_a(x), F_a(x), G_a(x)$ and $\overline{F}(x)$ can be computed and stored for each 
value in this grid.  The compute time can also be reduced via memoization, since values change only for 
one action per time period.  The compute time of such an implementation scales with 
$K^2 n$ where $K$ is the number of actions and $n$ is the number of points used in the discretization of $[0,1]$.
The bottleneck is Line 7.

\begin{algorithm}[H]
\caption{$\text{betaBernoulliIR}(K, \beta^1,\beta^2)$}\label{alg: betaBernoulliIR}
\label{algo: beta}
\begin{algorithmic}[1]
\STATE{$f_a(x) \leftarrow {\rm beta.pdf}(x \vert  \beta^1_a, \beta^2_a) \qquad \forall a,x$}
\STATE{$F_a(x) \leftarrow {\rm beta.cdf}(x \vert  \beta^1_a, \beta^2_a) \qquad \forall a,x$}
\STATE{$\overline{F}(x) \leftarrow \prod_a F_a(x) \qquad \forall x$}
\STATE{$G_a(x) \leftarrow \intop_{0}^{x} y f_a(y)dy \qquad \forall a,x$}
\STATE{$p^*(a) \leftarrow \intop_{0}^{1} \left[\frac{f_a(x)}{F_a(x)}\right] \overline{F}(x)dx \qquad \forall a$}
\STATE{$M_{a|a} \leftarrow \frac{1}{p^*(a)}\intop_{0}^{1} \left[\frac{x f_a(x)}{F_a(x)}\right] \overline{F}(x)dx \qquad \forall a$}
\STATE{$M_{a'|a} \leftarrow  \frac{1}{p^*(a)} \intop_{0}^{1} \left[\frac{f_a(x) \overline{F}(x)}{F_a(x) F_{a'}(x)}\right]  G_{a'}(x) dx \qquad \forall a,a'\neq a$}
\STATE{$\rho^* \leftarrow \sum_{a} p^*(a) M_{a|a}$}
\STATE{$\vec{\Delta}_a \leftarrow \rho^*- \frac{\beta_a^1}{\beta_a^1+\beta_a^2} \qquad \forall a$}
\STATE{$\vec{g}_a \leftarrow \sum_{a'} p^*(a') \left(M_{a|a'} \log \left(M_{a|a'} (\beta^1_a+\beta^2_a) / \beta^1_a \right)+(1-M_{a|a'}) \log \left((1-M_{a|a'})(\beta^1_a+\beta^2_a) / \beta^2_a\right)\right) \quad \forall a$}
\RETURN $\vec{\Delta}, \vec{g}$
\end{algorithmic}
\end{algorithm}

\subsection{Optimizing the information ratio}

Let us now discuss how to generate an action given $\vec{\Delta}$ and $\vec{g} \neq 0$.
If $\vec{g}=0$, the optimal action is known with certainty, and therefore the action selection problem is trivial.  Otherwise, 
IDS selects an action by solving
\begin{equation}
\label{eq: objective function}
\min_{\pi \in \mathcal{S}_K} \frac{\left(\pi^\top \vec{\Delta}\right)^{2}}{\pi^\top \vec{g}}
\end{equation}
where $\mathcal{S}_K = \{\pi \in \mathbb{R}^K_+: \sum_k \pi_k = 1\}$ is the $K$-dimensional unit simplex, and samples from the resulting distribution $\pi$.

The following result establishes that \eqref{eq: objective function} is a convex optimization problem and, surprisingly, has an optimal solution with at most two non-zero components. Therefore, while IDS is a randomized policy, it suffices to randomize over two actions.
\begin{prop}\label{prop: support at most 2}
For all $\vec{\Delta}, \vec{g} \in \mathbb{R}^K_+$ such that $\vec{g} \neq 0$,
the function $\pi \mapsto \left(\pi^\top \vec{\Delta}\right)^{2}/ \pi^\top \vec{g}$ is convex on $\left\{\pi \in \mathbb{R}^K : \pi^\top \vec{g} > 0  \right\}$.
Moreover, this function is minimized over $\mathcal{S}_K$ by some $\pi^*$ for which $|\{k : \pi^*_k >0 \}| \leq 2$.
\end{prop}

Algorithm \ref{alg: chooseAction} leverages Proposition \ref{prop: support at most 2} to efficiently choose an action in a manner that minimizes \eqref{eq: definition of ratio policy}.  The algorithm takes as input $\vec{\Delta} \in \mathbb{R}^{K}_+$ and $\vec{g} \in \mathbb{R}^{K}_+$, which provide the expected regret and information gain of each action. The sampling distribution that minimizes \eqref{eq: definition of ratio policy} is computed by iterating over all pairs of actions $(a,a')\in \A \times \A$, and for each, computing the probability $q$ that minimizes the information ratio among distributions that sample $a$ with probability $q$ and $a'$ with probability $1-q$.  This one-dimensional optimization problem requires little computation since the objective is convex; $q$ can be computed by solving for the first-order necessary condition or approximated by a bisection method.  The compute time of this algorithm scales with $K^2$.

%\begin{figure}
%\algsetup{indent=2em}
\begin{algorithm}[H]
\caption{IDSAction($K, \vec{\Delta}, \vec{g}$)}\label{alg: chooseAction}
\begin{algorithmic}[1]
\STATE{$q_{a,a'} \leftarrow \arg\min_{q' \in [0,1]} \left[q' \vec{\Delta}_a +(1-q') \vec{\Delta}_{a'}\right]^2 / \left[ q' \vec{g}_a +(1-q') \vec{g}_{a'} \right]  \qquad \forall a < K, a' > a$}
\STATE{$(a^*,a^{**}) \leftarrow \arg\min_{a < K,a' > a} \left[q_{a,a'} \vec{\Delta}_a +(1-q_{a,a'}) \vec{\Delta}_{a'}\right]^2 / \left[q_{a,a'} \vec{g}_a +(1-q_{a,a'}) \vec{g}_{a'} \right]$}
\STATE{Sample $b \sim {\rm Bernoulli}(q_{a^*, a^{**}})$}
\RETURN{$b a^* + (1-b) a^{**}$}
\end{algorithmic}
\end{algorithm}
%\end{figure}

\subsection{Approximating the information ratio}\label{subsec: approximating the information ratio}

Though reasonably efficient algorithms can be devised to implement IDS for various problem classes, some applications, such as those arising in high-throughput web services, 
call for extremely fast computation.  As such, it is worth considering approximations to the information ratio that retain salient features while
enabling faster computation.  In this section, we discuss some useful approximation concepts.

The dominant source of complexity in computing  $\vec{\Delta}$ and $\vec{g}$ is in the calculation of requisite integrals,
which can require integration over high-dimensional spaces.
One approach to addressing this challenge is to replace integrals with sample-based estimates.  
Algorithm \ref{alg: SampleIR} does this.  In addition to the number of actions $K$ and routines for evaluation $q$ and $R$, the algorithm takes as 
input $M$ representative samples of $\theta$.  In the simplest use scenario, these would be independent samples drawn from the posterior distribution.
The steps correspond to those of Algorithm \ref{alg: finiteIR}, but with the set of possible models approximated by the set of representative samples. For many problems, even when exact computation
of $\vec{\Delta}$ and $\vec{g}$ is intractable due to required integration over high-dimensional spaces, Algorithm \ref{alg: finiteIR} can generate
close approximations from a moderate number of samples $M$.

\begin{algorithm}[H]
\caption{$\text{SampleIR}(K, q, R, M, \theta^1,\ldots,\theta^M)$}\label{alg: SampleIR}
\begin{algorithmic}[1]
\STATE $\hat{\Theta}_a \leftarrow \{m | a= \arg\max_{a'} \sum_{y} q_{\theta^m, a'}(y)R(y) \}$
\STATE $\hat{p}(a^*) \leftarrow |\hat{\Theta}_{a^*}|/M \qquad \forall a^*$
\STATE $\hat{p}_{a}(y) \leftarrow \sum_{m} q_{a,\theta^m}(y) /M \qquad \forall y$
\STATE $\hat{p}_{a}(a^*, y) \leftarrow \sum_{m\in \Theta_a} q_{a,\theta^m}(y) /M \qquad \forall a^*, y$
\STATE $\hat{R}^* \leftarrow\sum_{a,y} \hat{p}_{a}(a, y)R(y)$
\STATE $\vec{g}_a \leftarrow \sum_{a^*, y} \hat{p}_{a}(a^*, y) \log \frac{\hat{p}_{a}(a^*, y)}{\hat{p}(a^*)\hat{p}_{a}(y)}
\qquad \forall a$
\STATE $\vec{\Delta}_a \leftarrow R^* - M^{-1}\sum_{m} \sum_y q_{\theta^m,a}(y) R(y) \qquad \forall a$
\RETURN $\vec{\Delta}, \vec{g}$
\end{algorithmic}
\end{algorithm}

The information ratio is designed to effectively address indirect information, cumulating information, and irrelevant information, for a very broad class
of learning problems.  It can sometimes be helpful to replace the information ratio with alternative information measures that adequately address these
issues for more specialized classes of problems.  As an example, we will introduce the variance-based information ratio,
which is suitable for some problems with bandit feedback, satisfies our regret bounds for such problems, and can facilitate design of more efficient numerical methods.

To motivate the variance-based information ratio, note that when rewards our bounded, with $R(y)\in [0,1]$ for all $y$, 
our information measure term is lower-bounded according to
\begin{eqnarray*}
g_{t}(a) 
&=&I_{t}(A^*; Y_{t,a}) \\
&=& \sum_{a^*\in \A}\Prob_{t}(A^*=a^*) \DKL \left(\Prob_{t}(Y_{t,a}=\cdot \vert A^*=a^*) \,||\,  \Prob_{t}(Y_{t,a}=\cdot )  \right)   \\
&\geq & \sum_{a^*\in \A}\Prob_{t}(A^*=a^*) \DKL \left(\Prob_{t}(R_{t,a}=\cdot \vert A^*=a^*) \,||\,  \Prob_{t}(R_{t,a}=\cdot )  \right) \nonumber \\
&\overset{(a)}{\geq} & 2\sum_{a^*\in \A}\Prob_{t}(A^*=a^*)( \E_{t}[R_{t,a}|A^*=a^*] -\E_{t}[R_{t,a}])^2 \nonumber\\
&=& 2\E_{t}[(\E_{t}[R_{t,a}|A^*] -\E_{t}[R_{t,a}])^2] \nonumber \\
&=& 2{\rm Var}_{t}( \E_{t}[R_{t,a}|A^*]),
\end{eqnarray*}
where ${\rm Var}_{t}(X) = \E_{t}[(X- \E_{t}[X])^2]$ denotes the variance of $X$ under the posterior distribution.  Inequality (a) is a simple corollary of Pinsker's inequality, and is given as Fact 9 in \citet{russo2016info}. Let $v_t(a) :=  {\rm Var}_{t}( \E_{t}[R_{t,a}|A^*])$, 
which represents the variance of the conditional expectation $\E_{t}[R_{t,a}|A^*]$ under the posterior distribution. 
This measures how much the expected reward generated by action $a$ varies depending on the identity of the optimal action $A^*$. 
The above lower bound on mutual information indicates that actions with high variance $v_t(a)$ must yield substantial information about which action is optimal.
It is natural to consider an approximation to IDS that uses a variance-based information ratio:
$$\min_{\pi \in \mathcal{S}_K} \frac{\left(\pi^\top \vec{\Delta}\right)^2}{\pi^\top \vec{v}},$$
where $\vec{v}_a = v_t(a)$.

While variance-based IDS will not minimize the information ratio, the next proposition establishes that it satisfies the bounds 
on the information ratio given by Propositions \ref{prop: worst case bound} and \ref{prop: linear}.  
\begin{prop}\label{prop: information ratio bounds for v-ids}
Suppose $\sup_{y} R(y) - \inf_{y} R(y) \leq 1$  and
\[
\pi_{t} \in \arg \min _{\pi \in \mathcal{S}_K} \frac{\Delta_{t}(\pi)^2}{v_{t}(\pi)}.
\] 
Then $\Psi_{t}(\pi_t) \leq |\A|/2$. Moreover, if  $\A \subset \mathbb{R}^d$, $\Theta \subset \mathbb{R}^d$,  and
$\E\left[ R_{t,a}| \theta  \right] = a^T \theta$ for each action $a\in \A$, then $\Psi_{t}(\pi_t) \leq d/2$.
\end{prop}

We now consider a couple examples that illustrate computation of $\vec{v}$ and benefits of using this approximation.  Our first example
is the independent Gaussian bandit problem.
\begin{example}
{\bf (independent Gaussian bandit)}
Consider a multi-armed bandit problem with $\A=\{1,\ldots,K\}$, $\mathcal{Y} = \mathbb{R}$, and $R(y) = y$.  
Model parameters $\theta \in \mathbb{R}^K$ specify the mean reward $\theta_a$ of each action $a$.
Components of $\theta$ are independent and Gaussian-distributed, with prior means $\mu_1 \in \mathbb{R}^K$ and covariances $\sigma^2_1 \in \mathbb{R}^{K}$.
When an action $A_t$ is applied, the observation $Y_t$ is drawn independently from $N(\theta_{A_t},\eta^2)$.
\end{example}
\noindent The posterior distribution of $\theta$ is Gaussian, with independent components.  Parameters can be computed recursively according to
\begin{eqnarray*}
\mu_{t+1, a} &\leftarrow& \left\{\begin{array}{ll}
\left(\frac{\mu_{t,a}}{\sigma_{t,a}^2} + \frac{Y_{t,a}}{\eta^2}\right)/\left(\frac{1}{\sigma_{t,a}^2} + \frac{1}{\eta^2}\right) \qquad & \text{if } A_t = a \\
\mu_{t, a} \qquad & \text{otherwise}.
\end{array}\right.   \\    
\sigma_{t+1, a}  &\leftarrow& \left\{\begin{array}{ll}
\left(\frac{1}{\sigma_{t,a}^2} + \frac{1}{\eta^2}\right)^{-1} \qquad & \text{if } A_t = a \\
\sigma_{t, a} \qquad & \text{otherwise}.
\end{array}\right.    
\end{eqnarray*}
Given arguments $(K, \mu_t, \sigma_t)$, Algorithm \ref{alg: gaussianVIR} computes $\vec{\Delta}$ and $\vec{v}$ for the independent Gaussian bandit problem.
Note that this algorithm is very similar to Algorithm \ref{alg: betaBernoulliIR}, which was designed for the beta-Bernoulli bandit. 
One difference is that Algorithm \ref{alg: gaussianVIR} computes the variance-based information measure.
In addition, the Gaussian distribution exhibits special structure that simplifies the computation of 
$M_{a'|a} := \mathbb{E}_t\left[\theta_{a'} | \theta_a = \max_{a''} \theta_{a''} \right].$
In particular, the computation of $M_{a'|a}$ uses the following closed form expression for the expected value of a truncated Gaussian distribution with mean $\tilde{\mu}$
and variance $\tilde{\sigma}^2$:
$$\E\left[X \vert X \leq x \right] = \tilde{\mu} - \tilde{\sigma} \phi\left( \frac{x-\tilde{\mu}}{\tilde{\sigma}} \right) / \Phi\left( \frac{x-\tilde{\mu}}{\tilde{\sigma}} \right)=\tilde{\mu} - \tilde{\sigma}^2 f(x) /F(x),$$
where $X \sim N(\tilde{\mu}, \tilde{\sigma}^2)$ and $f$ and $F$ are the probability density and cumulative distribution functions.

The analogous calculation that would be required to compute the standard information ratio is more complex.

\begin{algorithm}[H]
\caption{$\text{independentGaussianVIR}(K, \mu, \sigma)$}\label{alg: gaussianVIR}
\begin{algorithmic}[1]
\STATE{$f_a(x) \leftarrow {\rm Gaussian.pdf}(x \vert  \mu_a, \sigma^2_a) \qquad \forall a,x$}
\STATE{$F_a(x) \leftarrow {\rm Gaussian.cdf}(x \vert  \mu_a, \sigma^2_a) \qquad \forall a,x$}
\STATE{$\overline{F}(x) \leftarrow \prod_a F_a(x) \qquad \forall x$}
\STATE{$p^*(a) \leftarrow \intop_{0}^{1} \left[\frac{f_a(x)}{F_a(x)}\right] \overline{F}(x)dx \qquad \forall a$}
\STATE{$M_{a|a} \leftarrow \frac{1}{p^*(a)}\int_{-\infty}^{\infty} \left[\frac{x f_a(x)}{F_a(x)}\right] \overline{F}(x)dx \qquad \forall a$}
\STATE{$M_{a'|a} \leftarrow \mu_{a'} - \frac{\sigma_{a'}^2}{p^*(a)} \int_{-\infty}^{\infty} \left[\frac{f_a(x) f_{a'}(x)}{F_a(x) F_{a'}(x)}\right]  \overline{F}(x) dx \qquad \forall a,a'\neq a$}
\STATE{$\rho^* \leftarrow \sum_{a} p^*(a) M_{a|a}$}
\STATE{$\Delta_a \leftarrow \rho^*- \mu_a \qquad \forall a$}
\STATE{$v_a \leftarrow \sum_{a'} p^*(a') \left(M_{a|a'} - \mu_a \right)^2 \qquad \forall a$}
\RETURN $\vec{\Delta}, \vec{v}$
\end{algorithmic}
\end{algorithm}

We next consider the linear bandit problem.
\begin{example}
{\bf (linear bandit)}
Consider a multi-armed bandit problem with $\A=\{1,\ldots,K\}$, $\mathcal{Y} = \mathbb{R}$, and $R(y) = y$.  
Model parameters $\theta \in \mathbb{R}^K$ are drawn from a Gaussian prior with mean $\mu_1$ and covariance matrix $\Sigma_1$.
There is a known matrix $\Phi=[\Phi_{1},\cdots, \Phi_{K}] \in \mathbb{R}^{d\times K}$ such that, when an action $A_t$ is applied, the observation $Y_{t,A_t}$
is drawn independently from $N(\Phi_{A_t} \theta, \eta^2)$, where $\Phi_{A_t}$ denotes the $A_t$th column of $\Phi$.
\end{example}
\noindent The posterior distribution of $\theta$ is Gaussian and can be computed recursively:
$$\mu_{t+1} = (\Sigma_t^{-1} + \Phi_{A_t}\Phi_{A_t}^{\top}   /\eta^2)^{-1} (\Sigma_t^{-1} \mu_t + Y_{t,A_t} \Phi_{A_t} /\eta^2)$$
$$\Sigma_{t+1} = (\Sigma_t^{-1} + \Phi_{A_t}\Phi_{A_t}^{\top}/\eta^2)^{-1}.$$

We will develop an algorithm that leverages the fact that, for the linear bandit, $v_t(a)$ takes on a particularly simple form:
\begin{eqnarray*}
v_t(a)
&=& {\rm Var}_{t}( \E_{t}[R_{t,a}|A^*]) \\
&=& {\rm Var}_{t}( \E_{t}[\Phi_a^{\top} \theta |A^*]) \\
&=& {\rm Var}_{t}(\Phi_a^{\top} \E_{t}[\theta |A^*]) \\
&=& \Phi_a^\top \E_t[(\mu_t^{A^*} - \mu_t) (\mu_t^{A^*} - \mu_t)^\top] \Phi_a \\
&=& \Phi_a^\top L_t \Phi_a,
\end{eqnarray*}  
where $\mu_t^a = \E_{t}[\theta |A^*=a]$ and $L_t = \E_t[(\mu_t^{A^*} - \mu_t) (\mu_t^{A^*} - \mu_t)^\top]$.
Algorithm \ref{alg: linearVIR} presents a sample-based approach to computing $\vec{\Delta}$ and $\vec{v}$.  In addition to model dimensions
$K$ and $d$ and the problem data matrix $\Phi$, the algorithm takes as input $M$ representative values of $\theta$, which in the simplest use scenario, would be
independent samples drawn from the posterior distribution $N(\mu_t,\Sigma_t)$.
The algorithm approximates posterior means $\mu_t$ and $\mu^a_t$ as well as $L_t$ by averaging suitable expressions
over these samples.  Due to the quadratic structure of $v_t(a)$, these calculations are substantially simpler than those
that would be carried out by Algorithm \ref{alg: SampleIR}, specialized to this context.

\begin{algorithm}[H]
\caption{$\text{linearSampleVIR}(K,d,M,\theta^1,\ldots,\theta^M)$}\label{alg: linearVIR}
\label{alg: linearIDS}
\begin{algorithmic}[1]
\STATE $\hat{\mu} \leftarrow \sum_m \theta^m / M$
\STATE $\hat{\Theta}_a \leftarrow \{m: (\Phi^{\top} \theta^m)_a = \max_{a'} (\Phi \theta^m)_{a'}\} \qquad \forall a$
\STATE $\hat{p}^*(a) \leftarrow |\hat{\Theta}_a| / M \qquad \forall a$
\STATE $\hat{\mu}^a \leftarrow \sum_{\theta \in \hat{\Theta}_a} \theta / |\hat{\Theta}_a| \qquad \forall a$
\STATE{$\hat{L} \leftarrow \sum_a \hat{p}^*(a) \left(\hat{\mu}^a - \hat{\mu} \right)\left(\hat{\mu}^a - \hat{\mu} \right)^\top$}
\STATE{$\rho^* \leftarrow \sum_a \hat{p}^*(a) \Phi_a^{\top} \hat{\mu}^a$}
\STATE{$\vec{v}_a \leftarrow \Phi_a^\top \hat{L} \Phi_a^{\top} \qquad \forall a$}
\STATE{$\vec{\Delta}_a \leftarrow \rho^* - \Phi_a^{\top} \hat{\mu} \qquad \forall a$}
\RETURN $\vec{\Delta}, \vec{v}$
\end{algorithmic}
\end{algorithm}

It is interesting to note that Algorithms \ref{alg: SampleIR} and \ref{alg: linearVIR} do not rely on any special structure in the posterior distribution.
Indeed, these algorithms should prove effective regardless of the form taken by the posterior.
This points to a broader opportunity to use IDS or approximations to address complex models for which posteriors can not be
efficiently computed or even stored, but for which it is possible to generate posterior samples via Markov chain Monte Carlo methods.
We leave this as a future research opportunity.

\section{Computational results}

This section presents computational results from experiments that evaluate the effectiveness of information-directed sampling in comparison to alternative algorithms.  In Section 
\ref{sebsec: alternatives}, we showed that alternative approaches like UCB algorithms, Thompson sampling, and the knowledge gradient algorithm can perform very poorly when faced with complicated information structures and for this reason can be dramatically outperformed by IDS. In this section, we focus instead on simpler settings where current approaches are extremely effective. We find that even for these simple and widely studied settings, information-directed sampling displays state-of-the-art performance.  For each experiment, the algorithm used to implement IDS is presented in the previous section.

IDS, Thompson sampling (TS), and some UCB algorithms, do not take the horizon $T$ as input, and are instead designed to work well for all sufficiently long horizons. Other algorithms we simulate were optimized for the particular horizon of the simulation trial. The KG and KG* algorithms in particular, treat the simulation horizon as known, and explore less aggressively in later periods. We have tried to clearly delineate which algorithms are optimized for simulation horizon. We believe one can also design variants of IDS, TS, and UCB algorithms that reduce exploration as the time remaining diminishes, but leave this for future work. 

\subsection{Beta-Bernoulli bandit}\label{subsec: Bernoulli experiment}
Our first experiment involves a multi-armed bandit problem with independent arms and binary rewards. The mean reward of each arm is drawn from ${\rm Beta}(1,1)$, which is the uniform distribution, and the means of separate arms are independent. Figure \ref{fig: bernoulli_regret_plot} and Table \ref{table: bernoulli} present the results of 1000 independent trials of an experiment with 10 arms and a time horizon of 1000.  We compared the performance of IDS to that of six other algorithms, and found that it had the lowest average regret of 18.0.

The UCB1 algorithm of \citet{auer2002finite} selects the action $a$ which maximizes the upper confidence bound $ \hat{\theta}_{t}(a)+ \sqrt{2 \log(t) / N_{t}(a)}$ where $\hat{\theta}_{t}(a)$ is the empirical average reward from samples of action $a$ and $N_{t}(a)$ is the number of samples of action $a$ up to time $t$. The average regret of this algorithm is 130.7, which is dramatically larger than that of IDS. For this reason UCB1 is omitted from Figure \ref{fig: bernoulli_regret_plot}.

The confidence bounds of UCB1 are constructed to facilitate theoretical analysis. For practical performance \citet{auer2002finite} proposed using an algorithm called UCB-Tuned. This algorithm selects the action $a$ which maximizes the upper confidence bound $ \hat{\theta}_{t}(a)+ \sqrt{\min\{1/4 \, , \,  \overline{V}_{t}(a) \}\log(t) / N_{t}(a)}$, where $ \overline{V}_{t}(a)$ is an upper bound on the variance of the reward distribution at action $a$. While this method dramatically outperforms UCB1, it is still outperformed by IDS. The MOSS algorithm of \citet{audibert2009minimax} is similar to UCB1 and UCB--Tuned, but uses slightly different confidence bounds. It is known to satisfy regret bounds  for this problem that are minimax optimal up to a numerical constant factor.

In previous numerical experiments \cite{scott2010modern, kaufmann2012thompson, kaufmann2012bayesian, chapelle2011empirical}, Thompson sampling and Bayes UCB exhibited state-of-the-art performance for this problem. Each also satisfies strong theoretical guarantees, and is known to be asymptotically optimal in the sense defined by \citet{lai1985asymptotically}. Unsurprisingly, they are the closest competitors to IDS. The Bayes UCB algorithm, studied in \citet{kaufmann2012bayesian}, constructs upper confidence bounds based on the quantiles of the posterior distribution: at time step $t$ the upper confidence bound  at an action is the $1-\frac{1}{t}$ quantile of the posterior distribution of that action\footnote{Their theoretical guarantees require choosing a somewhat higher quantile, but the authors suggest choosing this quantile, and use it in their own numerical experiments.}.

A somewhat different approach is the knowledge gradient (KG) policy of \citet{powell2012optimal}, which uses a one-step lookahead approximation to the value of information to guide experimentation. For reasons described in Section \ref{subsec: expected improvement}, KG does not explore sufficiently to identify the optimal arm in this problem, and therefore its regret grows linearly with time. Because KG explores very little, its realized regret is highly variable, as depicted in Table \ref{table: bernoulli}. In 200 out of the 2000 trials, the regret of KG was lower than .7, reflecting that the best arm was almost always chosen. In the worst 200 out of the 2000 trials, the regret of KG was larger than 159. 

KG is particularly poorly suited to problems with discrete observations and long time horizons. The KG* heuristic of \citet{ryzhov2010robustness} offers much better performance in some of these problems. At time $t$, KG* calculates the value of sampling an arm 
for $M \in \{1,..,T-t\}$ periods and choosing the arm with the highest posterior mean in subsequent periods. It selects an action by maximizing this quantity over all possible arms and possible exploration lengths $M$. Our simulations require computing $T=1,000$ decisions per trial, and a direct implementation of KG* requires order $T^3$ basic operations per decision. To enable efficient simulation, we use a heuristic approach to computing KG* proposed by \citet{kaminski2015refined}. The approximate KG* algorithm we implement uses golden section search to maximize a non-concave function, but is still empirically effective. 

Finally, as demonstrated in Figure \ref{fig: bernoulli_regret_plot},  variance-based IDS offers performance very similar to standard IDS for this problem.

It is worth pointing out that, although Gittins' indices characterize the Bayes optimal policy for infinite horizon discounted problems, the finite horizon formulation considered here is computationally intractable \cite{gittins2011multi}. A similar index policy \cite{nino2011computing} designed for finite horizon problems could be applied as a heuristic in this setting. However, with long time horizons, the associated computational requirements become onerous.

\begin{table}[]
	\centering
	\begin{tabular}{l|l|l|l|l|l|l|l|l|l|}
		\cline{2-10}
		\textbf{} & \multicolumn{6}{c|}{\textbf{Time Horizon Agnostic}} & \multicolumn{3}{c|}{\textbf{\begin{tabular}[c]{@{}l@{}}Optimized For \\ Time Horizon\end{tabular}}} \\ \hline
		\multicolumn{1}{|l|}{\textbf{Algorithm}} & \textbf{IDS} & \textbf{V-IDS} & \textbf{TS} & \textbf{\begin{tabular}[c]{@{}l@{}}Bayes\\ UCB\end{tabular}} & \textbf{UCB1} & \textbf{\begin{tabular}[c]{@{}l@{}}UCB-\\ Tuned\end{tabular}} & \textbf{MOSS} & \textbf{KG} & \textbf{KG*} \\ \hline
		\multicolumn{1}{|l|}{Mean Regret} & 18.0 & 18.1 & 28.1 & 22.8 & 130.7 & 36.3 & 46.7 & 51.0 & 18.4 \\ \hline
		\multicolumn{1}{|l|}{Standard Error} & 0.4 & 0.4 & 0.3 & 0.3 & 0.4 & 0.3 & 0.2 & 1.5 & 0.6 \\ \hline
		\multicolumn{1}{|l|}{Quantile .10} & 3.6 & 5.2 & 13.6 & 8.5 & 104.2 & 24.0 & 36.2 & 0.7 & 2.9 \\ \hline
		\multicolumn{1}{|l|}{Quantile .25} & 7.4 & 8.1 & 18.0 & 12.5 & 117.6 & 29.2 & 40.0 & 2.9 & 5.4 \\ \hline
		\multicolumn{1}{|l|}{Quantile .50} & 13.3 & 13.5 & 25.3 & 20.1 & 131.6 & 35.2 & 45.2 & 11.9 & 8.7 \\ \hline
		\multicolumn{1}{|l|}{Quantile .75} & 22.5 & 22.3 & 35.0 & 30.6 & 144.8 & 41.9 & 51.0 & 82.3 & 16.3 \\ \hline
		\multicolumn{1}{|l|}{Quantile .90} & 35.6 & 36.5 & 46.4 & 40.5 & 154.9 & 49.5 & 57.9 & 159.0 & 46.9 \\ \hline
		\multicolumn{1}{|l|}{Quantile .95} & 51.9 & 48.8 & 53.9 & 47.0 & 160.4 & 54.9 & 64.3 & 204.2 & 76.6 \\ \hline
	\end{tabular}
		\caption{Realized regret over 2000 trials in Bernoulli experiment}
		\label{table: bernoulli}
\end{table}

\begin{figure}[h!]
\centering
\begin{subfigure}{.49\textwidth}
  \centering
  \includegraphics[width=1\linewidth]{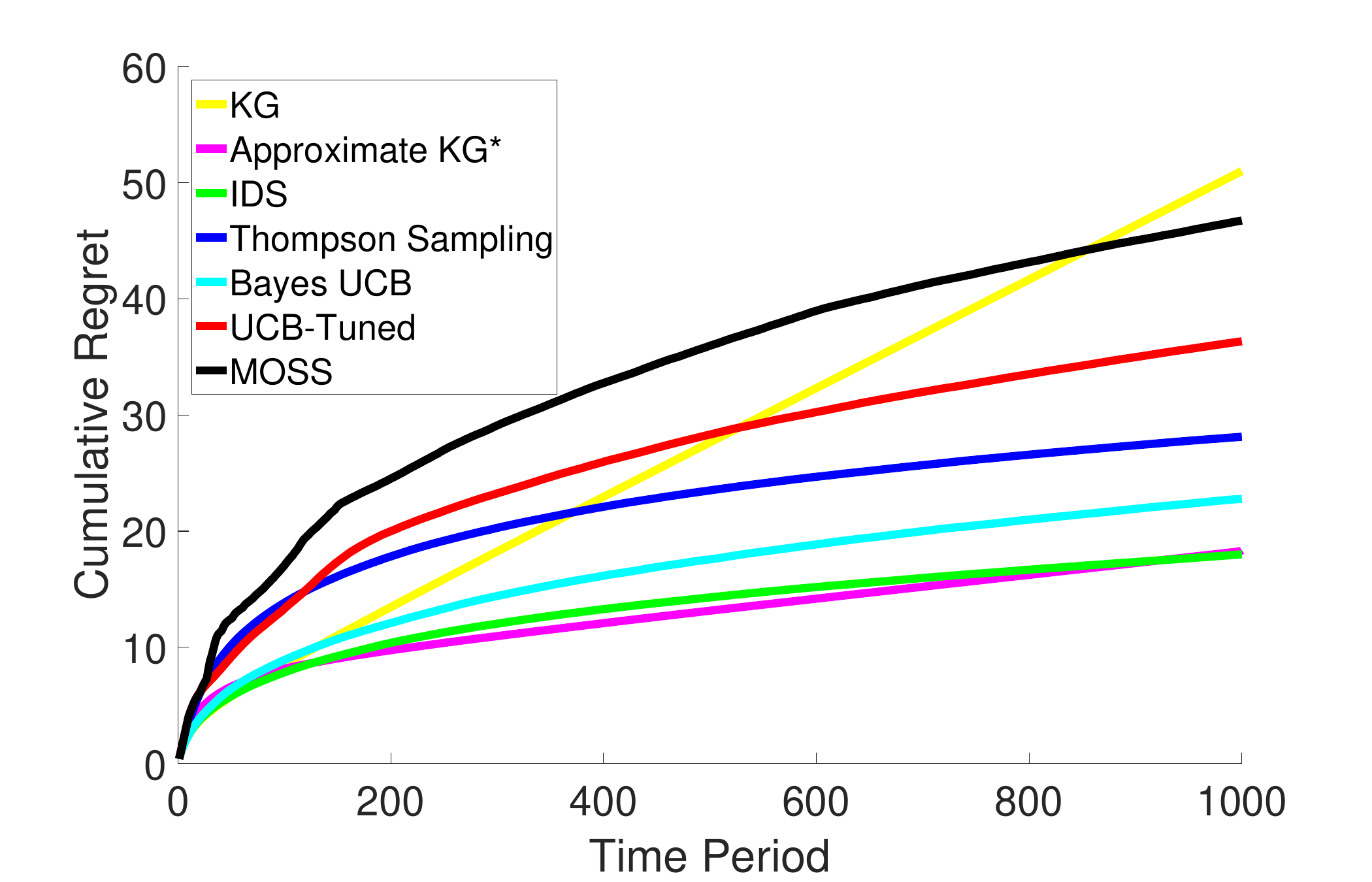}
\caption{Binary rewards}
  %\caption{Cumulative regret for a problem with $10$ arms, binary rewards, and an independent uniform prior on the mean of each arm. Results are averaged across 1000 trials.}
  \label{fig: bernoulli_regret_plot}
\end{subfigure}
\begin{subfigure}{.49\textwidth}
  \centering
  \includegraphics[width=1\linewidth]{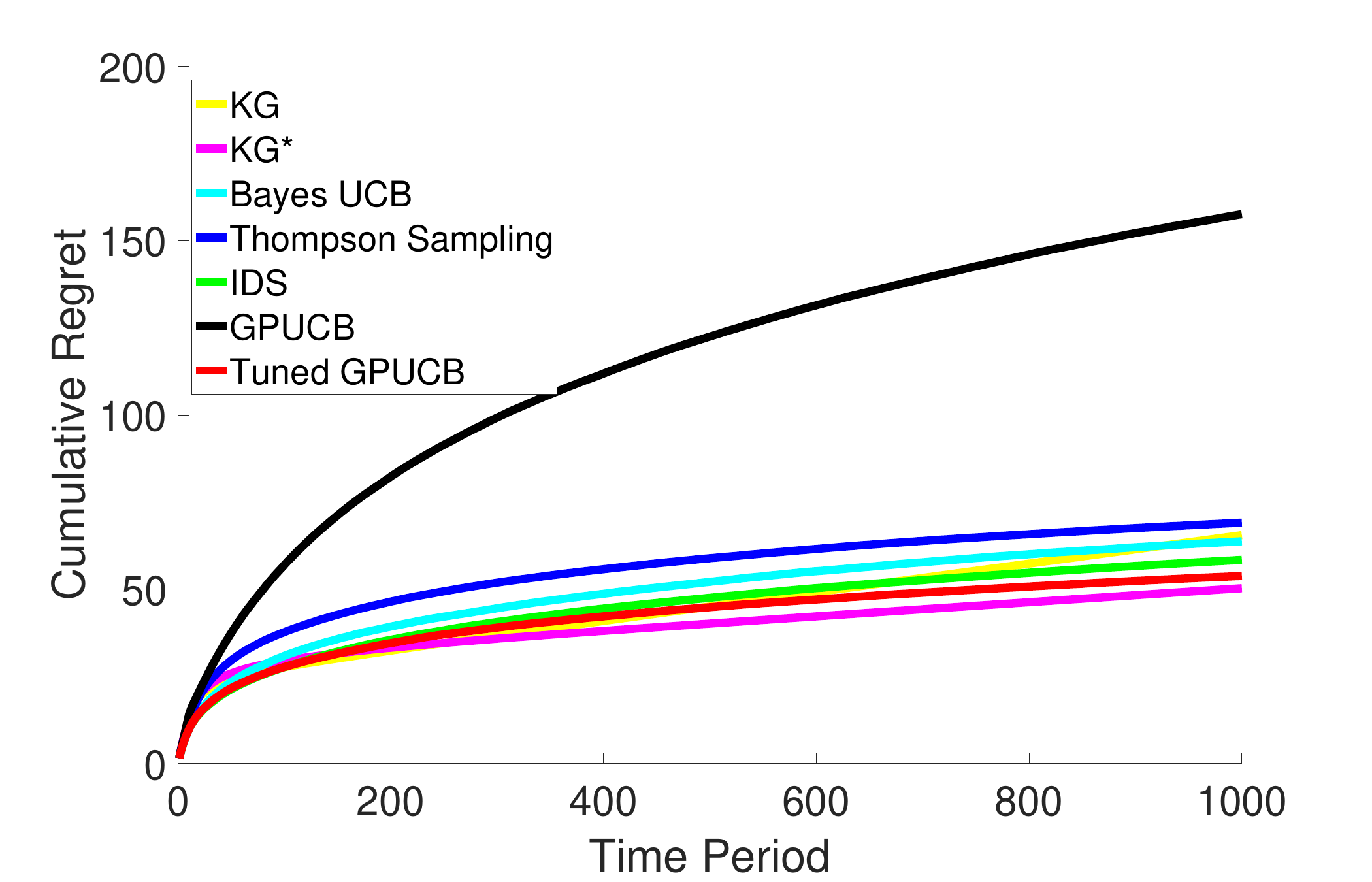}
  \caption{Gaussian rewards}
  \label{fig: gaussian_regret_plot}
\end{subfigure}
\caption{Average cumulative regret over 1000 trials}
%\label{fig:polytope}
\end{figure}

%\begin{figure}
%\includegraphics[width=7in]{bernoulli_regret_plot.pdf}
%\caption{Cumulative regret for a problem with $10$ arms, binary rewards, and an independent uniform prior on the mean of each arm. Results are averaged across 1000 trials. }\label{fig: bernoulli_regret_plot}
%\end{figure}

\subsection{Independent Gaussian bandit}\label{subsec: independent normal experiment}

Our second experiment treats a different multi-armed bandit problem with independent arms. The reward value at each action $a$ follows a Gaussian distribution $N( \theta_{a}, 1)$. The mean $\theta_{a}\sim N(0,1)$ is drawn from a Gaussian prior, and the means of different reward distributions are drawn independently. We ran 2000 simulation trials of a problem with 10 arms.  The results are displayed in Figure \ref{fig: gaussian_regret_plot} and Table \ref{table: Gaussian}. 

For this problem, we compare variance-based IDS against Thompson sampling, Bayes UCB, and KG. We use the variance-based variant of IDS because it affords us computational advantages.

%\begin{table}
%\centering
%\begin{tabular}{|c|>{\centering}p{.6 in}|>{\raggedright}m{.6in}|>{\raggedright}p{.6 in}|>{\centering}p{.6 in}|>{\centering}p{.6 in}|>{\centering}p{.65 in}| >{\centering}p{.7 in}|}
%\hline
%\multirow{2}{*}{\textbf{Algorithm}} & \multirow{2}{.6 in}{\textbf{KG}} & \multirow{2}{.6in}{\textbf{Bayes UCB}} & \multirow{2}{.6 in}{\textbf{Thomp-son }} & \multirow{2}{.6 in}{\textbf{V-IDS}} & \multirow{2}{.6 in}{\textbf{GPUCB}}& \multirow{2}{.65 in}{\textbf{GPUCB--Tuned}}\tabularnewline
% &  &  &  &  & & \tabularnewline
%\hline
%Mean Regret  &57.53 &62.271 &66.836 &53.66 &155.95 &52.711 \tabularnewline
%\hline
%Standard error &3.4395 &0.96969 &0.96614 &1.6605 &1.2142 &1.5906\tabularnewline
%\hline
%quantile .1 &16.199 &34.282 &37.005 &23.356 &106.47 &23.387\tabularnewline
%\hline
%quantile .25 &20.431 &42.589 &47.483 &28.471 &130.28 &29.529
%\tabularnewline
%\hline
%quantile .5 &25.474 &56.828 &61.082 &38.051 &155.53 &40.922
%\tabularnewline
%\hline
%quantile .75 &35.465 &73.431 &79.867 &55.198 &180.8 &59.563 \tabularnewline
%\hline
%quantile .9 &120.53 &98.383 &99.93 &96.904 &205.49 &85.585\tabularnewline
%\hline
%quantile .95 &279.03 &111.34 &120.66 &148.30 &220.48 &124.99\tabularnewline
%\hline
%\end{tabular}
%\caption{Realized regret over 1000 trials  in independent Gaussian experiment}
%\label{table: Gaussian}
%\end{table}

\begin{table}[]
	\centering
	\begin{tabular}{l|l|l|l|l|l|l|l|}
		\cline{2-8}
		& \multicolumn{4}{c|}{\textbf{Time Horizon Agnostic}} & \multicolumn{3}{c|}{\textbf{\begin{tabular}[c]{@{}l@{}}Optimized For \\ Time Horizon\end{tabular}}} \\ \hline
		\multicolumn{1}{|l|}{\textbf{Algorithm}} & \textbf{V-IDS} & \textbf{TS} & \textbf{\begin{tabular}[c]{@{}l@{}}Bayes\\ UCB\end{tabular}} & \textbf{GPUCB} & \textbf{\begin{tabular}[c]{@{}l@{}}Tuned\\ GPUCB\end{tabular}} & \textbf{KG} & \textbf{KG*} \\ \hline
		\multicolumn{1}{|l|}{Mean Regret} & 58.4 & 69.1 & 63.8 & 157.6 & 53.8 & 65.5 & 50.3 \\ \hline
		\multicolumn{1}{|l|}{Standard Error} & 1.7 & 0.8 & 0.7 & 0.9 & 1.4 & 2.9 & 1.9 \\ \hline
		\multicolumn{1}{|l|}{Quantile .10} & 24.0 & 39.2 & 34.7 & 108.2 & 24.2 & 16.7 & 19.4 \\ \hline
		\multicolumn{1}{|l|}{Quantile .25} & 30.3 & 47.6 & 43.2 & 130.0 & 30.1 & 20.8 & 24.0 \\ \hline
		\multicolumn{1}{|l|}{Quantile .50} & 39.2 & 61.8 & 57.5 & 156.5 & 41.0 & 25.9 & 29.9 \\ \hline
		\multicolumn{1}{|l|}{Quantile .75} & 56.3 & 80.6 & 76.5 & 184.2 & 58.9 & 36.4 & 40.3 \\ \hline
		\multicolumn{1}{|l|}{Quantile .90} & 104.6 & 104.5 & 97.5 & 207.2 & 86.1 & 155.3 & 74.7 \\ \hline
		\multicolumn{1}{|l|}{Quantile .95} & 158.1 & 126.5 & 116.7 & 222.7 & 112.2 & 283.9 & 155.6 \\ \hline
	\end{tabular}
\caption{Realized regret over 2000 trials  in independent Gaussian experiment}
\label{table: Gaussian}
\end{table}

\begin{table}[]
	\centering
	\begin{tabular}{|l|l|l|l|l|l|l|l|l|l|l|}
		\hline
		\textbf{Time Horizon $T$} & \textbf{10} & \textbf{25} & \textbf{50} & \textbf{75} & \textbf{100} & \textbf{250} & \textbf{500} & \textbf{750} & \textbf{1000} & \textbf{2000} \\ \hline
		\textbf{Regret of V-IDS} & 9.8 & 16.1 & 21.1 & 24.5 & 27.3 & 36.7 & 48.2 & 52.8 & 58.3 & 68.4 \\ \hline
		\textbf{Regret of KG$(T)$} & 9.2 & 15.3 & 20.5 & 22.9 & 25.4 & 35.2 & 45.3 & 52.3 & 62.9 & 80.0 \\ \hline
	\end{tabular}
\caption{ Competitive performance without knowing the time horizon. Average cumulative regret over 2000 trials in the independent Gaussian experiment.}
\label{table: varying time horizon}
\end{table}

We also simulated the GPUCB of \citet{srinivas2012information}. This algorithm maximizes the upper confidence bound $\mu_{t}(a)+\sqrt{\beta_{t}}\sigma_{t}(a)$ where $\mu_{t}(a)$ and $\sigma_{t}(a)$ are the posterior mean and standard deviation of $\theta_{a}$. They provide regret bounds that hold with probability at least $1-\delta$ when $\beta_{t} = 2 \log\left( |\A| t^2 \pi^{2} / 6\delta \right).$ This value of $\beta_{t}$ is far too large for practical performance, at least in this problem setting. The average regret of GPUCB\footnote{We set $\delta =0$ in the definition of $\beta_{t}$, as this choice leads to a lower value of $\beta_{t}$ and stronger performance.} is 157.6, which is roughly almost three times that of V-IDS. For this reason, we considered a tuned version of GPUCB that sets $\beta_{t} = c\log(t)$. We ran 1000 trials of many different values of $c$ to find the value $c=.9$ with the lowest average regret for this problem. This tuned version of GPUCB had average regret of 53.8, which is slight better than IDS.

The work on knowledge gradient (KG) focuses almost entirely on problems with Gaussian reward distributions and Gaussian priors. We find KG performs better in this experiment than it did in the Bernoulli setting, and its average regret is competitive with that of IDS.

As in the Bernoulli setting, KG's realized regret is highly variable. The median regret of KG is the lowest of any algorithm, but in 100 of the 2000 trials its regret exceeded 283  -- seemingly reflecting that the algorithm did not explore enough to identify the best action. The KG* heuristic explores more aggressively, and performs very well in this experiment.  

KG is particularly effective over short time spans. Unlike information-directed sampling, KG takes the time horizon $T$ as an input, and explores less aggressively when there are fewer time periods remaining. Table \ref{table: varying time horizon} compares the regret of KG and IDS over different time horizons. Even though IDS does not take the time horizon into account, 
it is competitive with KG, even over short horizons. We believe that IDS can be modified to exploit fixed and known time horizons more effectively, though we leave the matter for future research.

\subsection{Asymptotic optimality}\label{subsec: asymptotic}
The previous subsections present numerical examples in which IDS outperforms Bayes UCB and Thompson sampling for some problems with independent arms. This is surprising since each of these algorithms is known, in a sense we will soon formalize, to be asymptotically optimal for these problems. This section presents simulation results over a much longer time horizon that suggest IDS scales in the same asymptotically optimal way.

We consider again a problem with binary rewards and independent actions. The action $a_{i} \in \{a_1,\ldots,a_{K}\}$ yields in each time period  a reward that is  1 with probability $\theta_i$ and 0 otherwise. The seminal work of
\citet{lai1985asymptotically} provides the following asymptotic lower bound on regret of any policy $\pi$:
$$\underset{T \rightarrow \infty}{\lim \inf} \, \frac{\E \left[  {\rm Regret}(T, \pi) \vert \theta  \right]}{\log T} \geq \sum_{a\neq A^*} \frac{\theta_{A^*}-\theta_{a} }{ \DKL(\theta_{A^*} \, || \, \theta_{a} ) } := c(\theta).$$
Note that we have conditioned on the parameter vector $\theta$, indicating that this is a frequentist lower bound. Nevertheless, when applied with an independent uniform prior over $\theta$, both Bayes UCB and Thompson sampling are known to attain this lower bound \cite{kaufmann2012bayesian, kaufmann2012thompson}.

Our next numerical experiment fixes a problem with three actions and with $\theta = (.3,.2,.1).$ We compare algorithms over a 10,000 time periods. Due to the expense of running this experiment, we were only able to execute 200 independent trials. Each algorithm uses a uniform prior over $\theta$. Our results, along with the asymptotic lower bound of $c(\theta) \log(T)$, are presented in Figure \ref{fig: lai robbins plot}.
\begin{wrapfigure}{r}{0.5\textwidth}
	\begin{center}
		\includegraphics[width=0.48\textwidth]{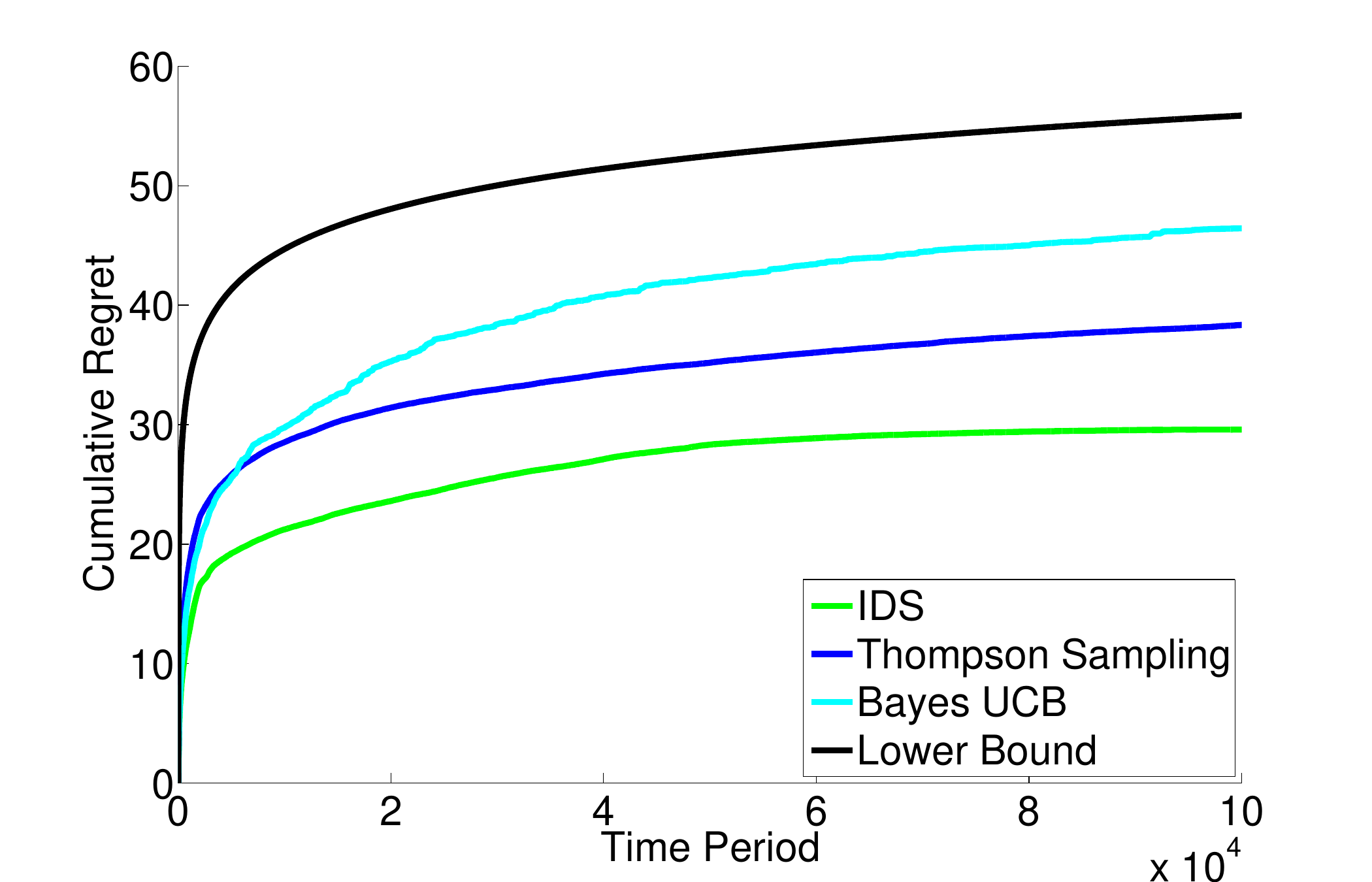}
	\end{center}
	\caption{Cumulative regret over 200 trials.}
	\label{fig: lai robbins plot}
\end{wrapfigure}

\subsection{Linear bandit problems}
\label{subsec: linear experiment}

Our final numerical experiment treats a linear bandit problem. Each action $a\in \mathbb{R}^5$ is defined by a 5 dimensional feature vector. The reward of action $a$ at time $t$ is $a^T \theta + \epsilon_{t}$ where
$\theta \sim N(0, 10I)$ is drawn from a multivariate Gaussian prior distribution, and $\epsilon_{t} \sim N(0,1)$ is independent Gaussian noise. In each period, only the reward of the selected action is observed. In our experiment, the action set $\A$ contains 30 actions, each with features drawn uniformly at random from $[-1/\sqrt{5}, 1/\sqrt{5}]$. The results displayed in Figure \ref{fig: linear} and Table \ref{table: linear KG varying time horizon} compare regret 
across 2,000 independent trials.

We simulate variance-based IDS using the implementation presented in Algorithm \ref{alg: linearIDS}. 
We compare its regret to six competing algorithms. Like IDS, GP-UCB and Thompson sampling satisfy strong regret bounds for this problem\footnote{Regret analysis of GP-UCB can be found in \cite{srinivas2012information}. Regret bounds for Thompson sampling can be found in \cite{agrawal2013linear, russo2014learning, russo2016info}}. Both algorithms are significantly outperformed by IDS. 

We also include Bayes UCB \cite{kaufmann2012bayesian} and a version of GP-UCB that was tuned, as in Subsection \ref{subsec: independent normal experiment},  to minimize its average regret. Each of these displays performance that is competitive with that of  IDS. These algorithms are heuristics, in the sense that the way their confidence bounds are constructed differ significantly from those of linear UCB algorithms that are known to satisfy theoretical guarantees.

\begin{wrapfigure}{r}{0.5\textwidth}
	\begin{center}
		\includegraphics[width=0.43\textwidth]{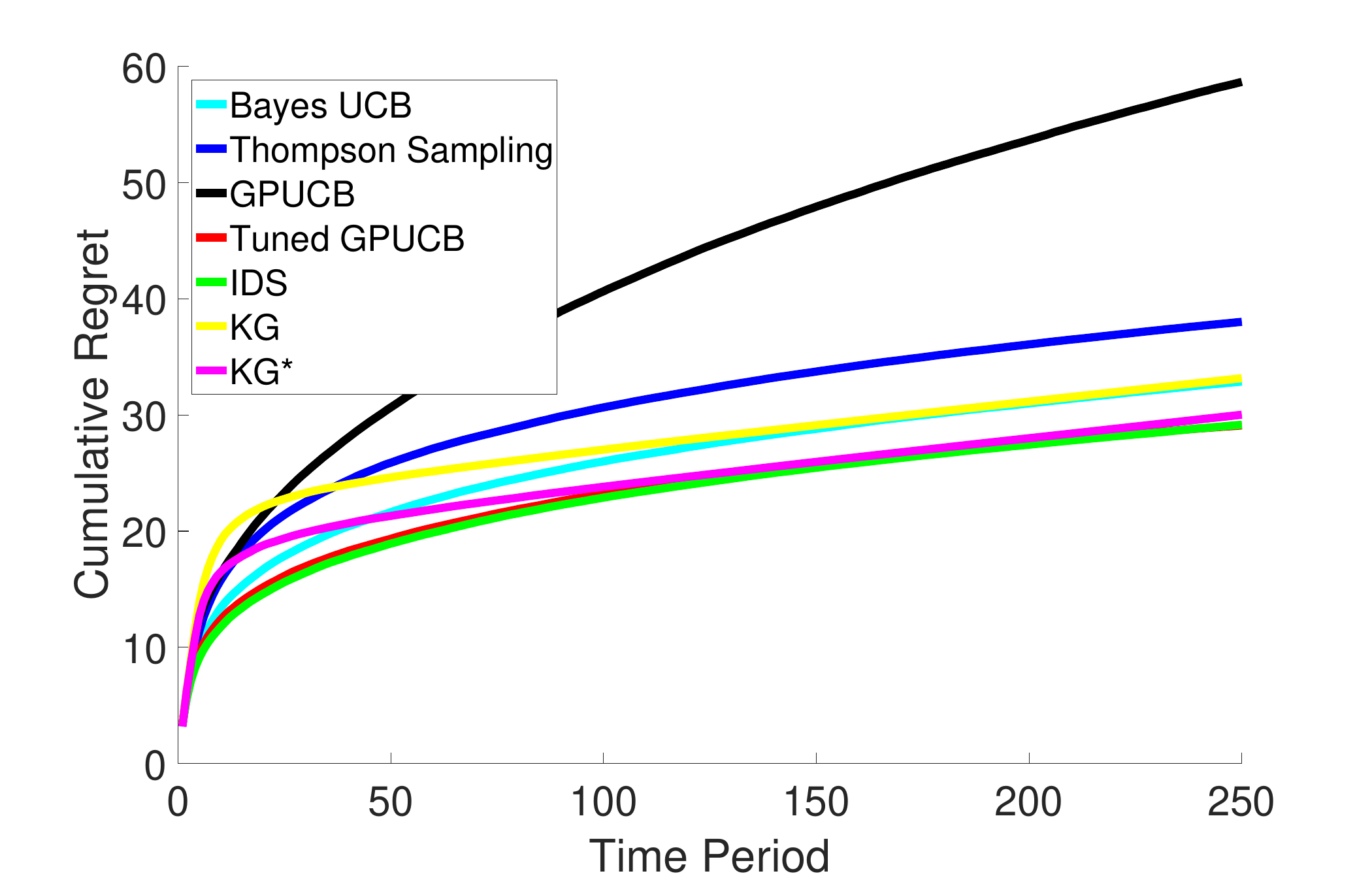}
	\end{center}
	\caption{Regret in linear--Gaussian model.}
	\label{fig: linear}
\end{wrapfigure}
As discussed in Subsection \ref{subsec: independent normal experiment}, unlike IDS, KG takes the time horizon $T$ as an input, and explores less aggressively when there are fewer time periods remaining. Table \ref{table: linear KG varying time horizon} compares IDS to KG over several different time horizons. Even though IDS does not exploit knowledge of the time horizon, it is competitive with KG over short time horizons.

In this experiment, KG* appears to offer a small improvement over standard KG, but as shown in the next subsection, it is much more computationally burdensome. To save computational resources, we have only executed 500 independent trails of the KG* algorithm.

\begin{table}[]
	\centering
	\begin{tabular}{l|l|l|l|l|l|l|l|}
		\cline{2-8}
		& \multicolumn{4}{c|}{\textbf{Time Horizon Agnostic}} & \multicolumn{3}{c|}{\textbf{\begin{tabular}[c]{@{}c@{}}Optimized For \\ Time Horizon\end{tabular}}} \\ \hline
		\multicolumn{1}{|l|}{\textbf{Algorithm}} & \textbf{V-IDS} & \textbf{TS} & \textbf{\begin{tabular}[c]{@{}l@{}}Bayes\\ UCB\end{tabular}} & \textbf{GPUCB} & \textbf{\begin{tabular}[c]{@{}l@{}}Tuned\\ GPUCB\end{tabular}} & \textbf{KG} & \textbf{KG*} \\ \hline
		\multicolumn{1}{|l|}{Mean Regret} & 29.2 & 38.0 & 32.9 & 58.7 & 29.1 & 33.2 & 30.0 \\ \hline
		\multicolumn{1}{|l|}{Standard Error} & 0.5 & 0.4 & 0.4 & 0.3 & 0.4 & 0.7 & 1.4 \\ \hline
		\multicolumn{1}{|l|}{Quantile .10} & 13.0 & 22.6 & 18.9 & 41.3 & 14.5 & 12.7 & 11.9 \\ \hline
		\multicolumn{1}{|l|}{Quantile .25} & 17.6 & 27.6 & 23.1 & 48.9 & 18.4 & 17.5 & 16.1 \\ \hline
		\multicolumn{1}{|l|}{Quantile .50} & 23.2 & 34.3 & 29.2 & 57.9 & 24.0 & 24.1 & 20.6 \\ \hline
		\multicolumn{1}{|l|}{Quantile .75} & 32.1 & 43.7 & 39.0 & 67.4 & 32.9 & 34.5 & 28.5 \\ \hline
		\multicolumn{1}{|l|}{Quantile .90} & 49.5 & 56.5 & 48.7 & 77.1 & 46.6 & 60.9 & 55.6 \\ \hline
		\multicolumn{1}{|l|}{Quantile .95} & 67.5 & 67.5 & 58.4 & 82.7 & 59.9 & 94.5 & 96.1 \\ \hline
	\end{tabular}
	\caption{Realized regret over 2000 trials  in linear experiment. KG* results are over 500 trails.}
	\label{my-label}
\end{table}

\begin{table}[]
	\centering
	
	\begin{tabular}{|l|l|l|l|l|l|l|l|}
		\hline
		\textbf{Time Horizon $T$} & \textbf{10} & \textbf{25} & \textbf{50} & \textbf{75} & \textbf{100} & \textbf{250} & \textbf{500} \\ \hline
		\textbf{Regret of V-IDS} & 11.8 & 16.2 & 19.6 & 21.6 & 23.3 & 31.1 & 34.7 \\ \hline
		\textbf{Regret of KG$(T)$} & 11.1 & 15.1 & 19.0 & 22.5 & 24.1 & 34.4 & 43.0 \\ \hline
	\end{tabular}
	\caption{Competitive performance without knowing the time horizon Average cumulative regret over 2000 trials in linear Gaussian experiment.}
	\label{table: linear KG varying time horizon}
\end{table}

\subsection{Runtime Comparison}
We now compare the time required to compute decisions using the algorithms we have applied.
In our experiments, Thompson sampling and UCB algorithms are extremely fast, sometimes requiring only a few microseconds to reach a decision. As expected, our implementation of IDS requires significantly more compute time. However, IDS often reaches a decision in only a small fraction of second, which is tolerable in many application areas. In addition, IDS may be accelerated considerably via parallel processing or an optimized implementation. 

The results for KG are mixed. For independent Gaussian models,  certain integrals can be computed via closed form expressions, allowing KG to execute quickly. There is also a specialized numerical procedure for implementing KG for correlated (or linear)  Gaussian models, but computation is an order of magnitude slower than in the independent case. For correlated Gaussian models, the KG* policy is much slower than both KG and IDS. For beta-Bernoulli problems, KG can be computed very easily, but yields poor performance. A direct implementation of the KG* policy was too slow to simulate, and so we have used a heuristic approach presented in \cite{kaminski2015refined}, which uses golden section search to maximize a function that is not necessarily unimodal. This method is labeled ``Approx KG*'' in Table \ref{table: Bernoulli time}.

Table \ref{table: Bernoulli time} displays results for the Bernoulli experiment described in Subsection \ref{subsec: Bernoulli experiment}. It shows the average time required to compute a decision in a 1000 period problem with $10, 30,50$ and $70$ arms. IDS was implementing using Algorithm  \ref{algo: beta} to evaluate the information ratio, and Algorithm \ref{alg: chooseAction}  to optimize it. The numerical integrals in Algorithm \ref{algo: beta} were approximated using quadrature with 1000 equally spaced points. Table \ref{table: Gaussian time} presents results of the corresponding experiment in the Gaussian case. Finally, Table \ref{table: linear time} displays results for the linear bandit experiments described in Subsection \ref{subsec: linear experiment}, which make use of Algorithm 6 and Markov chain Monte Carlo sampling with $M=10,000$ samples.  The table provides the average time required to compute a decision in a 250 period problem.

\begin{table}[]
	\centering
	\begin{tabular}{|l|l|l|l|l|l|l|l|}
		\hline
		\textbf{Arms} & \textbf{IDS} & \textbf{V-IDS} & \textbf{TS} & \textbf{Bayes UCB} & \textbf{UCB1} & \textbf{KG} & \textbf{Approx KG*} \\ \hline
		10 & 0.011013 & 0.01059 & 0.000025 & 0.000126 & 0.000008 & 0.000036 & 0.074618 \\ \hline
		30 & 0.047021 & 0.047529 & 0.000023 & 0.000147 & 0.000005 & 0.000017 & 0.215145 \\ \hline
		50 & 0.104328 & 0.10203 & 0.000024 & 0.000176 & 0.000005 & 0.000017 & 0.358505 \\ \hline
		70 & 0.18556 & 0.178689 & 0.000028 & 0.000167 & 0.000005 & 0.000017 & 0.494455 \\ \hline
	\end{tabular}
		\caption{Bernoulli Experiment: Compute time per-decision in seconds.}
		\label{table: Bernoulli time}
\end{table}

\begin{table}[]
	\centering
	\begin{tabular}{|l|l|l|l|l|l|l|}
		\hline
		\textbf{Arms} & \textbf{V-IDS} & \textbf{TS} & \textbf{Bayes UCB} & \textbf{GPUCB} & \textbf{KG} & \textbf{KG*} \\ \hline
		10 & 0.00298 & 0.000008 & 0.00002 & 0.00001 & 0.000146 & 0.001188 \\ \hline
		30 & 0.012597 & 0.000005 & 0.000009 & 0.000005 & 0.000097 & 0.003157 \\ \hline
		50 & 0.023084 & 0.000006 & 0.000009 & 0.000005 & 0.000094 & 0.005146 \\ \hline
		70 & 0.03913 & 0.000006 & 0.000009 & 0.000005 & 0.000098 & 0.006364 \\ \hline
	\end{tabular}
	\caption{Independent Gaussian Experiment: Compute time per-decision in seconds.}
	\label{table: Gaussian time}
\end{table}

\begin{table}[]
	\centering
	\begin{tabular}{|l|l|l|l|l|l|l|l|}
		\hline
		\textbf{Arms} & \textbf{Dimension} & \textbf{V-IDS} & \textbf{TS} & \textbf{Bayes UCB} & \textbf{GPUCB} & \textbf{KG} & \textbf{KG*} \\ \hline
		15 & 3 & 0.004305 & 0.000178 & 0.000139 & 0.000048 & 0.002709 & 0.311935 \\ \hline
		30 & 5 & 0.008635 & 0.000064 & 0.000048 & 0.000038 & 0.004789 & 0.589998 \\ \hline
		50 & 20 & 0.026222 & 0.000077 & 0.000083 & 0.000068 & 0.008356 & 1.051552 \\ \hline
		100 & 30 & 0.079659 & 0.000115 & 0.000148 & 0.00013 & 0.017034 & 2.067123 \\ \hline
	\end{tabular}
	\caption{Linear Gaussian Experiment: Compute time per-decision in seconds.}
	\label{table: linear time}
\end{table}

\section{Conclusion}
This paper has proposed information-directed sampling  -- a new algorithm for online optimization problems in which a decision maker must learn from partial feedback. We establish a general regret bound for the algorithm, and specialize this bound to several widely studied problem classes. We show that it sometimes greatly outperforms other popular approaches, which don't carefully measure the information provided by sampling actions. Finally, for some simple and widely studied classes of multi-armed bandit problems we demonstrate simulation performance surpassing popular approaches.

Many important open questions remain, however. IDS solves a single-period optimization problem as a proxy to an intractable multi-period problem.  Solution of this single-period problem can itself be computationally demanding, especially in cases where the number of actions is enormous or mutual information is difficult to evaluate.  An important direction for future research concerns the development of computationally elegant procedures to implement IDS in important cases. Even when the algorithm cannot be directly implemented, however, one may hope to develop simple algorithms that capture its main benefits. Proposition \ref{prop: regret bound average information ratio} shows that any algorithm with small information ratio satisfies strong regret bounds. Thompson sampling is a simple algorithm that, we conjecture, sometimes has nearly minimal information ratio. Perhaps simple schemes with small information ratio could be developed for other important problem classes, like the sparse linear bandit problem.

In addition to computational considerations, a number of statistical questions remain open. One question raised is whether IDS attains the lower bound of \citet{lai1985asymptotically} for some bandit problems with independent arms. Beyond the empirical evidence presented in Subsection \ref{subsec: asymptotic}, there are some theoretical reasons to conjecture this is true. Next, a more precise understanding of problem's {\it information complexity} remains an important open question for the field. Our regret bound depends on the problem's information complexity through a term we call the information ratio, but it's unclear if or when this is the right measure. Finally, it may be possible to derive lower bounds using the same information theoretic style of argument used in the derivation of our upper bounds.

\section{Extensions}
This section presents a  number of ways in which the results and ideas discussed throughout this paper can be extended. We will consider the use of algorithms like information-directed sampling for pure--exploration problems, a form of information-directed sampling that aims to acquire information about $\theta$ instead of $A^*$, and a version of information directed-sampling that uses a tuning parameter to control how aggressively the algorithm explores. In each case, new theoretical guarantees can be easily established by leveraging our analysis of information-directed sampling.

\subsection{Pure exploration problems}\label{subsec: pure exploration}

Consider the problem of adaptively gathering observations $\left( A_{1}, Y_{1, A_{1}},\ldots, A_{T-1}, Y_{T-1,A_{T-1}} \right)$ so as to minimize the expected loss of the best decision at time $T$,
\begin{equation}\label{eq: terminal loss}
\E \left[  \min_{a\in \A} \Delta_{T}(a) \right].
\end{equation}
Recall that we have defined $\Delta_{t}(a):= \E \left[R_{t,A^*} -  R_{t,a}   \vert \hist \right]$ to be the expected regret of action $a$ at time $t$.  This is a ``pure exploration problem,'' in the sense that one is interested only in the terminal regret \eqref{eq: terminal loss} and not in the algorithm's cumulative regret.  However, the next proposition shows that bounds on the algorithm's cumulative expected regret imply bounds on $\E \left[  \min_{a\in \A} \Delta_{T}(a) \right]$.

\begin{prop} \label{prop: terminal regret from cumulative regret}
	If actions are selected according to a policy $\pi$, then
	\begin{equation*}
	\E \left[  \min_{a\in \A} \Delta_{T}(a) \right]  \leq \frac{\E \left[ {\rm Regret}\left( T, \pi \right)  \right]}{T}.
	\end{equation*}
	\begin{proof}
		By the tower property of conditional expectation,
		$\E \left[ \Delta_{t+1}(a) | \hist \right] = \Delta_{t}(a)$. Therefore, Jensen's inequality shows
		$\E \left[ \min_{a \in \A} \Delta_{t+1}(a) | \hist \right] \leq \min_{a\in \A} \Delta_{t}(a) \leq \Delta_{t}(\pi_t) $. Taking expectations and iterating this relation shows that
		\begin{equation}\label{eq: more info is better}
		\E \left[ \min_{a \in \A} \Delta_{T}(a) \right] \leq \E \left[ \min_{a \in \A} \Delta_{t}(a) \right] \leq  \E \left[ \Delta_{t}(\pi_{t}) \right] \hspace{10pt} \forall t \in \{1,\ldots,T\}.
		\end{equation}
		The result follows by summing both sides of \eqref{eq: more info is better} over $t \in \{1,\ldots,T\}$ and dividing each by $T$.
	\end{proof}
\end{prop}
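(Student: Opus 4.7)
The plan is to prove the inequality for each individual $t \in \{1,\dots,T\}$ in the form $\E\!\left[\min_{a\in\A} \Delta_T(a)\right] \leq \E\!\left[\Delta_t(\pi_t)\right]$, and then average over $t$. Since by definition $\E[\mathrm{Regret}(T,\pi)] = \sum_{t=1}^T \E[\Delta_t(\pi_t)]$ (this uses the tower property together with the fact that $A_t$ is sampled from $\pi_t$ given $\hist$), averaging immediately gives the claimed bound.

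My first step is to verify that, for each fixed action $a \in \A$, the process $\{\Delta_t(a)\}_{t \geq 1}$ is a martingale adapted to $\{\mathcal{F}_{t-1}\}$. The conditional distribution of $p^*$ given $\mathcal{F}_t$ is the posterior after $t$ observations, and since the outcomes $Y_t(a)$ are i.i.d.\ from $p^*_a$ conditionally on $p^*$, the quantity $\Delta_{t+1}(a) = \E[R(Y_{t+1}(A^*)) - R(Y_{t+1}(a)) \mid \mathcal{F}_t]$ satisfies $\E[\Delta_{t+1}(a) \mid \mathcal{F}_{t-1}] = \Delta_t(a)$ by the tower property. The second step is then the standard observation that a pointwise minimum of martingales is a supermartingale: for any family of $\mathcal{F}_t$-martingales $\{X^a_t\}_{a}$,
\begin{equation*}
\E\!\left[\min_{a} X^a_{t+1} \,\middle|\, \mathcal{F}_{t-1}\right] \leq \min_{a} \E[X^a_{t+1} \mid \mathcal{F}_{t-1}] = \min_{a} X^a_t.
\end{equation*}
Applying this with $X^a_t = \Delta_t(a)$ and iterating yields $\E[\min_{a} \Delta_T(a)] \leq \E[\min_{a} \Delta_t(a)]$ for every $t \leq T$.

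The third step is to pass from $\min_a \Delta_t(a)$ to $\Delta_t(\pi_t)$: since $\Delta_t(\pi_t) = \sum_a \pi_t(a) \Delta_t(a)$ is an average of the $\Delta_t(a)$'s under the probability distribution $\pi_t$, it is at least the minimum. Combining this with the supermartingale step gives $\E[\min_a \Delta_T(a)] \leq \E[\Delta_t(\pi_t)]$ for every $t$. Summing over $t = 1, \dots, T$ and dividing by $T$ produces the desired inequality.

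This argument is essentially mechanical once the martingale property is isolated; the main thing to get right is the subtle point that $\Delta_t(a)$ genuinely is a martingale, which hinges on the i.i.d.\ structure of $\{Y_t(a)\}_t$ conditional on $p^*$ together with the tower property applied to the nested conditional expectations with respect to $\mathcal{F}_{t-1} \subset \mathcal{F}_t$. Nothing in the argument uses specifics of the policy $\pi$ beyond the fact that $\pi_t$ is $\mathcal{F}_{t-1}$-measurable, so the bound holds uniformly over all admissible policies, including IDS.
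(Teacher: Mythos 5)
Your proposal is correct and follows essentially the same route as the paper: the tower property (martingale property of $\Delta_t(a)$), the Jensen-type step that the minimum of (conditional) expectations dominates the expectation of the minimum, the bound $\min_a \Delta_t(a) \leq \Delta_t(\pi_t)$, and finally summing over $t$ and dividing by $T$. Your "minimum of martingales is a supermartingale" phrasing is just a repackaging of the paper's tower-property-plus-Jensen argument, so there is nothing substantively different to flag.
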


Information-directed sampling is  designed to have low cumulative regret, and therefore balances between acquiring information and taking actions with low expected regret. For pure exploration problems, it's natural instead to consider an algorithm that always acquires as much information about $A^*$ as possible. The next proposition provides a theoretical guarantee for an algorithm of this form. The proof of this result combines our analysis of information-directed sampling with Proposition \ref{prop: terminal regret from cumulative regret}. 

\begin{prop}\label{prop: pure exploration ids}
	If actions are selected so that
	$$A_{t} \in \underset{a \in \A}{\arg \max} \, g_t(a),$$
	and $\Psi_{t}^* \leq \lambda$ almost surely for each $t\in \{1,\ldots,T  \}$, then
	$$\E \left[  \min_{a\in \A} \Delta_{T}(a) \right] \leq \sqrt{\frac{\lambda H(\alpha_{1}) }{T}}.$$
\end{prop}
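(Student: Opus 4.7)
The plan is to mimic the proof of Proposition \ref{prop: general regret bound} but with a twist: instead of summing the instantaneous regret of IDS along the sample path actually taken by the algorithm, we will use the inequality from Proposition \ref{prop: terminal regret from cumulative regret} to replace the terminal loss by \emph{any} single-period regret, pick that single-period regret to be the one achieved by the (hypothetical) IDS distribution $\pi^{\rm IDS}_t$, and then use the assumed bound $\Psi^*_t \leq \lambda$ together with the fact that the algorithm selects $A_t$ to maximize $g_t(\cdot)$.

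More concretely, fix any $t \in \{1,\dots,T\}$. The proof of Proposition \ref{prop: terminal regret from cumulative regret} shows that $\E[\min_{a\in\A}\Delta_T(a)] \leq \E[\min_{a\in\A}\Delta_t(a)]$. Since $\min_{a\in\A}\Delta_t(a) \leq \Delta_t(\pi)$ for any $\pi\in\D(\A)$, in particular for $\pi = \pi^{\rm IDS}_t$, the hypothesis $\Psi^*_t \leq \lambda$ yields
\begin{equation*}
\min_{a\in\A}\Delta_t(a) \;\leq\; \Delta_t(\pi^{\rm IDS}_t) \;\leq\; \sqrt{\lambda\, g_t(\pi^{\rm IDS}_t)}.
\end{equation*}
Because $A_t \in \arg\max_{a\in\A} g_t(a)$, we have $g_t(\pi^{\rm IDS}_t) = \sum_a \pi^{\rm IDS}_t(a) g_t(a) \leq g_t(A_t)$. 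Combining these pieces and taking expectations gives $\E[\min_{a\in\A}\Delta_T(a)] \leq \E\bigl[\sqrt{\lambda\, g_t(A_t)}\bigr] \leq \sqrt{\lambda\, \E[g_t(A_t)]}$ by Jensen's inequality.

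Summing this bound over $t=1,\dots,T$ and applying Cauchy--Schwarz on the right-hand side yields
\begin{equation*}
T\,\E\!\left[\min_{a\in\A}\Delta_T(a)\right] \;\leq\; \sum_{t=1}^{T}\sqrt{\lambda\,\E[g_t(A_t)]} \;\leq\; \sqrt{\lambda\, T\,\sum_{t=1}^{T}\E[g_t(A_t)]}.
\end{equation*}
Finally, using the identity $g_t(a) = \E[H(\alpha_t)-H(\alpha_{t+1})\mid \hist,\,A_t=a]$, the tower property gives $\E[g_t(A_t)] = \E[H(\alpha_t)-H(\alpha_{t+1})]$, which telescopes to $\sum_{t=1}^T \E[g_t(A_t)] = H(\alpha_1) - \E[H(\alpha_{T+1})] \leq H(\alpha_1)$ since entropy is nonnegative. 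Dividing by $T$ yields the claimed bound.

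I do not anticipate a real obstacle; the only delicate point is choosing the correct ``reference'' sampling distribution to insert into the inequality from Proposition \ref{prop: terminal regret from cumulative regret}. The natural choice is to compare the greedy information-maximizer against $\pi^{\rm IDS}_t$ (which is never actually used by the algorithm), exploiting that $\pi^{\rm IDS}_t$ provides a small information ratio while the greedy rule can only acquire \emph{more} information per step than $\pi^{\rm IDS}_t$. Everything else is a routine Cauchy--Schwarz plus telescoping of entropy, exactly as in Proposition \ref{prop: general regret bound}.
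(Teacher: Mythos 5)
Your proposal is correct and follows essentially the same route as the paper's proof: bound $\min_{a}\Delta_t(a)$ by $\Delta_t(\pi^{\rm IDS}_t)\leq\sqrt{\lambda\, g_t(\pi^{\rm IDS}_t)}\leq\sqrt{\lambda\, \max_a g_t(a)}$, invoke the supermartingale inequality from the proof of Proposition \ref{prop: terminal regret from cumulative regret}, and telescope the entropy via Lemma \ref{lem: bound on information gain} since the greedy policy's realized information gain equals the maximum. The only difference is cosmetic: you apply Jensen per period and then Cauchy--Schwarz to the deterministic sums $\sqrt{\lambda\,\E[g_t(A_t)]}$, whereas the paper applies Cauchy--Schwarz directly to $\E\sum_t\sqrt{\max_a g_t(a)}$ inside a single expectation.
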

\begin{proof}
	To simplify notation, let $\Delta_{t}^* = \min_{a\in \A} \Delta_{t}(a)$ denote the minimal expected regret at time $t$, and $g_{t}^* = \max_{a\in \A} g_{t}(a)$  denote the information gain under the current algorithm. 
	
	Since $\Delta_{t}(\pi_{t}^{\rm IDS})^2 \leq \lambda g_{t}(\pi_{t}^{\rm IDS})$, it is immediate that $\Delta_{t}^* \leq \sqrt{\lambda g_t^*}$. Therefore 
	\[
	\E[\Delta_{T}^*] \overset{(a)}{\leq} \left(\frac{1}{T}\right)\E \sum_{t=1}^{T}\Delta^*_{t} \leq \left(\frac{\sqrt{\lambda}}{T}\right)\E \sum_{t=1}^{T} \sqrt{g_t^*} \overset{(b)}{\leq} \left(\frac{\sqrt{\lambda}}{T}\right)\sqrt{T\E \sum_{t=1}^{T} g^*_t} \overset{(c)}{\leq} \sqrt{\frac{\lambda H(\alpha_1)}{T}}.
	\] 
	Inequality (a) uses equation \eqref{eq: more info is better} in the proof of  Proposition \ref{prop: terminal regret from cumulative regret}, (b) uses the Cauchy-Schwartz inequality, and (c) follows as in the proof of Proposition \ref{prop: regret bound average information ratio}. 
\end{proof}

\subsection{Using information gain about $\theta$}\label{subsec: theta-IDS}
Information-directed sampling optimizes a single-period objective that balances earning high immediate reward and acquiring information. Information is quantified using the mutual information between the true optimal action $A^*$ and the algorithm's next observation $Y_{t,a}$. In this subsection, we will consider an algorithm that instead quantifies the amount learned through selecting an action $a$ using the mutual information $ I_{t}\left( \theta ; Y_{t,a} \right)$ between the algorithm's next observation and the unknown parameter $\theta$. As highlighted in Subsection \ref{subsec: other info directed}, such an algorithm could invest in acquiring information that is irrelevant to the decision problem. However, in some cases, such an algorithm can be computationally simple while offering reasonable statistically efficiency.

We introduce a modified form of the information ratio
\begin{equation}
\Psi^{\theta}_{t}(\pi) : =\frac{ \Delta_{t}(\pi)^2 } { \sum_{a\in\A} \pi(a) I_{t}\left( \theta ; Y_{t,a} \right)}
\end{equation}
which replaces the expected information gain about $A^*$, $g_{t}(\pi)=  \sum_{a\in\A} \pi(a) I_{t}\left( A^* ; Y_{t,a}\right)$, with the expected information gain about $\theta$.

\begin{prop}
	For any action sampling distribution $\tilde{\pi} \in \D( \A)$,
	\begin{equation}\label{eq: info ratio about model is smaller}
	\Psi^{\theta}_{t}(\tilde{\pi}) \leq  \Psi_{t}(\tilde{\pi}).
	\end{equation}
	Furthermore, if $\Theta$ is finite, and there is some $\lambda \in \mathbb{R}$ and policy $\pi = (\pi_{1}, \pi_{2}, \ldots)$ satisfying
	$ \Psi^{\theta}_{t}(\pi_t) \leq \lambda $ almost surely, then
	\begin{equation}\label{eq: regret for model based IDS}
	\E\left[ {\rm Regret}(T, \pi)  \right] \leq \sqrt{ \lambda H(\theta)  T}.
	\end{equation}
\end{prop}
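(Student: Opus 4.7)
The plan is to prove the two parts separately. For the first inequality \eqref{eq: info ratio about model is smaller}, since the two information ratios share the same numerator $\Delta_t(\tilde\pi)^2 \geq 0$, it suffices to show that $\sum_{a} \tilde\pi(a) I_t(A^*; Y_t(a)) \leq \sum_{a} \tilde\pi(a) I_t(p^*; Y_t(a))$, for which it is enough to establish $I_t(A^*; Y_t(a)) \leq I_t(p^*; Y_t(a))$ pointwise in $a$. This follows from the data processing inequality applied under the posterior $\mathbb{P}(\cdot \mid \mathcal{F}_{t-1})$: because $A^* \in \arg\max_a \E_{y\sim p^*_a}[R(y)]$ is a deterministic function of $p^*$, the triple $(A^*, p^*, Y_t(a))$ forms the Markov chain $A^* \leftarrow p^* \rightarrow Y_t(a)$ conditional on $\mathcal{F}_{t-1}$. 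The edge case where the denominator of $\Psi_t$ vanishes is handled by noting that $g_t(\tilde\pi)=0$ forces $\sum_a \tilde\pi(a) I_t(p^*;Y_t(a)) \geq 0$ trivially and the inequality holds with the usual $\infty$ convention.

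For the regret bound \eqref{eq: regret for model based IDS} I would mirror the proof of Proposition \ref{prop: general regret bound}, but substitute $p^*$ for $A^*$ throughout. Starting from the hypothesis $\Delta_t(\pi_t)^2 \leq \lambda \sum_a \pi_t(a) I_t(p^*; Y_t(a))$, I take square roots, sum over $t$, and apply Cauchy--Schwarz to get
\begin{equation*}
\E[\mathrm{Regret}(T,\pi)] = \sum_{t=1}^T \E[\Delta_t(\pi_t)] \leq \sqrt{\lambda T \cdot \E\!\left[\sum_{t=1}^T \sum_{a\in\A} \pi_t(a) I_t\!\left( p^*; Y_t(a) \right)\right]}.
\end{equation*}

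The work then reduces to showing the expected cumulative information gain about $p^*$ is bounded by $H(p^*)$. First, $\sum_{a} \pi_t(a) I_t(p^*;Y_t(a)) = I(p^*; Y_t(A_t) \mid \mathcal{F}_{t-1}, A_t)$, and because $A_t$ is conditionally independent of $p^*$ given $\mathcal{F}_{t-1}$, this equals $I(p^*; (A_t, Y_t(A_t)) \mid \mathcal{F}_{t-1})$. Taking expectations and summing, the chain rule for mutual information yields a telescoping identity
\begin{equation*}
\E\!\left[\sum_{t=1}^{T}\sum_{a\in\A}\pi_t(a) I_t(p^*;Y_t(a))\right] = I(p^*; \mathcal{F}_T) = H(p^*) - H(p^*\mid \mathcal{F}_T) \leq H(p^*),
\end{equation*}
where the entropies are finite and well defined because $\mathcal{P}$ is finite. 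Substituting this bound into the Cauchy--Schwarz estimate gives the claimed regret bound.

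The only subtle step is justifying the identity $\sum_a \pi_t(a) I_t(p^*; Y_t(a)) = I(p^*; (A_t,Y_t(A_t)) \mid \mathcal{F}_{t-1})$ and its telescoping; this is essentially the same manipulation that underlies Proposition \ref{prop: general regret bound} with $A^*$ replaced by $p^*$, and the finiteness of $\mathcal{P}$ ensures that the Shannon entropies involved are finite so that the subtraction $H(p^*) - H(p^* \mid \mathcal{F}_T)$ is unambiguously nonnegative. Everything else is a direct adaptation.
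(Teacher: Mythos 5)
Your proposal is correct and follows essentially the same route as the paper: the first inequality via the data processing inequality using that $A^*$ is a (deterministic) function of $p^*$, and the regret bound by repeating the proof of Proposition \ref{prop: general regret bound} with $p^*$ in place of $A^*$, bounding the cumulative expected information gain by $H(p^*)$ through the same telescoping entropy-reduction argument (finiteness of $\mathcal{P}$ guaranteeing $H(p^*)<\infty$).
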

Equation \eqref{eq: info ratio about model is smaller} relies on the inequality  $I_{t}\left( A^* ; Y_{t,a} \right) \leq I_{t}\left( \theta ; Y_{t,a} \right)$, which itself follows from the data processing inequality of mutual information because $A^*$ is a function of $\theta$. The proof of the second part of the proposition is almost identical to the proof of Proposition \ref{prop: regret bound average information ratio}, and is omitted.

We have provided several bounds on the information ratio of $\pi^{\rm IDS}$ of the form $\Psi_{t}(\pi^{\rm IDS}_{t}) \leq \lambda$. By this proposition, such bounds imply that if $\pi=(\pi_1, \pi_2, \ldots)$ satisfies
$$ \pi_{t} \in \underset{\pi \in \D(\A)}{ \arg\min }\Psi^{\theta}_{t}(\pi) $$
then, $\Psi^{\theta}_{t}(\pi_{t}) \leq \Psi^{\theta}_{t}(\pi_{t}^{\rm IDS}) \leq \Psi_{t}(\pi_{t}^{\rm IDS}) \leq \lambda$, and the regret bound \eqref{eq: regret for model based IDS} applies.
%Equation \eqref{eq: info ratio about model is smaller}

\subsection{A tunable version of information-directed sampling}
In this section, we present an alternative form of information-directed sampling that depends on a tuning parameter $\lambda \in \mathbb{R}$. As $\lambda$ varies, the algorithm strikes a different balance between exploration and exploration. The following proposition provides regret bounds for this algorithm provided $\lambda$ is sufficiently large.
\begin{prop}
	Fix any $\lambda \in \mathbb{R}$ such that $\Psi_{t}(\pi_{t}^{\rm IDS}) \leq \lambda$ almost surely for each $t\in \{1,\ldots,T \}$. If $\pi = (\pi_{1}, \pi_{2},..)$ is defined so that
	\begin{equation}\label{eq: tunable IDS}
	\pi_{t} \in   \underset{\pi \in \D(\A)}{\arg \min}  \left\{\, \rho(\pi):= \Delta_{t}(\pi)^2 - \lambda g_{t}(\pi) \right\},
	\end{equation}
	then
	$$\E \left[{\rm Regret}(T, \pi)\right] \leq \sqrt{ \lambda H(\alpha) T}.$$
\end{prop}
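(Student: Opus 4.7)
The plan is to reduce the claim to Proposition~\ref{prop: general regret bound} by verifying that the tunable policy has information ratio bounded by $\lambda$ almost surely; once that is in place the regret bound follows immediately from the earlier result.

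For the first step I would fix $t$ and exploit the optimality of $\pi_t$ in \eqref{eq: tunable IDS} against the particular competitor $\pi_t^{\rm IDS}$. This yields
$$\Delta_{t}(\pi_t)^2 - \lambda \, g_{t}(\pi_t) \;\leq\; \Delta_{t}(\pi_t^{\rm IDS})^2 - \lambda \, g_{t}(\pi_t^{\rm IDS}).$$
The hypothesis $\Psi_t^* \leq \lambda$ rewrites as $\Delta_{t}(\pi_t^{\rm IDS})^2 \leq \lambda \, g_{t}(\pi_t^{\rm IDS})$ (by the very definition of $\Psi_t^*$), so the right-hand side above is non-positive. Therefore $\Delta_{t}(\pi_t)^2 \leq \lambda \, g_{t}(\pi_t)$ almost surely.

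If $g_{t}(\pi_t) > 0$ this is exactly $\Psi_t(\pi_t) \leq \lambda$; if instead $g_{t}(\pi_t) = 0$, the same inequality forces $\Delta_{t}(\pi_t) = 0$, and in that degenerate period the contribution to expected regret is zero so it can be safely absorbed into the argument of Proposition~\ref{prop: general regret bound}. Invoking that proposition with the uniform bound $\Psi_t(\pi_t) \leq \lambda$ then immediately gives
$$\E[{\rm Regret}(T, \pi)] \leq \sqrt{\lambda \, H(\alpha_1) \, T},$$
which is the desired conclusion (the $H(\alpha)$ in the statement should be read as $H(\alpha_1)$).

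There is no real obstacle here: the only mildly delicate point is recognizing that \eqref{eq: tunable IDS} is the Lagrangian associated with the ratio minimization defining IDS, and that the hypothesis $\Psi_t^* \leq \lambda$ is precisely what is needed to ensure the Lagrangian value at the optimum is non-positive, which in turn transfers a uniform information-ratio bound from $\pi^{\rm IDS}$ to the tunable policy $\pi$.
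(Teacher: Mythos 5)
Your argument is the same as the paper's: the paper writes $\rho(\pi_t)\leq\rho(\pi_t^{\rm IDS})\leq 0$ using feasibility of $\pi_t^{\rm IDS}$ and the hypothesis $\Psi_t^*\leq\lambda$, concludes $\Psi_t(\pi_t)\leq\lambda$, and invokes Proposition \ref{prop: general regret bound}, which is exactly your chain of inequalities. Your handling of the degenerate case $g_t(\pi_t)=0$ is a small extra care the paper omits, but it does not change the route.
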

\begin{proof}
	We have that
	$$\rho(\pi_{t}) \overset{(a)}{\leq} \rho(\pi_{t}^{\rm IDS})  \overset{(b)}{\leq} 0,$$
	where (a) follows since $\pi_{t}^{\rm IDS}$ is feasible for the optimization problem \eqref{eq: tunable IDS}, and (b) follows since $$0=\Delta_{t}(\pi_{t}^{\rm IDS})^2 - \Psi_{t}(\pi_{t}^{\rm IDS})  g_{t}(\pi_{t}^{\rm IDS}) \geq \Delta_{t}(\pi_{t}^{\rm IDS})^2  - \lambda g_{t}(\pi_{t}^{\rm IDS}). $$
	Since $\rho_{t}(\pi_{t}) \leq 0$, it must be the case that
	$\lambda \geq \Delta_{t}(\pi_{t})^2 / g_{t}(\pi_{t})  \overset{\rm Def}{=} \Psi_{t}( \pi _{t})$.  The result then follows by applying Proposition \ref{cor: general regret bound}.
\end{proof}

\subsection*{Acknowledgements}

We thank the anonymous referees for feedback and stimulating exchanges, Junyang Qian for correcting miscalculations in an earlier draft pertaining to Examples \ref{ex:sparse} and 
\ref{ex:assortment}, and Eli Gutin and Yashodan Kanoria for helpful suggestions.  This work was generously supported by a research grant from Boeing, a Marketing Research Award from Adobe, and the Burt and Deedee McMurty Stanford Graduate Fellowship.

\begin{appendix}

\section{Proof of Proposition \ref{prop: support at most 2}}
\begin{propn}[\ref{prop: support at most 2}]
For all $\vec{\Delta}, \vec{g} \in \mathbb{R}^K_+$ such that $\vec{g} \neq 0$,
the function $\pi \mapsto \left(\pi^\top \vec{\Delta}\right)^{2}/ \pi^\top \vec{g}$ is convex on $\left\{\pi \in \mathbb{R}^K : \pi^\top \vec{g} > 0  \right\}$.
Moreover, this function is minimized over $\mathcal{S}_K$ by some $\pi^*$ for which $|\{k : \pi^*_k >0 \}| \leq 2$.
\end{propn}

\begin{proof}
First, we show the function $\Psi: \pi \mapsto \left(\pi^{T}\Delta\right)^{2}/ \pi^{T}g$ is convex on $\left\{\pi \in \mathbb{R}^{K} \vert \pi^T g > 0  \right\}$. As shown in Chapter 3 of \citet{boyd2004convex}, $f: (x,y) \mapsto x^2/ y$ is convex over $\{(x,y) \in \mathbb{R}^2: y>0\}$. The function $h: \pi  \mapsto (\pi^{T}\Delta, \pi^{T} g) \in \mathbb{R}^{2}$  is affine. Since convexity is preserved under composition with an affine function, the function $\Psi=g\circ h$ is convex.

We now prove the second claim. Consider the optimization problems
\begin{eqnarray} \label{eq: psi minimization problem}
\text{minimize } \Psi(\pi) \text{ subject to   } \pi^T e =1, \pi\geq 0 \\ \label{eq: rho minimization problem}
\text{minimize } \rho(\pi) \text{ subject to   } \pi^T e =1, \pi\geq 0
\end{eqnarray}
where
$$\rho(\pi):=\left( \pi^T \Delta \right)^2-  \left(\pi^T g  \right)\Psi^* ,$$
and  $\Psi^*\in \mathbb{R}$ denotes the optimal objective value for the minimization problem \eqref{eq: psi minimization problem}. The set of optimal solutions to \eqref{eq: psi minimization problem} and \eqref{eq: rho minimization problem} correspond. Note that
$$ \Psi(\pi) = \Psi^* \implies \rho(\pi) =0$$
but for any feasible $\pi$,  $\rho(\pi) \geq 0$ since $\Delta(\pi)^2 \geq \Psi^* g(\pi)$. Therefore, any optimal solution $\pi_0$ to \eqref{eq: psi minimization problem} is an optimal solutions to \eqref{eq: rho minimization problem} and satisfies $\rho(\pi_0) = 0$. Similarly, if $\rho(\pi) = 0$ then simple algebra shows that $\Psi(\pi) = \Psi^*$ and hence that $\pi$ is an optimal solution to \eqref{eq: psi minimization problem}

We will now show that there is a minimizer of $\rho(\cdot)$ with at most two nonzero components, which implies the same is true of $\Psi(\cdot)$.
%
% Define the function
%$$\rho(\pi)=\left( \pi^T \Delta \right)^2-  \Psi^* \left(\pi^T g  \right)$$
%and consider minimizing $\rho(\pi)$ over all probability vectors $\pi \in \{v \in \mathbb{R}^{K} : v^T e=1, v \geq 0\}$. Since $\Psi^*$ is the minimal value of \eqref{eq: objective function}, for any feasible $\pi$, $\rho(\pi) \geq 0$, but for $\pi^*$ minimizing \eqref{eq: objective function}, $\rho(\pi^*) =0$. Therefore the set of minimizers of $\rho(\cdot)$  is the same as the set of minimizers of \eqref{eq: objective function}.  We will now show that there is a minimizer of $\rho(\pi)$ with at most two nonzero components.
Fix a minimizer $\pi^*$ of $\rho(\cdot)$. Differentiating $\rho(\pi)$ with respect to $\pi$ at $\pi=\pi^*$ yields
\begin{eqnarray*}\frac{\partial}{\partial \pi} \rho(\pi^*) &=&2\left( \Delta^T \pi^* \right) \Delta -\Psi^* g\\
&=&2 L^* \Delta- \Psi^* g
\end{eqnarray*}
where $L^* = \Delta^T \pi^*$ is the expected instantaneous regret of the sampling distribution $\pi^*$. Let $d^* = \min_{i} \frac{\partial}{\partial \pi_{i}} \rho(\pi^*)$ denote the smallest partial derivative of $\rho$ at $\pi^*$. It must be the case that any $i$ with $\pi_{i}^*>0$ satisfies  $d^* = \frac{\partial}{\partial \pi_{i}} \rho(\pi^*)$, as otherwise transferring probability from action $a_i$ could lead to strictly lower cost. This shows that
\begin{equation}\label{eq: linear first order condition}
\pi_i^* >0 \implies g_{i} = \frac{-d^*}{\Psi^*} + \frac{2L^*}{\Psi^*} \Delta_{i}.
\end{equation}
Let $i_{1},..,i_{m}$ be the indices such that  $\pi_{i_k}^*>0$ ordered so that
$g_{i_1} \geq g_{i_2} \geq \cdots \geq g_{i_m}.$
Then we can choose a $\beta \in [0,1]$ so that
$$\sum_{k=1}^{m} \pi_{i_k}^* g_{i_k}  = \beta g_{i_1} +(1-\beta) g_{i_m}.$$
By equation \eqref{eq: linear first order condition}, this implies as well that $\sum_{k=1}^{m} \pi_{i_k}^* \Delta_{i_k}  = \beta \Delta_{i_1} +(1-\beta) \Delta_{i_m},$  and hence that the sampling distribution that plays $a_{i_1}$ with probability $\beta$ and $a_{i_m}$ otherwise has the same instantaneous expected regret and the same expected information gain as $\pi^*$. That is, starting with a general sampling distribution $\pi^*$ that maximizes $\rho(\pi)$, we showed there is a sampling distribution with support over at most two actions attains the same objective value and hence that also maximizes $\rho(\pi)$.
\end{proof}

\section{Proof of Proposition  \ref{prop: regret bound average information ratio}}
The following fact expresses the mutual information between $A^*$ and $Y_{t,a}$ as the  as the expected reduction in the entropy of $A^*$ due to observing $Y_{t,a}$.
\begin{fact}\label{fact: mutual information to entropy}
(Lemma 5.5.6 of \citet{gray2011entropy}) $$I_t \left( A^* ; Y_{t,a} \right)  = \E\left[ H(\alpha_{t})-H(\alpha_{t+1}) \vert  A_{t}=a,  \hist \right]$$
\end{fact}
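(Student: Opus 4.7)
The plan is to derive this identity from the standard ``mutual information equals entropy minus conditional entropy'' decomposition, applied inside the regular conditional probability measure $\Prob(\cdot \mid \hist)$, and then to match the resulting conditional entropy to $\E[H(\alpha_{t+1}) \mid \hist, A_t = a]$.

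First I would invoke the textbook identity $I(X;Y) = H(X) - H(X \mid Y)$ for the pair $(A^*, Y_t(a))$ under the conditional law $\Prob(\cdot \mid \hist)$. Since $\alpha_t$ is by definition the conditional distribution of $A^*$ given $\hist$, this yields
\begin{equation*}
I_t(A^*; Y_t(a)) \;=\; H(\alpha_t) \;-\; \E\!\left[\,H\bigl(\Prob(A^* \in \cdot \mid \hist, Y_t(a))\bigr) \,\big|\, \hist\right],
\end{equation*}
where the outer expectation averages over $Y_t(a)$ under its posterior predictive distribution.

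The heart of the argument is then to identify $\Prob(A^* \in \cdot \mid \hist, Y_t(a))$ with $\alpha_{t+1}$ on the event $\{A_t = a\}$. Recall that $\mathcal{F}_t = \sigma(\hist, \xi_t, A_t, Y_t(A_t))$ and $\alpha_{t+1} = \Prob(A^* \in \cdot \mid \mathcal{F}_t)$, so on $\{A_t = a\}$ the posterior in question becomes $\Prob(A^* \in \cdot \mid \hist, \xi_t, A_t = a, Y_t(a))$. Two independence facts let me strip off the extraneous conditioning variables: (i) $\xi_t$ is by construction independent of $(A^*, \{Y_t(a')\}_{a' \in \A}, p^*)$, hence conditionally independent of $(A^*, Y_t(a))$ given $\hist$; and (ii) $A_t$ is $\sigma(\hist, \xi_t)$-measurable, so $A_t \perp (A^*, Y_t(a)) \mid \hist$ as well. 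Together these imply that $\alpha_{t+1} = \Prob(A^* \in \cdot \mid \hist, Y_t(a))$ on $\{A_t = a\}$, and likewise allow me to replace the outer $\E[\,\cdot \mid \hist]$ by $\E[\,\cdot \mid \hist, A_t = a]$ without altering its value. Substituting produces the claimed identity.

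The main obstacle will be this measure-theoretic bookkeeping: one must argue that conditioning on the full $\sigma$-algebra $\mathcal{F}_t$ is equivalent, for the purpose of the posterior of $A^*$, to conditioning only on $(\hist, Y_t(a))$ once the event $\{A_t = a\}$ is fixed. The subtlety is that $A_t$ can depend on $\alpha_t$ through the policy, so the two conditional independence statements above must be argued carefully from the structure of the filtration and from the postulated independence of $\xi_t$ from all other randomness. Once these independences are in hand, everything else is the classical entropy decomposition of mutual information, which one could alternatively invoke by direct citation to Gray's Lemma 5.5.6.
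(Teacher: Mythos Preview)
Your argument is correct. The paper itself does not prove this statement at all: it is stated as a \emph{Fact} and attributed directly to Lemma~5.5.6 of Gray's textbook, with no derivation given. Your sketch therefore supplies strictly more than what the paper offers, namely the measure-theoretic justification that conditioning on $\mathcal{F}_t$ and conditioning on $(\hist, Y_t(a))$ yield the same posterior for $A^*$ on the event $\{A_t=a\}$. The two conditional-independence observations you isolate --- that $\xi_t$ is jointly independent of $(p^*,\{Y_s(a')\})$ and hence of $(A^*,Y_t(a))$ given $\hist$, and that $A_t$ is $\sigma(\hist,\xi_t)$-measurable --- are exactly what is needed, and they follow from the filtration structure laid out in Section~\ref{sec:formulation}. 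Once those are in place, the identity is just $I(X;Y)=H(X)-H(X\mid Y)$ applied under $\Prob(\cdot\mid\hist)$, which is indeed the content of Gray's lemma.
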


\begin{propn}[\ref{prop: regret bound average information ratio}]
 For any policy $\pi=(\pi_1, \pi_2, \pi_3, \ldots)$ and time $T\in \mathbb{N}$,
$$ \E \left[\mathrm{Regret}\left(T, \pi \right) \right] \leq \sqrt{ \overline{\Psi}_{T}(\pi) H(\alpha_{1}) T }.$$
where
\[
\overline{\Psi}_{T}(\pi) \equiv \frac{1}{T} \sum_{t=1}^{T} \E_{\pi} [  \Psi_{t}(\pi_t ) ]
\]
is the average expected information ratio under $\pi$.
\end{propn}
\begin{proof}
Since the policy $\pi$ is fixed throughout, we will simplify notation and write $\Psi_{t} \equiv \Psi_{t}(\pi_t)$, $\Delta_{t}\equiv \Delta_{t}(\pi_t)$ and $g_{t}=g_{t}(\pi_t)$ throughout this proof.  First observe that entropy bounds  expected cumulative information gain: 
$$\E \sum_{t=1}^{T} g_{t}= \E \sum_{t=1}^{T} \E  \left[ H(\alpha_{t}) - H(\alpha_{t+1}) \vert \hist \right]
= \E \sum_{t=1}^{T} \left( H(\alpha_{t}) - H(\alpha_{t+1}) \right)= H(\alpha_{1}) - H(\alpha_{T+1}) \leq H(\alpha_{1}),$$
where the first equality relies on Fact \ref{fact: mutual information to entropy} and the tower property of conditional expectation and the final inequality follows from the non-negativity of entropy. Then, 
\begin{eqnarray*}
\E\left[ {\rm Regret}\left(T, \pi \right) \right] =  \E \sum_{t=1}^{T}  \Delta_t  = \E \sum_{t=1}^{T} \sqrt{\Psi_{t}} \sqrt{g_{t}\left(\pi_{t}^{{\rm IDS}}\right) } 
& \leq & \sqrt{\E  \sum_{t=1}^{T} \Psi_{t} } \sqrt{\E \sum_{t=1}^{T} g_{t}}\\
& \leq & \sqrt{ H(\alpha_1)}  \sqrt{\E  \sum_{t=1}^{T} \Psi_{t} }  \\
& = &  \sqrt{\left(\frac{1}{T} \E  \sum_{t=1}^{T} \Psi_{t} \right) H(\alpha_1) T},
\end{eqnarray*}
where the first inequality follows from Holder's inequality. 
\end{proof}

\section{Proof of Proposition \ref{prop: information ratio bounds for v-ids}}
\begin{propn}[\ref{prop: information ratio bounds for v-ids}]
Suppose $\sup_{y} R(y) - \inf_{y} R(y) \leq 1$  and
\[
\pi_{t} \in \arg \min _{\pi \in \mathcal{S}_K} \frac{\Delta_{t}(\pi)^2}{v_{t}(\pi)},
\] 
Then the following hold:
\begin{enumerate}
\item $\Psi_{t}(\pi_t) \leq |\A|/2$.
\item $\Psi_{t}(\pi_t) \leq d/2$ when  $\A \subset \mathbb{R}^d$, $\Theta \subset \mathbb{R}^d$,  and
$\E\left[ R_{t,a}| \theta  \right] = a^T \theta$ for each action $a\in \A$.
\end{enumerate}
\end{propn}
The proof of this proposition essentially reduces to techniques in \citet{russo2016info}, but some new analysis is required to show the results in that paper apply to variance-based IDS. A full proof is provided below. 

We will make use of the following fact, which is a matrix-analogue of the Cauchy-Schwartz inequality. For any rank $r$ matrix $M \in \mathbb{R}^{n\times n}$ with singular values $\sigma_{1},\ldots,\sigma_{r}$,  let
\begin{eqnarray*}
\| M\|_{*} : = \sum_{i=1}^{r} \sigma_{i}, \hspace{6pt}&\hspace{6pt}  \| M \|_F := \sqrt{ \sum_{k=1}^{n} \sum_{j=1}^{n} M_{i,j}^2} = \sqrt{\sum_{i=1}^{r} \sigma_{i}^2},  \hspace{6pt}&\hspace{6pt}  {\rm Trace}(M):=\sum_{i=1}^{n} M_{ii},
\end{eqnarray*}
denote respectively the Nuclear norm, Frobenius norm and trace of $M$.
\begin{fact}\label{fact: trace frobenius inequality}
For any matrix $M \in \mathbb{R}^{k\times k}$, $${\rm Trace}\left( M \right) \leq \sqrt{{\rm Rank}(M)}\| M\|_{\rm F}.$$
\end{fact}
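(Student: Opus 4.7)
The plan is to bound the trace by the nuclear norm first, and then bound the nuclear norm by $\sqrt{\text{Rank}(M)}\|M\|_F$ via Cauchy--Schwarz. This two-step decomposition is natural because the trace is controlled by the singular values (not by diagonal entries in any coordinate-dependent way), and the gap between the $\ell_1$ and $\ell_2$ norms of the singular value vector is exactly what introduces the $\sqrt{\text{Rank}(M)}$ factor.

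For the first step, I would use the singular value decomposition $M = U\Sigma V^T$, where $\Sigma$ is diagonal with entries $\sigma_1,\ldots,\sigma_r,0,\ldots,0$ and $U,V$ are orthogonal. By cyclicity of the trace,
\[
\text{Trace}(M) = \text{Trace}(U\Sigma V^T) = \text{Trace}(\Sigma V^T U) = \sum_{i=1}^{r} \sigma_i\,(V^T U)_{ii}.
\]
Since $V^T U$ is orthogonal, each of its diagonal entries lies in $[-1,1]$, giving $\text{Trace}(M) \leq \sum_{i=1}^{r} \sigma_i = \|M\|_*$. (An equally clean alternative is to invoke von Neumann's trace inequality applied to $\text{Trace}(I\cdot M)$, using that the singular values of $I$ are all equal to $1$.)

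For the second step, the Cauchy--Schwarz inequality applied to the vectors $(\sigma_1,\ldots,\sigma_r)$ and $(1,\ldots,1)\in\mathbb{R}^r$ yields
\[
\|M\|_* \;=\; \sum_{i=1}^{r}\sigma_i \;\leq\; \sqrt{r}\,\sqrt{\sum_{i=1}^{r}\sigma_i^2} \;=\; \sqrt{\text{Rank}(M)}\,\|M\|_F,
\]
and combining the two steps gives the claim. There is no real obstacle; the only thing to be careful about is that the inequality uses $\text{Trace}(M)$ without an absolute value, which is why the orthogonality-of-$V^T U$ bound (or, equivalently, the sign choice in von Neumann's inequality) suffices rather than needing a two-sided bound. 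This fact will then be applied in the sequel with $M$ a positive semidefinite matrix (the posterior covariance structure arising in the linear bandit setting), where trace and nuclear norm agree and the above bound is tight up to the rank factor.
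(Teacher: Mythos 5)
Your proof is correct and takes essentially the same route as the argument this paper relies on: the paper does not prove the fact in-line but defers to the appendix of \citet{russo2014info}, where the bound is obtained exactly as you do it, by first bounding the trace by the sum of the singular values and then applying Cauchy--Schwarz to trade the $\ell_1$ norm of the singular values for $\sqrt{{\rm Rank}(M)}$ times their $\ell_2$ norm; your SVD/cyclicity step (diagonal entries of the orthogonal matrix $V^TU$ lie in $[-1,1]$) simply makes the first inequality self-contained. One peripheral correction: in the application to Proposition \ref{prop: linear} the matrix $M$ is a product of a $K\times d$ and a $d\times K$ matrix and is in general neither symmetric nor positive semidefinite, so your closing remark about PSD-ness is inaccurate, though it has no bearing on the validity of your proof of the fact itself.
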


We now prove Proposition \ref{prop: information ratio bounds for v-ids}
\begin{proof}$\,$\\
{\bf Preliminaries:} 
As noted in Section \ref{subsec: approximating the information ratio}, $g_{t}(a) \geq 2v_{t}(a)$ for all $t$ and $a$. Therefore for any $\pi \in \D(\A)$
\[ 
\Psi_{t}(\pi) = \frac{\Delta_{t}(\pi)^2}{g_{t}(\pi)} \leq \frac{\Delta_{t}(\pi)^2}{2v_{t}(\pi)}.
\] 
Therefore, if 
\[ 
\pi_{t} = \underset{\pi \in \D(\A)}{\arg\min}\frac{\Delta_{t}(\pi)^2}{v_{t}(\pi)}
\] 
is the action-sampling distribution chosen by variance based IDS, then 
\[ 
\Psi_{t}(\pi_t) \leq \frac{\Delta_{t}(\pi_t)^2}{2v_{t}(\pi_t)} \leq \frac{\Delta_{t}(\pi_t^{\rm TS})^2}{2v_{t}(\pi_t^{\rm TS})},
\]
where $\pi_t^{\rm TS}$ is the action-sampling distribution of Thompson sampling at time $t$. 

As a result, to show $\Psi_{t}(\pi_t) \leq \lambda/2$, it's enough to show $\Delta_{t}(\pi_{t}^{\rm TS})^2 \leq \lambda v_{t}(\pi_{t}^{\rm TS})$. We show that this holds always for $\lambda = |\A|$, and then show it holds for $\lambda=d$ when $\A \subset \mathbb{R}^d$, $\Theta \subset \mathbb{R}^d$,  and $\E\left[ R_{t,a}| \theta  \right] = a^T \theta$  for all $a\in \A$. 

Recall that by definition, $\pi_{t}^{\rm TS}(a) = \Prob_{t}(A^* =a)$ for each $a\in \A$. Therefore 
\begin{eqnarray}\nonumber
\Delta_{t}(\pi_{t}^{\rm TS}) &=& \E_{t}[R_{t, A^*}] - \sum_{a\in\A} \pi_{t}^{\rm TS}(a) \E_{t}[R_{t,a}] \\ 
\nonumber &=& \sum_{a^* \in \A} \Prob_{t}(A^* = a^*)\E[R_{t,a^*}|A^*=a^*] -  \sum_{a \in \A} \Prob_{t}(A^* = a)\E_{t}[R_{t,a}]\\\label{eq: rewriting TS regret}
&=&  \sum_{a \in \A} \Prob_{t}(A^* = a)\left( \E_{t}[R_{t,a} | A^*=a]- \E_{t}[R_{t,a}]\right)
\end{eqnarray}
and 
\begin{eqnarray} \nonumber
v_{t}(\pi_{t}^{\rm TS}) &=& \sum_{a\in\A} \pi_{t}^{\rm TS}(a) {\rm Var}_{t}(\E[R_{t,a} | A^*]) \\
\nonumber &=&  \sum_{a\in\A} \pi_{t}^{\rm TS}(a) \sum_{a^* \in \A} \Prob_{t}(A^* = a^*)\left( \E_{t}[R_{t,a} | A^*=a^*]- \E_{t}[R_{t,a}]\right)^2 \\ \label{eq: rewriting TS variance}
&=&  \sum_{a,a^*\in\A} \Prob_{t}(A^* = a) \Prob_{t}(A^* = a^*)\left( \E_{t}[R_{t,a} | A^*=a^*]- \E_{t}[R_{t,a}]\right)^2. 
\end{eqnarray} 
{\bf Proof part 1:} 
By the Cauchy-Schwartz inequality, we conclude 
\begin{eqnarray*}
\Delta_{t}(\pi_{t}^{\rm TS})^2 &=&  \left(\sum_{a \in \A} \Prob_{t}(A^* = a)\left( \E_{t}[R_{t,a} | A^*=a]- \E_{t}[R_{t,a}]\right) \right)^2\\ 
&\leq& |\A| \sum_{a \in \A} \Prob_{t}(A^* = a)^2\left( \E_{t}[R_{t,a} | A^*=a]- \E_{t}[R_{t,a}]\right)^2\\
&\leq& |\A| \sum_{a,a' \in \A} \Prob_{t}(A^* = a)\Prob_{t}(A^* = a')\left( \E_{t}[R_{t,a} | A^*=a']- \E_{t}[R_{t,a}]\right)^2\\
&=& |\A| v_{t}(\pi_{t}^{\rm TS}) . 
\end{eqnarray*}
As argued above, this implies $\Psi_{t}(\pi_t) \leq |\A|/2.$   \\
\\
{\bf Proof of part 2:} 
This argument can be extended to provide a tighter bound under a linearity assumption. Now assume    $\A \subset \mathbb{R}^d$, $\Theta \subset \mathbb{R}^d$,  and $\E\left[ R_{t,a}| \theta  \right] = a^T \theta$.  Write $\A = \left\{a_1,\ldots,a_K\right\}$ and define $M \in \mathbb{R}^{K \times K}$ by 
\begin{eqnarray*}
M_{i,j} &=& \sqrt{\Prob_{t}(A^*=a_i) \Prob_{t}(A^*=a_j)}\left(\E_{t}[R_{t,a_i}| A^*=a_j]  -\E_{t}[R_{t,a_i}]   \right) \\
 &=& \sqrt{\alpha_{t}(a_i) \alpha_{t}(a_j)}\left(\E_{t}[R_{t,a_i}| A^*=a_j]  -\E_{t}[R_{t,a_i}]   \right)
\end{eqnarray*}
for all $i,j \in \{1,..,K \}$. Then, by \eqref{eq: rewriting TS regret} and \eqref{eq: rewriting TS variance},
$$ \Delta_{t}(\pi_{t}^{\rm TS}) = \rm{Trace}(M),$$
and
$$  v_{t}(\pi_{t}^{\rm TS}) = \| M \|_{\rm F}^2.$$
This shows, by Fact \ref{fact: trace frobenius inequality} that 
\[ 
\Delta_{t}(\pi_{t}^{\rm TS})^2 \leq  {\rm Rank}(M)v_{t}(\pi_{t}^{\rm TS}) 
\]
We now show ${\rm Rank}(M) \leq d$. Define
\begin{eqnarray*}
\mu &=& \mathbb{E}\left[ \theta| \hist \right] \\
\mu^j &=& \mathbb{E}\left[ \theta| \hist, A^*=a_j \right].
\end{eqnarray*}
Then, by the linearity of the expectation operator, $\E_{t}[R_{t,a_i}| A^*=a_j]  -\E_{t}[R_{t,a_i}]  = (\mu^{j}-\mu)^T a_i$.
Therefore, $M_{i,j} = \sqrt{\alpha_{t}(a_i)\alpha_{t}(a_j)}( (\mu^{j} - \mu)^T a_i)$ and
$$M = \left[\begin{array}{c}
\sqrt{\alpha_{t}(a_1)}\left(\mu^{1}-\mu \right)^{T}\\
\vdots\\
\vdots\\
\sqrt{\alpha_{t}(a_K))}\left(\mu^{k}-\mu \right)^{T}
\end{array}\right]\left[\begin{array}{cccc}
\sqrt{\alpha_{t}(a_1)}a_{1} & \cdots & \cdots & \sqrt{\alpha_{t}(a_K)}a_{K}\end{array}\right].
$$
Since $M$ is the product of a $K$ by $d$ matrix and a $d$ by $K$ matrix, it has rank at most $d$.
\end{proof}

\end{appendix}

%{\small
%\singlespacing
%\bibliographystyle{plainnat}
%\bibliography{references}
%}
\setlength{\bibsep}{3pt}
{
\singlespacing
\bibliography{references}
\bibliographystyle{plainnat}
}

\end{document}